\documentclass{article}

% if you need to pass options to natbib, use, e.g.:
\PassOptionsToPackage{numbers, compress}{natbib}
% before loading neurips_2025

% ready for submission
\usepackage[final]{neurips_2025}

% to compile a preprint version, e.g., for submission to arXiv, add add the
% [preprint] option:
%     \usepackage[preprint]{neurips_2025}

% to compile a camera-ready version, add the [final] option, e.g.:
%     \usepackage[final]{neurips_2025}

% to avoid loading the natbib package, add option nonatbib:
%    \usepackage[nonatbib]{neurips_2025}
\usepackage[nosumlimits]{amsmath}
\usepackage{amsthm,amsfonts,amssymb}
\usepackage{graphicx}
\usepackage{subfig}
\newtheorem{theorem}{Theorem}[section]
\newtheorem{lemma}[theorem]{Lemma}
\newtheorem{corol}[theorem]{Corollary}
\newtheorem{defi}[theorem]{Definition}
\allowdisplaybreaks[2]
\usepackage[numbers]{natbib}

\usepackage[utf8]{inputenc} % allow utf-8 input
\usepackage[T1]{fontenc}    % use 8-bit T1 fonts
\usepackage{hyperref}       % hyperlinks
\usepackage{url}            % simple URL typesetting
\usepackage{booktabs}       % professional-quality tables
\usepackage{amsfonts}       % blackboard math symbols
\usepackage{nicefrac}       % compact symbols for 1/2, etc.
\usepackage{microtype}      % microtypography
\usepackage{xcolor}         % colors

\title{A Generalized Bisimulation Metric of State Similarity between Markov Decision Processes: \\From Theoretical Propositions to Applications}

%Computing state similarity between Markov Decision Processes: Generalized Bisimulation Metric and Its Theoretical Properties
% A Generalized Bisimulation Metric of Similarity between Markov Decision Processes: From Theoretical Propositions to Applications

% The \author macro works with any number of authors. There are two commands
% used to separate the names and addresses of multiple authors: \And and \AND.
%
% Using \And between authors leaves it to LaTeX to determine where to break the
% lines. Using \AND forces a line break at that point. So, if LaTeX puts 3 of 4
% authors names on the first line, and the last on the second line, try using
% \AND instead of \And before the third author name.
% \author{
% Zhenyu Tao$^{1,2}$
% \and
% Wei Xu$^{1,2}$\and
% Xiaohu You$^{1,2}$\thanks{Corresponding author}\\
% \affiliations
% Southeast University\\
% Purple Mountain Laboratories\\
% \emails
% \{zhenyu\_tao, wxu, xhyu\}@seu.edu.cn
% }

\author{%
  Zhenyu Tao, Wei Xu, Xiaohu You\\
Southeast University\\
Purple Mountain Laboratories\\
  \texttt{\{zhenyu\_tao, wxu, xhyu\}@seu.edu.cn} \\
  % examples of more authors
  % \And
  % Coauthor \\
  % Affiliation \\
  % Address \\
  % \texttt{email} \\
  % \AND
  % Coauthor \\
  % Affiliation \\
  % Address \\
  % \texttt{email} \\
  % \And
  % Coauthor \\
  % Affiliation \\
  % Address \\
  % \texttt{email} \\
  % \And
  % Coauthor \\
  % Affiliation \\
  % Address \\
  % \texttt{email} \\
}

\begin{document}

\maketitle

%its definition on a single MDP
\begin{abstract}
  The bisimulation metric (BSM) is a powerful tool for computing state similarities within a Markov decision process (MDP), revealing that states closer in BSM have more similar optimal value functions. While BSM has been successfully utilized in reinforcement learning (RL) for tasks like state representation learning and policy exploration, its application to multiple-MDP scenarios, such as policy transfer, remains challenging. Prior work has attempted to generalize BSM to pairs of MDPs, but a lack of rigorous analysis of its mathematical properties has limited further theoretical progress. In this work, we formally establish a generalized bisimulation metric (GBSM) between pairs of MDPs, which is rigorously proven with the three fundamental properties: GBSM symmetry, inter-MDP triangle inequality, and the distance bound on identical state spaces. Leveraging these properties, we theoretically analyse policy transfer, state aggregation, and sampling-based estimation in MDPs, obtaining explicit bounds that are strictly tighter than those derived from the standard BSM. Additionally, GBSM provides a closed-form sample complexity for estimation, improving upon existing asymptotic results based on BSM. Numerical results validate our theoretical findings and demonstrate the effectiveness of GBSM in multi-MDP scenarios.
\end{abstract}

\section{Introduction}
Markov decision processes (MDPs) serve as a foundational framework for modeling decision-making problems in Reinforcement Learning (RL)~\citep{sutton1998reinforcement}. To enable efficient analysis of MDPs,~\citet{10.5555/1036843.1036863} proposed the bisimulation metric (BSM) based on the Wasserstein distance, also known as the Kantorovich--Rubinstein metric, to quantify state similarity in a policy-independent manner. BSM provides theoretical guarantees that states closer under this metric exhibit more similar optimal value functions. Meanwhile, BSM is a pseudometric~\citep{collatz2014functional} satisfying: (1) Symmetry: $d(s,s')=d(s',s)$, (2) Triangle inequality: $d(s,s')\leq d(s,s'')+d(s'',s')$, and (3) Indiscernibility of identicals: $s=s'\Rightarrow d(s,s')=0$. These three properties, combined with BSM’s measuring capability on optimal value functions, have driven its applications across diverse RL applications. It has been successfully employed in state aggregation~\citep{10.5555/3020419.3020441,10.5555/2888116.2888328}, representation learning~\citep{DBLP:conf/iclr/0001MCGL21,10.5555/3666122.3667353}, policy exploration~\citep{10.5555/3398761.3399050,10.5555/3666122.3667805}, goal-conditioned RL~\citep{pmlr-v162-hansen-estruch22a}, safe RL~\citep{10829536}, etc.

However, since BSM is inherently defined over a single MDP, its application to theoretical analyses involving multiple MDPs faces notable obstacles. For instance,~\citet{phillips2006knowledge} applied BSM to policy transfer by constructing a disjoint union of the source and target MDPs’ state spaces. While this allows inter-MDP comparisons through BSM, the disjoint union enforces zero transition probabilities between states across the two MDPs. Consequently, this method fixes the total variation distance between their transition probabilities at one, hindering further simplifications and analysis. It necessitates iterative calculation of distances across the entire state space, leading to prohibitive computational costs in deep RL tasks as noted in~\citep{10.5555/1577069.1755839}. Also, in order to compute state similarities in continuous or large-space discrete MDPs,~\citet{10.1137/10080484X} proposed a state similarity approximation method through state aggregation and sampling-based estimation. Although they proved the convergence of approximated state similarities to actual ones by leveraging properties of BSM and the Wasserstein distance, their approach only derived a fairly loose approximation error bound and failed to obtain an explicit sample complexity (i.e., the lower bound on the number of samples required to achieve the specified level of accuracy) for the estimation error. Specifically, the estimation error bound (see Eq. 7.1 in~\citep{10.1137/10080484X}) depends on the former aggregation process, resulting in an asymptotic sample complexity rather than a closed-form expression. In addition, for representation learning,~\citet{DBLP:conf/iclr/0001MCGL21} and~\citet{10.5555/3540261.3540625} leveraged BSM to establish value function approximation bounds under optimal and non-optimal policies, respectively. However, BSM-based analysis between the original and aggregated MDPs results in loose bounds, particularly with large discount factors.

Several works have attempted to extend the definition of BSM for evaluating similarity between multiple MDPs~\citep{10.5555/1838206.1838401,10.5555/2936924.2936994,10.1007/s10994-022-06242-4}. Notably, when extended to multi-MDP scenarios, this modified version of BSM loses its pseudometric properties, as $s$ and $s'$ in $d(s,s')$ represent states in different MDPs. To the best of our knowledge, prior works have typically extended the single-MDP formulation to two MDPs, without rigorously retaining the metric properties. Specifically,~\citet{10.5555/1838206.1838401} utilized its evaluation capability on optimal value functions to analyze policy transfer. Due to the lack of metric properties, the derived theoretical performance bound is limited to transferring the optimal policy within the source MDP, as it can only reflect the effect of one-step action rather than the long-term impact of the transferred policy (see Theorem 5 in~\citep{10.5555/1838206.1838401}). 
While~\citet{10.5555/2936924.2936994} successfully employed such a modified BSM in assessing MDP similarities and improving the long-term reward in policy transfer, their investigation focused on empirical validation rather than in-depth theoretical analysis. Furthermore, a comprehensive survey on various state similarity measures between MDPs~\citep{10.1007/s10994-022-06242-4}, which highlighted the modified BSM as an effective approach, also emphasized the limited theoretical guarantees in current methodologies. This raises the following two questions:

\textit{Q1. Does the modified BSM possess any metric properties when computing state similarities between multiple MDPs, akin to the pseudometric properties of BSM within a single MDP?}

\textit{Q2. If so, how can these properties facilitate the theoretical analysis involving multiple MDPs?
}

To answer Q1, we present a formal definition for the modified BSM in multi-MDP scenarios, which we refer to as generalized BSM (GBSM), and rigorously establish three metric properties that align with the pseudometric properties of BSM. These properties are summarized as (1) GBSM symmetry, (2) inter-MDP triangle inequality, and (3) the distance bound on identical state spaces. To answer Q2, we apply GBSM in the theoretical analyses of policy transfer, state aggregation, and sampling-based estimation of MDPs, yielding explicit bounds for policy transfer performance, aggregation error, and estimation error, respectively. Notably, when the compared MDPs are identical, the error bound of GBSM reduces to the error bound of BSM for a single MDP. We prove that the GBSM-derived bound is strictly tighter than the bound directly obtained from BSM, along with an explicit and closed-form sample complexity for approximation that advances beyond the asymptotic results of~\citep{10.1137/10080484X}. Numerical results corroborate our theoretical findings.
\section{Background}
Before describing the details of our contributions, we give a brief review of the required background in reinforcement learning and the bisimulation metric.

\textbf{Reinforcement Learning} We consider an MDP $\langle \mathcal{S}, \mathcal{A}, \mathbb{P}, R, \gamma\rangle$ defined by a finite state space $\mathcal{S}$, a finite action space $\mathcal{A}$, transition probability $\mathbb{P}(\tilde{s}|s,a)$ ($a\in\mathcal{A}$, $\{\tilde{s},s\}\in\mathcal{S}$, and $\tilde{s}$ denotes the next state), a reward function $R(s,a)$, and a discount factor $\gamma$. Policies $\pi(\cdot|s)$ are mappings from states to distributions over actions, inducing a value function recursively defined by $V^\pi(s):=\mathbb{E}_{a \sim \pi(\cdot|s)}\left[R(s,a)+\gamma \mathbb{E}_{\tilde{s} \sim \mathbb{P}(\cdot|s,a)}\left[V^\pi(\tilde{s})\right]\right]$. In RL, we are concerned with finding the optimal policy $\pi^*=\arg\max_\pi V^\pi$, which induces the optimal value function denoted by $V^*$.

\textbf{Bisimulation Metric} Different definitions of BSM exist in the literature~\citep{10.5555/1036843.1036863,10.5555/3020419.3020441,10.1137/10080484X}. In this paper, we adopt the formulation from~\citep{10.5555/3020419.3020441}, setting the weighting constant to its maximum value $c=\gamma$. The BSM is then defined as:
\begin{equation}
    d^{\sim}(s, s')=\max_{a} \big\{|R(s,a)-R(s',a)| +\gamma W_1(\mathbb{P}(\cdot|s,a), \mathbb{P}(\cdot|s',a) ;d^{\sim})\big\},\forall s,s'\in \mathcal{S}.
\end{equation}
Here, $W_1$ is the 1-Wasserstein distance, measuring the minimal transportation cost between distributions $\mathbb{P}(\cdot|s,a)$ to $\mathbb{P}(\cdot|s',a)$, with $d^{\sim}$ as the cost function.~\citet{10.5555/1036843.1036863} showed that this metric consistently bounds differences in the optimal value function, i.e., $|V^*(s)-V^*(s')|\leq d^{\sim}(s,s')$. 

\section{Generalized Bisimulation Metric}
We now present a formal definition of the proposed GBSM and derive its key metric properties.
\begin{defi}[\textbf{Generalized bisimulation metric}]
Given two MDPs $\mathcal{M}_1=\langle \mathcal{S}_1, \mathcal{A}, \mathbb{P}_1, R_1, \gamma\rangle$ and $\mathcal{M}_2=\langle \mathcal{S}_2, \mathcal{A}, \mathbb{P}_2, R_2, \gamma\rangle$, the GBSM between any state $s\in\mathcal{S}_1$ and any state $s'\in\mathcal{S}_2$ is defined as:
\begin{equation}\label{eq:defi-GBSM}
    d((s,\mathcal{M}_1), (s',\mathcal{M}_2))=\max_{a} \big\{|R_1(s,a)-R_2(s',a)| +\gamma W_1(\mathbb{P}_1(\cdot|s,a), \mathbb{P}_2(\cdot|s',a) ;d^{1\text{-}2})\big\}.
\end{equation}
\end{defi}
For notational simplicity, we use $d^{1\text{-}2}(s,s')$ to denote $d((s,\mathcal{M}_1), (s',\mathcal{M}_2))$, where the superscript $1\text{-}2$ indicates the direction of GBSM from $\mathcal{M}_1$ to $\mathcal{M}_2$. Before proving the existence of GBSM, we first introduce the Wasserstein distance~\citep{villani2021topics}, which is defined through the following primal linear program (LP):
\begin{equation}\label{LP1}
\begin{aligned}
& W_1(P, Q;d) =\   \min _{\boldsymbol{\lambda}}\sum_{i=1}^{|\mathcal{S}_1|}\sum_{j=1}^{|\mathcal{S}_2|} \lambda_{i,j} d\left(s_i, s_j\right), \\
 \text { subject to } & \sum_{j=1}^{|\mathcal{S}_2|}\lambda_{i,j}=P\left(s_i\right),\ \forall \ i\  ; \sum_{i=1}^{|\mathcal{S}_1|} \lambda_{i,j}=Q\left(s_j\right) ,\ \forall \ j\  ; \lambda_{i,j} \geq 0,\ \forall \ i,j .
\end{aligned} 
\end{equation}
Here, $P$ and $Q$ are distributions on $\mathcal{S}_1$ and $\mathcal{S}_2$, respectively, and $s_i\in\mathcal{S}_1,s_j\in\mathcal{S}_2$. It represents the minimum transportation cost from $P$ to $Q$ under cost function $d:\mathcal{S}_1\times\mathcal{S}_2\rightarrow\mathbb{R}_+$, and is equivalent to the following dual LP according to the Kantorovich duality~\citep{kantorovich1958space}:
\begin{equation}\label{LP2}
    \begin{aligned}
    W_1(P, Q;d) =\  & \max_{\boldsymbol{\mu},\boldsymbol{\nu}}\sum_{i=1}^{|\mathcal{S}_1|} \mu_i P(s_i)-\sum_{j=1}^{|\mathcal{S}_2|} \nu_j Q(s_j),\\
    \text{subject to }& \ \mu_i-\nu_j\leq d(s_i,s_j),\ \forall \ i,j .
\end{aligned}
\end{equation}
Then the existence of such a $d^{1\text{-}2}$ satisfying Eq. \ref{eq:defi-GBSM} is established by the following theorem.
\begin{theorem}[\textbf{Existence and convergence of GBSM}]\label{Theorem:1}
Let $d_0^{1\text{-}2}$ be a constant zero function and define
\begin{align}\label{eq:dn}
    d_{n}^{1\text{-}2}\left(s, s'\right)=&\ \max_{a} \big\{|R_1(s,a)-R_2(s',a)| +\gamma W_1(\mathbb{P}_1(\cdot|s,a), \mathbb{P}_2(\cdot|s',a) ;d_{n-1}^{1\text{-}2})\big\},\ n\in\mathbb{N}
\end{align}
Then $d_n^{1\text{-}2}$ converges to the fixed point $d^{1\text{-}2}$ uniformly with $n\rightarrow\infty$.  
Let $\bar{R}=\max_{s,s',a} |R_1(s,a)-R_2(s',a)|$, and the convergence of $d_n^{1\text{-}2}$ to $d^{1\text{-}2}$ satisfies
\begin{equation}\label{ineq:bounding}
        d^{1\text{-}2}(s, s')-d_n^{1\text{-}2}(s, s') \leq \gamma^n \bar{R}/(1-\gamma).
\end{equation}
\end{theorem}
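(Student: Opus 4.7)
The plan is to recognize the right-hand side of \eqref{eq:dn} as the action of an operator on the space $\mathcal{B}(\mathcal{S}_1\times\mathcal{S}_2)$ of bounded real-valued functions on $\mathcal{S}_1\times\mathcal{S}_2$, equipped with the supremum norm $\|\cdot\|_\infty$, and then apply the Banach fixed point theorem. Define $\mathcal{F}:\mathcal{B}(\mathcal{S}_1\times\mathcal{S}_2)\to\mathcal{B}(\mathcal{S}_1\times\mathcal{S}_2)$ by
$$(\mathcal{F} d)(s,s') = \max_a\bigl\{|R_1(s,a)-R_2(s',a)| + \gamma\, W_1(\mathbb{P}_1(\cdot|s,a),\mathbb{P}_2(\cdot|s',a);d)\bigr\}.$$
Since $\mathcal{S}_1$ and $\mathcal{S}_2$ are finite, $(\mathcal{B}(\mathcal{S}_1\times\mathcal{S}_2),\|\cdot\|_\infty)$ is complete, and the iterates satisfy $d_n^{1\text{-}2}=\mathcal{F}^n d_0^{1\text{-}2}$. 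Once $\mathcal{F}$ is shown to be a $\gamma$-contraction, the existence of a unique fixed point $d^{1\text{-}2}$ and uniform convergence $d_n^{1\text{-}2}\to d^{1\text{-}2}$ follow immediately.

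The technical heart of the argument is that $W_1$ is $1$-Lipschitz in its cost argument. Given two costs $d,\tilde d$ on $\mathcal{S}_1\times\mathcal{S}_2$ and an optimal coupling $\boldsymbol{\lambda}^\star$ for $W_1(P,Q;d)$ from the primal LP \eqref{LP1}, the feasibility of $\boldsymbol{\lambda}^\star$ for the LP with cost $\tilde d$ yields
$$W_1(P,Q;\tilde d) \leq \sum_{i,j}\lambda^\star_{i,j}\,\tilde d(s_i,s_j) \leq \sum_{i,j}\lambda^\star_{i,j}\,d(s_i,s_j) + \|d-\tilde d\|_\infty = W_1(P,Q;d) + \|d-\tilde d\|_\infty,$$
using $\sum_{i,j}\lambda^\star_{i,j}=\sum_i P(s_i)=1$. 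Swapping the roles of $d$ and $\tilde d$ gives $|W_1(P,Q;d)-W_1(P,Q;\tilde d)|\leq\|d-\tilde d\|_\infty$. Combining this with the elementary inequality $|\max_a f(a)-\max_a g(a)|\leq\max_a|f(a)-g(a)|$ and the cancellation of the reward terms inside the max, I obtain $\|\mathcal{F} d-\mathcal{F}\tilde d\|_\infty\leq\gamma\|d-\tilde d\|_\infty$, the desired contraction.

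For the explicit rate \eqref{ineq:bounding}, note that since $d_0^{1\text{-}2}\equiv 0$ the Wasserstein term vanishes at the first iteration, so $\|d_1^{1\text{-}2}-d_0^{1\text{-}2}\|_\infty=\|d_1^{1\text{-}2}\|_\infty\leq\bar R$. Iterating the contraction yields $\|d_{k+1}^{1\text{-}2}-d_k^{1\text{-}2}\|_\infty\leq\gamma^k\bar R$, and a Cauchy tail estimate gives
$$\|d^{1\text{-}2}-d_n^{1\text{-}2}\|_\infty \leq \sum_{k=n}^{\infty}\|d_{k+1}^{1\text{-}2}-d_k^{1\text{-}2}\|_\infty \leq \sum_{k=n}^{\infty}\gamma^k\bar R = \frac{\gamma^n\bar R}{1-\gamma},$$
which implies the stated one-sided bound. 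The main obstacle I anticipate is the Lipschitz estimate for $W_1$ in the cost; everything else is a standard contraction-mapping exercise. A minor subtlety worth recording is that the iterates $d_n^{1\text{-}2}$ remain non-negative, so that $W_1(\,\cdot\,,\,\cdot\,;d_n^{1\text{-}2})$ is indeed a bona fide Wasserstein cost; this follows by induction from $|R_1-R_2|\geq 0$ and the monotonicity of $W_1$ in its cost argument.
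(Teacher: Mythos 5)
Your proof is correct, but it takes a genuinely different route from the paper's. The paper establishes existence via the Knaster--Tarski fixed-point theorem on the complete lattice of cost functions under pointwise ordering: it proves that the map $F$ is continuous with respect to suprema of increasing sequences (which requires a two-sided argument using both the primal and dual LPs), obtains the least fixed point as $\sqcup_n F^{(n)}(d_0^{1\text{-}2})$, and then derives the rate $\gamma^n\bar R/(1-\gamma)$ by a separate induction whose key step is the one-sided inequality $W_1(P,Q;d^{1\text{-}2})\leq \max_{i,j}\{d^{1\text{-}2}(s_i,s_j)-d_n^{1\text{-}2}(s_i,s_j)\}+W_1(P,Q;d_n^{1\text{-}2})$ obtained from feasibility of the optimal coupling. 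You instead package that same coupling-feasibility computation as a symmetric $1$-Lipschitz estimate $|W_1(P,Q;d)-W_1(P,Q;\tilde d)|\leq\|d-\tilde d\|_\infty$, conclude that $\mathcal{F}$ is a $\gamma$-contraction on $(\mathcal{B}(\mathcal{S}_1\times\mathcal{S}_2),\|\cdot\|_\infty)$, and read off existence, uniform convergence, and the rate from Banach's theorem plus the Cauchy tail sum. Your approach is more compact and buys something the paper's does not state: \emph{uniqueness} of the fixed point (Tarski only yields a least fixed point without further argument), and it delivers the two-sided bound $\|d^{1\text{-}2}-d_n^{1\text{-}2}\|_\infty\leq\gamma^n\bar R/(1-\gamma)$ rather than just the one-sided inequality in the theorem. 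What the paper's lattice-theoretic route buys in exchange is monotonicity of the iterates ($d_n^{1\text{-}2}\leq d^{1\text{-}2}$ pointwise), which it reuses elsewhere (e.g.\ in the continuity lemmas and in keeping the iterates inside the domain $[0,\bar R/(1-\gamma)]$), and consistency with the original Ferns et al.\ development of BSM. Your closing remark about non-negativity of the iterates is the right thing to record if one insists on the codomain $[0,\bar R/(1-\gamma)]$ for cost functions, though for the finite LP the argument goes through on all of $\mathcal{B}(\mathcal{S}_1\times\mathcal{S}_2)$ regardless.
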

\begin{proof}[Proof Sketch]
The existence of $d^{1\text{-}2}$ is established through the fixed-point theorem~\citep{tarski1955lattice} and the definition of the Wasserstein distance, similar to the proof of BSM in~\citep{10.5555/1036843.1036863}. The convergence is proved via the LP in (\ref{LP1}) and induction. (See Appendix \ref{Appendix1} for the complete proof.)
\end{proof}
Similar to BSM, which evaluates the state similarity through the optimal value function, GBSM naturally bounds differences in the optimal value function between two MDPs.
%\frac{\gamma^n \bar{R}}{1-\gamma}.
\begin{theorem}[\textbf{Optimal value difference bound between MDPs}]\label{Theorem:2}
Let $V_1^*$ and $V_2^*$ denote the optimal value functions in $\mathcal{M}_1$ and $\mathcal{M}_2$, respectively. Then the GBSM provides an upper bound for the difference between the optimal values for any state pair $(s,s')\in \mathcal{S}_1\times\mathcal{S}_2$:
    \begin{align}
        |V^*_1(s)-V^*_2(s')|\leq {d}^{1\text{-}2}(s,s').
    \end{align}
\end{theorem}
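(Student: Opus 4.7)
The plan is to mirror the classical Ferns et al.\ argument for the single-MDP BSM, but to execute it across the two MDPs via value iteration and the Kantorovich dual. Specifically, I would introduce the value-iteration sequences $V_{1,n}^\ast$ on $\mathcal{M}_1$ and $V_{2,n}^\ast$ on $\mathcal{M}_2$, both initialized to $0$ and generated by the respective Bellman optimality operators, and prove by induction on $n$ that
\begin{equation*}
\bigl|V_{1,n}^\ast(s)-V_{2,n}^\ast(s')\bigr|\;\leq\;d_n^{1\text{-}2}(s,s')\qquad\forall\,(s,s')\in\mathcal{S}_1\times\mathcal{S}_2,
\end{equation*}
where $d_n^{1\text{-}2}$ is the sequence from Theorem~\ref{Theorem:1}. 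The base case is immediate since both sides vanish at $n=0$.

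For the inductive step I would first apply the standard inequality $|\max_a f(a)-\max_a g(a)|\leq \max_a|f(a)-g(a)|$ to the two Bellman updates, obtaining
\begin{equation*}
\bigl|V_{1,n}^\ast(s)-V_{2,n}^\ast(s')\bigr|\leq \max_a\bigl\{|R_1(s,a)-R_2(s',a)|+\gamma\,\bigl|\mathbb{E}_{\mathbb{P}_1(\cdot|s,a)}[V_{1,n-1}^\ast]-\mathbb{E}_{\mathbb{P}_2(\cdot|s',a)}[V_{2,n-1}^\ast]\bigr|\bigr\}.
\end{equation*}
The heart of the argument is to upper-bound the expectation difference by $W_1(\mathbb{P}_1(\cdot|s,a),\mathbb{P}_2(\cdot|s',a);d_{n-1}^{1\text{-}2})$. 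Using the dual LP~\eqref{LP2}, I would feed in the potentials $\mu_i=V_{1,n-1}^\ast(s_i)$, $\nu_j=V_{2,n-1}^\ast(s_j)$: the inductive hypothesis $V_{1,n-1}^\ast(s_i)-V_{2,n-1}^\ast(s_j)\leq d_{n-1}^{1\text{-}2}(s_i,s_j)$ is exactly the dual feasibility constraint, so $\sum_i\mu_i\mathbb{P}_1(s_i|s,a)-\sum_j\nu_j\mathbb{P}_2(s_j|s',a)\leq W_1(\cdot;d_{n-1}^{1\text{-}2})$; the reverse inequality follows by swapping the roles and using the other half of the absolute-value bound. Plugging back in yields exactly the recursion~\eqref{eq:dn} defining $d_n^{1\text{-}2}$, closing the induction.

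Finally I would pass to the limit $n\to\infty$. Standard value-iteration theory ensures $V_{i,n}^\ast\to V_i^\ast$ uniformly on each state space because the Bellman optimality operator is a $\gamma$-contraction in sup norm, while Theorem~\ref{Theorem:1} gives uniform convergence $d_n^{1\text{-}2}\to d^{1\text{-}2}$. Taking limits on both sides of the inductive inequality yields $|V_1^\ast(s)-V_2^\ast(s')|\leq d^{1\text{-}2}(s,s')$.

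The main obstacle I anticipate is the dual-LP step, because $d_{n-1}^{1\text{-}2}$ is a cross-MDP cost and lacks the symmetry/triangle structure of a genuine metric; one must be careful that plugging the induction hypothesis into~\eqref{LP2} only requires the one-sided constraint $\mu_i-\nu_j\leq d_{n-1}^{1\text{-}2}(s_i,s_j)$, and that to cover the other sign of the absolute value one reruns the dual with $\mu_i$ and $\nu_j$ exchanged in role. Once this asymmetry is handled carefully, the remainder of the argument is routine.
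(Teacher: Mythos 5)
Your proposal is correct and follows essentially the same route as the paper's proof in Appendix~\ref{Appendix2}: induction on the value-iteration index, the inequality $|\max_a f - \max_a g|\leq\max_a|f-g|$, and the observation that the iterates $V_{1}^{(n)}$, $V_{2}^{(n)}$ are dual-feasible potentials for $W_1(\cdot\,;d_n^{1\text{-}2})$, followed by passage to the limit. Your explicit handling of the two-sided absolute value (rerunning the dual with the roles of $\mu$ and $\nu$ exchanged) is exactly the care the paper takes implicitly by stating the induction hypothesis in absolute-value form.
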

\begin{proof}[Proof Sketch]
We first construct a recursive form of the optimal value function by $V^{(n)}(s) =\ \max_{a}\big\{R(s,a)+\gamma \mathbb{E}_{\tilde{s} \sim \mathbb{P}(\cdot|s,a)}\left[V^{(n-1)}\left(\tilde{s}\right)\right]\big\}$, with base case $V^{(0)}(s)=0$ and $\lim_{n\rightarrow\infty}V^{(n)}(s)=V^*$. The proof proceeds by induction on $n$. The key insight is that $\big( V_1^{(n)}(s_k) \big)_{k=1}^{|\mathcal{S}_1|}$ and $\big( V_2^{(n)}(s_k) \big)_{k=1}^{|\mathcal{S}_2|}$ form a feasible, but not necessarily the optimal, solution to the dual LP in (\ref{LP2}) for $W_1\big(\mathbb{P}_1(\cdot|s,a),\mathbb{P}_2(\cdot|s',a);d^{1\text{-}2}_n\big)$. (See Appendix \ref{Appendix2} for the complete proof.)
\end{proof}

Now, we start to establish the three fundamental metric properties of GBSM, which we term GBSM symmetry, inter-MDP triangle inequality, and the distance bound on identical state spaces. These properties are designed to align with pseudometric properties of BSM, including symmetry, triangle inequality, and indiscernibility of identical.

\begin{theorem}[\textbf{GBSM symmetry}]\label{Theorem:3}
Let ${d}^{1\text{-}2}$ be the GBSM from $\mathcal{M}_1$ to $\mathcal{M}_2$, and ${d}^{2\text{-}1}$ be the GBSM in the opposite direction, then we have
\begin{align}
   {d}^{1\text{-}2}(s,s')={d}^{2\text{-}1}(s',s),\  \forall\ (s,s')\in \mathcal{S}_1\times\mathcal{S}_2.
\end{align}
\end{theorem}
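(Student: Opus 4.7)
The plan is to prove the claim at the level of the iterates $d_n^{1\text{-}2}$ and $d_n^{2\text{-}1}$ constructed in Theorem \ref{Theorem:1}, and then pass to the fixed point using the uniform convergence guaranteed by that theorem. Concretely, I would initialize both iterations with the constant zero function, so that the base case $d_0^{1\text{-}2}(s,s')=d_0^{2\text{-}1}(s',s)=0$ is trivially symmetric, and then show by induction on $n$ that $d_n^{1\text{-}2}(s,s')=d_n^{2\text{-}1}(s',s)$ for every $(s,s')\in\mathcal{S}_1\times\mathcal{S}_2$.

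For the inductive step I would split the recursion in Eq.~\eqref{eq:dn} into its two ingredients. The reward term is immediate, since $|R_1(s,a)-R_2(s',a)|=|R_2(s',a)-R_1(s,a)|$ by the symmetry of the absolute value. The nontrivial piece is the Wasserstein term, where I need
\begin{equation*}
W_1\bigl(\mathbb{P}_1(\cdot|s,a),\mathbb{P}_2(\cdot|s',a);d_{n-1}^{1\text{-}2}\bigr)
= W_1\bigl(\mathbb{P}_2(\cdot|s',a),\mathbb{P}_1(\cdot|s,a);d_{n-1}^{2\text{-}1}\bigr).
\end{equation*}
I would argue this via the primal LP in (\ref{LP1}). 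Given any feasible coupling $\lambda_{i,j}$ for the left-hand program, the transposed matrix $\lambda'_{j,i}:=\lambda_{i,j}$ is feasible for the right-hand program (the row-sum and column-sum constraints simply swap roles and the nonnegativity constraint is preserved). By the inductive hypothesis applied pointwise, $d_{n-1}^{1\text{-}2}(s_i,s_j)=d_{n-1}^{2\text{-}1}(s_j,s_i)$, so the objective values agree: $\sum_{i,j}\lambda_{i,j}\,d_{n-1}^{1\text{-}2}(s_i,s_j)=\sum_{j,i}\lambda'_{j,i}\,d_{n-1}^{2\text{-}1}(s_j,s_i)$. The transposition is a bijection between the two feasible polytopes, so the minima coincide. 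Taking the maximum over $a$ on each side then yields $d_n^{1\text{-}2}(s,s')=d_n^{2\text{-}1}(s',s)$, closing the induction.

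Finally, the bound in (\ref{ineq:bounding}) of Theorem \ref{Theorem:1} guarantees $d_n^{1\text{-}2}\to d^{1\text{-}2}$ and $d_n^{2\text{-}1}\to d^{2\text{-}1}$ uniformly as $n\to\infty$, so the pointwise identity $d_n^{1\text{-}2}(s,s')=d_n^{2\text{-}1}(s',s)$ passes to the limit and gives the stated symmetry. I do not anticipate a serious obstacle: the only delicate point is confirming that the LP transposition argument genuinely matches feasible polytopes across the two directions, which is why I favour the primal rather than dual formulation in (\ref{LP1})–(\ref{LP2}) since the transposition of the coupling matrix makes the bijection between feasible solutions fully explicit.
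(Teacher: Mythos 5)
Your proposal is correct and follows essentially the same route as the paper: induction on the iterates $d_n^{1\text{-}2}$, with the reward term handled by symmetry of the absolute value and the Wasserstein term by symmetry of the optimal transport problem under the swapped cost, then passing to the fixed point. Your explicit transposition-of-couplings argument for the primal LP is simply a more detailed justification of the Wasserstein equality that the paper asserts directly from the inductive hypothesis.
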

\begin{proof}
This property can be readily proved through induction. We have $|R_1(s,a)-R_2(s',a)|=|R_2(s',a)-R_1(s,a)|$ for the base case. With the assumption of ${d}^{1\text{-}2}_n(s,s')={d}^{2\text{-}1}_n(s',s)$, we have $W_1\big(\mathbb{P}_1(\cdot|s,a),\mathbb{P}_2(\cdot|s',a);d^{1\text{-}2}_n\big)=W_1\big(\mathbb{P}_2(\cdot|s',a),\mathbb{P}_1(\cdot|s,a);d^{2\text{-}1}_n\big)$, and from (\ref{eq:dn}) we have ${d}^{1\text{-}2}_{n+1}(s,s')={d}^{2\text{-}1}_{n+1}(s',s)$. It is therefore concluded that ${d}^{1\text{-}2}_n(s,s')={d}^{2\text{-}1}_n(s',s)$ for all $n\in\mathbb{N}$ and $(s,s')\in \mathcal{S}_1\times\mathcal{S}_2$. Taking $n\rightarrow \infty$ yields the desired result.
\end{proof}

\begin{theorem}[\textbf{Inter-MDP triangle inequality of GBSM}]\label{Theorem:4}
Given MDPs $\mathcal{M}_1=\langle \mathcal{S}_1, \mathcal{A}, \mathbb{P}_1, R_1, \gamma\rangle$, $\mathcal{M}_2=\langle \mathcal{S}_2, \mathcal{A}, \mathbb{P}_2, R_2, \gamma\rangle$, and $\mathcal{M}_3=\langle \mathcal{S}_3, \mathcal{A}, \mathbb{P}_3, R_3, \gamma\rangle$, GBSMs between the three     MDPs satisfy
\begin{equation} \label{triangleieq}
    {d}^{1\text{-}2}(s,s')\leq{d}^{1\text{-}3}(s,s'')+{d}^{3\text{-}2}(s'',s'),\  \forall\ (s,s',s'')\in \mathcal{S}_1\times\mathcal{S}_2\times\mathcal{S}_3.
\end{equation}
Here, the GBSM between any two MDPs can be arbitrarily reversed according to its symmetry.
\end{theorem}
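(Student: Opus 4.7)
My plan is to prove the inequality for the finite-stage approximations $d_n^{1\text{-}2}$ introduced in Theorem \ref{Theorem:1} by induction on $n$, and then pass to the limit using the uniform convergence guaranteed by that theorem. The base case $n=0$ is trivial since $d_0^{1\text{-}2}\equiv d_0^{1\text{-}3}\equiv d_0^{3\text{-}2}\equiv 0$. For the inductive step, assume $d_n^{1\text{-}2}(s,s')\leq d_n^{1\text{-}3}(s,s'')+d_n^{3\text{-}2}(s'',s')$ for every triple of states and split the recursion in (\ref{eq:dn}) into a reward part and a Wasserstein part. The reward part is handled immediately by the ordinary triangle inequality $|R_1(s,a)-R_2(s',a)|\leq |R_1(s,a)-R_3(s'',a)|+|R_3(s'',a)-R_2(s',a)|$, so the real work is a triangle-type bound for the Wasserstein term across three marginals living on three different state spaces with three different cost functions.

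The key step is therefore to show, writing $P=\mathbb{P}_1(\cdot|s,a)$, $Q=\mathbb{P}_3(\cdot|s'',a)$, $R=\mathbb{P}_2(\cdot|s',a)$, that
\begin{equation*}
W_1(P,R;d_n^{1\text{-}2})\leq W_1(P,Q;d_n^{1\text{-}3})+W_1(Q,R;d_n^{3\text{-}2}).
\end{equation*}
I will prove this using a gluing argument on the primal LP (\ref{LP1}). Let $\lambda^{PQ}$ and $\lambda^{QR}$ be optimal transport plans achieving the two distances on the right. Define the glued plan on $\mathcal{S}_1\times\mathcal{S}_2$ by $\lambda^{PR}_{i,j}=\sum_{k:\,Q(s_k)>0}\lambda^{PQ}_{i,k}\lambda^{QR}_{k,j}/Q(s_k)$, with the convention that terms with $Q(s_k)=0$ are dropped (they contribute zero mass anyway). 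A short computation verifies that the marginals of $\lambda^{PR}$ are $P$ and $R$, so it is feasible for $W_1(P,R;d_n^{1\text{-}2})$. The inductive hypothesis then gives $d_n^{1\text{-}2}(s_i,s_j)\leq d_n^{1\text{-}3}(s_i,s_k)+d_n^{3\text{-}2}(s_k,s_j)$ inside the sum, and collapsing the $k$-index on each resulting piece recovers exactly $W_1(P,Q;d_n^{1\text{-}3})+W_1(Q,R;d_n^{3\text{-}2})$.

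Plugging this back into (\ref{eq:dn}) and using that a maximum over $a$ of a sum is at most the sum of the maxima yields
\begin{equation*}
d_{n+1}^{1\text{-}2}(s,s')\leq d_{n+1}^{1\text{-}3}(s,s'')+d_{n+1}^{3\text{-}2}(s'',s'),
\end{equation*}
closing the induction. Finally, Theorem \ref{Theorem:1} guarantees $d_n^{1\text{-}2}\to d^{1\text{-}2}$ uniformly (and likewise for the $1\text{-}3$ and $3\text{-}2$ metrics), so taking $n\to\infty$ on both sides preserves the inequality and produces (\ref{triangleieq}). The symmetry established in Theorem \ref{Theorem:3} then justifies the remark that either MDP in any $d^{i\text{-}j}$ term may be swapped.

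The main obstacle I anticipate is the gluing step: unlike the classical triangle inequality for Wasserstein on a single metric space, here the three couplings live on disjoint product spaces and the cost functions themselves depend on which pair of MDPs is involved, so one must carefully verify that (i) the glued plan is a legitimate coupling of $P$ and $R$ even when $Q$ has zero-mass atoms, and (ii) the inductive hypothesis is strong enough to bound the three-indexed sum $\sum_{i,j,k}$ term by term before collapsing it back to the two separate Wasserstein values. Once this bookkeeping is done, the remainder of the argument is a routine propagation through the max-over-actions operator.
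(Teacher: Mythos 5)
Your proposal is correct and follows essentially the same route as the paper: induction on the iterates $d_n$, with the crux being the triangle-type bound for the Wasserstein term obtained by gluing the two optimal couplings into a three-way plan and marginalizing. The only cosmetic difference is that you construct the glued plan explicitly via $\lambda^{PR}_{i,j}=\sum_k \lambda^{PQ}_{i,k}\lambda^{QR}_{k,j}/Q(s_k)$ (correctly handling zero-mass atoms), whereas the paper invokes the Gluing Lemma of Villani as a black box.
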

\begin{proof}
First, we need to prove the transitive property of inequality on the Wasserstein distance, that is, the Wasserstein distance between the three distributions follows $W_1(\mathbb{P}^1,\mathbb{P}^2 ;d^\textnormal{1-2}) {\leq}  W_1(\mathbb{P}^1,\mathbb{P}^\textnormal{3};d^{\textnormal{1-3}} ) +W_1(\mathbb{P}^\textnormal{3},\mathbb{P}^2;d^{\textnormal{3-2}} )$ if (\ref{triangleieq}) holds, where $\mathbb{P}^1$, $\mathbb{P}^2$, and $\mathbb{P}^3$ denote arbitrary distributions on $\mathcal{S}_1$, $\mathcal{S}_2$, and $\mathcal{S}_3$.

Let $(s_i,s_j,s_k)\in \mathcal{S}_1\times\mathcal{S}_2\times\mathcal{S}_3$. Define $\boldsymbol{\lambda}^{1,3}$ as the optimal transportation plan for $W_1(\mathbb{P}^1,\mathbb{P}^3 ;d^\textnormal{1-3})$ in primal LP (\ref{LP1}), with elements $ \lambda^{1,3}_{i,k}$ satisfying $ \sum_{k=1}^{|\mathcal{S}_3|} \lambda^{1,3}_{i,k}=\mathbb{P}^1(s_i)$ and $\sum_{i=1}^{|\mathcal{S}_1|}\lambda^{1,3}_{i,k}=\mathbb{P}^3(s_k)$. Similarly define $\boldsymbol{\lambda}^{3,2}$ for $W_1(\mathbb{P}^3,\mathbb{P}^2 ;d^\textnormal{3-2})$ with elements $\lambda^{3,2}_{k,j}$. Construct $\boldsymbol{\lambda}^{1,3,2}$ with elements $\lambda^{1,3,2}_{i,k,j}$ satisfying $\sum_{j=1}^{|\mathcal{S}_2|} \lambda^{1,3,2}_{i,k,j} = \lambda^{1,3}_{i,k}$ and $\sum_{i=1}^{|\mathcal{S}_1|} \lambda^{1,3,2}_{i,k,j} = \lambda^{3,2}_{k,j}$.
    Such a $\boldsymbol{\lambda}^{1,3,2}$ does exist according to the Gluing Lemma in~\citep{villani2009optimal}. Then, note that
    \begin{align}
        \sum_{j=1}^{|\mathcal{S}_2|} \sum_{k=1}^{|\mathcal{S}_3|}\lambda^{1,3,2}_{i,k,j} &= \sum_{k=1}^{|\mathcal{S}_3|} \lambda^{1,3}_{i,k} = \mathbb{P}^1(s_i),\notag\\ 
        \sum_{i=1}^{|\mathcal{S}_1|} \sum_{k=1}^{|\mathcal{S}_3|} \lambda^{1,3,2}_{i,k,j} &= \sum_{k=1}^{|\mathcal{S}_3|} \lambda^{3,2}_{k,j} = 
        \mathbb{P}^2(s_j), \notag
    \end{align}
    thus $\sum_{k=1}^{|\mathcal{S}_3|} \boldsymbol{\lambda}^{1,3,2}$ is a feasible, but not necessarily the optimal, solution to the primal LP in (\ref{LP1}) for $W_1(\mathbb{P}^1,\mathbb{P}^2 ;d^\textnormal{1-2})$. Consequently, we have
     \begin{align}\label{preservation}
     W_1\big(\mathbb{P}^1,\mathbb{P}^2 ;d^\textnormal{1-2}\big)\ {\leq}&\  \sum_{i=1}^{|\mathcal{S}_1|} \sum_{j=1}^{|\mathcal{S}_2|} \Big( \sum_{k=1}^{|\mathcal{S}_3|} \lambda^{1,3,2}_{i,k,j} \Big) d^{\textnormal{1-2}}(s_i, s_j)\notag\\
     \overset{(a)}{\leq}&\  \sum_{i=1}^{|\mathcal{S}_1|} \sum_{j=1}^{|\mathcal{S}_2|} \Big( \sum_{k=1}^{|\mathcal{S}_3|} \lambda^{1,3,2}_{i,k,j} \Big)
     \Big(d^{\textnormal{1-3}}(s_i,s_k)+d^{\textnormal{3-2}}(s_k,s_j)\Big)\notag\\
     {=}\; &\  \sum_{i=1}^{|\mathcal{S}_1|} \sum_{k=1}^{|\mathcal{S}_3|} \lambda^{1,3}_{i,k} d^{\textnormal{1-3}}(s_i,s_k) + \sum_{k=1}^{|\mathcal{S}_3|}\sum_{j=1}^{|\mathcal{S}_2|}  \lambda^{3,2}_{k,j} d^{\textnormal{3-2}}(s_k,s_j) \notag\\
     {=}\; &\ W_1\big(\mathbb{P}^1,\mathbb{P}^\textnormal{3};d^{\textnormal{1-3}} \big) +W_1\big(\mathbb{P}^\textnormal{3},\mathbb{P}^2;d^{\textnormal{3-2}} \big).
    \end{align}
    Here, step~$(a)$ stems from the assumption on $d$. We have now established the transitivity of the inter-MDP triangle inequality on the Wasserstein distance.

    Armed with (\ref{preservation}), we are ready to prove the inter-MDP triangle inequality of the GBSM through induction. For the base case,
    \begin{align}
         d_1^{\textnormal{1-2}}(s,s')\leq &\  \max\nolimits_{a} \{|R_1(s,a)-R_3(s'',a)|+|R_3(s'',a)-R_2(s',a)|\} \notag\\
        \leq &\ \max\nolimits_{a} \{|R_1(s,a)-R_3(s'',a)|\} + \max\nolimits_{a} \{|R_3(s'',a)-R_2(s',a)|\}\notag\\
        = &\  d_1^{\textnormal{1-3}}(s,s'')+d_1^{\textnormal{3-2}}(s'',s'),\  \forall\ (s,s',s'')\in \mathcal{S}_1\times\mathcal{S}_2\times\mathcal{S}_3. \notag
    \end{align}
    By the induction hypothesis, we assume that for an arbitrary $n\in\mathbb{N}$,
    \begin{equation}
             d_n^{\textnormal{1-2}}(s,s')\leq d_n^{\textnormal{1-3}}(s,s'')+d_n^{\textnormal{3-2}}(s'',s'),\  \forall\ (s,s',s'')\in \mathcal{S}_1\times\mathcal{S}_2\times\mathcal{S}_3. \notag
    \end{equation}
    The induction follows
    \begin{align}
            d_{n+1}^{\textnormal{1-2}}(s,s') {\leq} &\ \max_{a} \big\{\big|R_1(s,a)-R_3(s'',a)\big|+ \big|R_3(s'',a)-R_2(s',a)\big|\notag\\
            &\qquad\quad + \gamma W_1\big(\mathbb{P}_1(\cdot|s,a), \mathbb{P}_3(\cdot|s'',a)  ;d_n^{\textnormal{1-3}}\big) + \gamma W_1\big( \mathbb{P}_3(\cdot|s'',a), \mathbb{P}_2(\cdot|s',a) ;d_n^{\textnormal{3-2}}\big)\big\}\notag\\
            \leq &\ \max_{a} \big\{\big|R_1(s,a)-R_3(s'',a)\big|+\gamma W_1\big(\mathbb{P}_1(\cdot|s,a), \mathbb{P}_3(\cdot|s'',a)  ;d_n^{\textnormal{1-3}}\big)\big\} \notag\\
            &\ +\max_{a} \big\{\big|R_3(s'',a)-R_2(s',a)\big| + \gamma W_1\big( \mathbb{P}_3(\cdot|s'',a), \mathbb{P}_2(\cdot|s',a) ;d_n^{\textnormal{3-2}}\big)\big\} \notag\\        
            = &\ d_{n+1}^{\textnormal{1-3}}(s,s'') + d_{n+1}^{\textnormal{3-2}}(s'',s'),\  \forall\ (s,s',s'')\in \mathcal{S}_1\times\mathcal{S}_2\times\mathcal{S}_3. \notag
    \end{align}
    Here, the first inequality follows from (\ref{preservation}), i.e., the transitivity of the inequality. 
    Now we have $d_n^{\textnormal{1-2}}(s,s')\leq d_n^{\textnormal{1-3}}(s,s'')+d_n^{\textnormal{3-2}}(s'',s')$ for all $n\in\mathbb{N}$. Taking $n\rightarrow \infty$, we establish the inter-MDP triangle inequality of GBSM. 
\end{proof}

Since the identical states only exist within the same state space, we establish the distance bound only when $\mathcal{M}_1$ and $\mathcal{M}_2$ share the same state space $\mathcal{S}$. This property is formulated as follows:

\begin{theorem}[\textbf{Distance bound on identical state spaces}]\label{Theorem:5}
When $\mathcal{M}_1$ and $\mathcal{M}_2$ share the same $\mathcal{S}$,
\begin{align}
   \max_{s}d^{\textnormal{1-2}}(s,s) \leq \frac{1}{1-\gamma}\max_{s,a} \Big\{\big|R_1(s,a)-R_2(s,a)\big|+\frac{\gamma \bar{R}}{1-\gamma} \textnormal{TV}\left(\mathbb{P}_1(\cdot|s,a), \mathbb{P}_2(\cdot|s,a) \right)\Big\},
\end{align}
where TV represents the total variation distance defined by $\textnormal{TV}(P,Q)= \frac{1}{2} \sum_{s\in\mathcal{S}} \big|P(s)-Q(s)\big|$.
\end{theorem}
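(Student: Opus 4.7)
The plan is to let $\Delta := \max_{s\in\mathcal{S}} d^{\textnormal{1-2}}(s,s)$, construct a feasible (but generally suboptimal) transport plan in the primal LP (\ref{LP1}) that exploits the shared state space, substitute the resulting Wasserstein bound into the GBSM fixed-point equation (\ref{eq:defi-GBSM}) with $s' = s$, and solve the resulting self-consistent inequality for $\Delta$.

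As a preliminary, I would establish a uniform upper bound $d^{\textnormal{1-2}}(s,s') \leq D := \bar R/(1-\gamma)$ on $\mathcal{S}\times\mathcal{S}$. A short induction on the iterates (\ref{eq:dn}) starting from $d_0^{\textnormal{1-2}} \equiv 0$ gives $d_n^{\textnormal{1-2}} \leq \bar R \sum_{k=0}^{n-1}\gamma^k$, and passing to the limit yields the claim. This constant $D$ will play the role of a worst-case off-diagonal transportation cost.

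The central step is to bound $W_1\bigl(\mathbb{P}_1(\cdot|s,a), \mathbb{P}_2(\cdot|s,a); d^{\textnormal{1-2}}\bigr)$ via a \emph{diagonal-plus-residual} coupling: allocate mass $\min\bigl(\mathbb{P}_1(\tilde s|s,a), \mathbb{P}_2(\tilde s|s,a)\bigr)$ to each diagonal cell $(\tilde s, \tilde s)$, then route the residual mass arbitrarily among off-diagonal cells. The standard identity $\sum_{\tilde s}\min(P,Q) = 1 - \textnormal{TV}(P,Q)$ confirms that both marginals of (\ref{LP1}) are satisfied, with total off-diagonal mass on each side equal to $\tau_a := \textnormal{TV}(\mathbb{P}_1(\cdot|s,a),\mathbb{P}_2(\cdot|s,a))$. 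The diagonal portion contributes at most $\Delta(1-\tau_a) \leq \Delta$, and the residual portion contributes at most $D\tau_a$, so $W_1 \leq \Delta + \tau_a D$. Substituting into (\ref{eq:defi-GBSM}) with $s' = s$ gives
\begin{equation*}
    d^{\textnormal{1-2}}(s,s) \leq \max_a\Bigl\{|R_1(s,a)-R_2(s,a)| + \gamma\Delta + \gamma D\,\tau_a\Bigr\},
\end{equation*}
after which taking $\max_s$, moving $\gamma\Delta$ to the left-hand side, and substituting $D = \bar R/(1-\gamma)$ yield the claimed bound.

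The main obstacle I expect is the coupling construction: one must carefully justify that the ``keep common mass on the diagonal, redistribute the rest'' scheme meets both marginal constraints of the primal LP without double counting, and that its cost decomposes exactly as $\Delta(1-\tau_a) + D\tau_a$. Once feasibility and cost accounting are settled, the remaining algebra is a routine one-step contraction rearrangement.
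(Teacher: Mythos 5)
Your proposal is correct and follows essentially the same route as the paper's proof: the same ``keep the shared mass $\min(P,Q)$ in place, transport the $\textnormal{TV}$ residual at worst-case cost $\bar R/(1-\gamma)$'' coupling, followed by the same self-consistent rearrangement of $\max_s d^{\textnormal{1-2}}(s,s)$. No gaps; the coupling feasibility and cost accounting you flag as the main obstacle are exactly what the paper verifies in inequality (\ref{tvandW}).
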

\begin{proof}
Consider a special transportation plan between distributions $P$ and $Q$. This plan preserves all mass shared between $P$ and $Q$, defined as $\min\{P(s),Q(s)\}$ for all $s$. The remaining mass, where $P(s)>Q(s)$, is distributed to states where $P(s)<Q(s)$. Then the total mass to be transported is quantified by the total variation distance, where the transportation cost with the cost function $d$ is bounded by $\max_{s,s'}{d}(s,s')$. The shared mass is given by $1-\textnormal{TV}(P,Q)$, with the cost bounded by $\max_{s}{d}(s,s)$. While this plan adheres to the definition of Wasserstein distance, it can hardly be optimal. Then we have
\begin{align}\label{tvandW}
 W_1(P,Q;d)\leq &\ \textnormal{TV}(P,Q) \max\nolimits_{s,s'}{d}(s,s') + \big(1-\textnormal{TV}(P,Q)\big)\max\nolimits_{s}{d}(s,s).
\end{align}
According to its recursive definition, GBSM is a mapping bounded by $[0,\bar{R}/(1-\gamma)]$, then
\begin{align}
    \max\nolimits_{s}d^{\textnormal{1-2}}(s,s){\leq}&\max\nolimits_{s,a} \big\{ |R_1(s,a) - R_2(s,a)|   +  \gamma\textnormal{TV}\big(\mathbb{P}_1(\cdot|s,a), \mathbb{P}_2(\cdot|s,a)\big)\max\nolimits_{s,s'}d^{\textnormal{1-2}}(s,s')\big\} \notag\\
&+\gamma \max\nolimits_{s,a} \big\{  \big(1-\textnormal{TV}\big(\mathbb{P}_1(\cdot|s,a), \mathbb{P}_2(\cdot|s,a) \big)\big)\max\nolimits_{s}d^{\textnormal{1-2}}(s,s) \big\} \notag\\
{\leq}& \max_{s,a} \Big\{  |R_1(s,a) \!-\! R_2(s,a)|  \! +\!  \frac{\gamma \bar{R}}{1\!-\!\gamma}\textnormal{TV}\big(\mathbb{P}_1(\cdot|s,a), \mathbb{P}_2(\cdot|s,a )\big) \Big\} \!+  \!\gamma\max_{s}d^{\textnormal{1-2}}(s,s).\notag
\end{align}
Rearranging the inequality yields the desired result.
\end{proof}
A direct consequence of Theorem \ref{Theorem:5} is that if $\mathcal{M}_1\!=\!\mathcal{M}_2$, where $R_1\!=\!R_2$ and $\mathbb{P}_1\!=\!\mathbb{P}_2$, the right-hand side of the inequality becomes zero. It indicates that 
\begin{equation}
    d^{\textnormal{1-1}}(s,s)=d((s,\mathcal{M}_1), (s,\mathcal{M}_1))=0,\ \forall s\in\mathcal{S}_1,
\end{equation}
confirming the indiscernibility of identicals of GBSM when the compared objects (state-MDP pairs) are genuinely identical. We denote the maximization term $\max_{a} \{|R_1(s,a)-R_2(s,a)|+\frac{\gamma \bar{R}}{1-\gamma} \textnormal{TV}(\mathbb{P}_1(\cdot|s,a), \mathbb{P}_2(\cdot|s,a) )\}$ in Theorem \ref{Theorem:5} as $d^{\textnormal{1-2}}_\textnormal{TV}(s,s)$ in the following.

We now have a formal definition of GBSM and have rigorously proved the metric properties. Notably, when all compared MDPs are identical, GBSM reduces to the standard BSM. In this case, the three fundamental properties of GBSM reduce to the corresponding pseudometric properties of BSM.
\section{Applications of GBSM in Multi-MDP Analysis}
To demonstrate the effectiveness of GBSM in multi-MDP scenarios, we apply it to theoretical analyses of policy transfer, state aggregation, and sampling-based estimation of MDPs.
\subsection{Performance Bound of Policy Transfer Using GBSM}
Using GBSM, we analyze policy transfer from a source MDP $\mathcal{M}_1$ to a target MDP $\mathcal{M}_2$ and derive a theoretical performance bound for the transferred policy. This bound takes the form of a regret (defined as the expected discounted reward loss incurred by following the transferred policy instead of the optimal one~\citep{kaelbling1996reinforcement}). Specifically, it is a weighted sum of the GBSM between the two MDPs and the regret within the source MDP itself, formulated by the following.
\begin{theorem}[\textbf{Regret bound on policy transfer}]\label{Theorem:6}
Consider transferring a policy $\pi$ from $\mathcal{M}_1$ to $\mathcal{M}_2$. The transferred policy acts as $\pi(\cdot|f(s'))$ for $s'\in\mathcal{S}_2$, where $f:\mathcal{S}_2\rightarrow\mathcal{S}_1$ is a mapping from target states to source states. The regret of $\pi$ in $\mathcal{M}_2$ is bounded by 
\begin{align}\label{Th6eq1}
    \max_{s'\in\mathcal{S}_2}|V_2^*(s')-V_2^{\pi}(s')| \leq \frac2{1-\gamma}\max_{s'\in\mathcal{S}_2}d^{\textnormal{1-2}}(f(s'),s')+\frac{1+\gamma}{1-\gamma}\max_{s\in\mathcal{S}_1}|V_1^*(s)-V_1^{\pi}(s)|.
\end{align}
\end{theorem}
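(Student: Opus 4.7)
The plan is to reduce the regret to a one-step Bellman residual at the optimal value $V_2^*$, then triangulate through the corresponding source state $f(s')\in\mathcal{S}_1$. Write $\epsilon:=\max_{s'}d^{\textnormal{1-2}}(f(s'),s')$ and $\delta:=\max_s|V_1^*(s)-V_1^\pi(s)|$. Since $V_2^*\geq V_2^\pi$ pointwise, writing the Bellman equation for $V_2^\pi$ under the transferred policy and adding and subtracting the same expression evaluated at $V_2^*$ gives
\begin{equation*}
V_2^*(s')-V_2^\pi(s')=\Delta(s')+\gamma\sum_a\pi(a|f(s'))\,\mathbb{E}_{\tilde s'\sim\mathbb{P}_2(\cdot|s',a)}[V_2^*(\tilde s')-V_2^\pi(\tilde s')],
\end{equation*}
where $\Delta(s'):=V_2^*(s')-\sum_a\pi(a|f(s'))Q_2^*(s',a)\geq 0$. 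Taking $\max_{s'}$ and absorbing the resulting $\gamma$-contraction reduces the problem to establishing $\max_{s'}\Delta(s')\leq 2\epsilon+(1+\gamma)\delta$.

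To bound $\Delta(s')$, I would insert $V_1^*(f(s'))$ and use the source-MDP identity $V_1^\pi(f(s'))=\sum_a\pi(a|f(s'))Q_1^\pi(f(s'),a)$ to split
\begin{equation*}
\Delta(s')=\bigl[V_2^*(s')-V_1^*(f(s'))\bigr]+\bigl[V_1^*(f(s'))-V_1^\pi(f(s'))\bigr]+\sum_a\pi(a|f(s'))\bigl[Q_1^\pi(f(s'),a)-Q_2^*(s',a)\bigr].
\end{equation*}
Theorem \ref{Theorem:2} bounds the first bracket by $d^{\textnormal{1-2}}(f(s'),s')\leq\epsilon$, and the definition of the source-MDP regret bounds the second bracket by $\delta$.

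The key step is the third bracket. I would further split $Q_1^\pi-Q_2^*=(Q_1^\pi-Q_1^*)+(Q_1^*-Q_2^*)$. The first sub-piece collapses to $\gamma\,\mathbb{E}_{\mathbb{P}_1(\cdot|f(s'),a)}[V_1^\pi-V_1^*]$ since the reward and transition are shared within $\mathcal{M}_1$, and is therefore bounded by $\gamma\delta$. For the second sub-piece, I would combine Theorem \ref{Theorem:2} with the GBSM symmetry in Theorem \ref{Theorem:3} to verify that $\bigl(V_1^*(s_i)\bigr)_i$ and $\bigl(V_2^*(s_j)\bigr)_j$ satisfy the dual-LP constraints in (\ref{LP2}) in both directions, certifying them as a feasible (sub-optimal) potential pair for $W_1(\mathbb{P}_1(\cdot|f(s'),a),\mathbb{P}_2(\cdot|s',a);d^{\textnormal{1-2}})$. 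Weak duality then bounds $|\mathbb{E}_{\mathbb{P}_1}V_1^*-\mathbb{E}_{\mathbb{P}_2}V_2^*|$ by the Wasserstein distance, so combining with the reward difference and maximizing over $a$ yields $|Q_1^*(f(s'),a)-Q_2^*(s',a)|\leq d^{\textnormal{1-2}}(f(s'),s')$ uniformly in $a$; hence the third bracket is at most $\epsilon+\gamma\delta$.

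Combining the three bracket bounds yields $\Delta(s')\leq 2\epsilon+(1+\gamma)\delta$, and dividing by $1-\gamma$ delivers (\ref{Th6eq1}). The main obstacle is the dual-LP feasibility check for $(V_1^*,V_2^*)$ in the third step: this is precisely where the two-MDP apparatus --- Theorem \ref{Theorem:2} for one direction and the GBSM symmetry of Theorem \ref{Theorem:3} for the reverse --- is essential for porting the single-MDP BSM $Q$-difference estimate to the cross-MDP setting.
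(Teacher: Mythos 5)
Your proof is correct and arrives at exactly the same constants, but it is organized differently from the paper's argument. The paper starts from the value-level triangle inequality $|V_2^*(s')-V_2^\pi(s')|\le|V_2^*(s')-V_1^*(f(s'))|+|V_1^*(f(s'))-V_1^\pi(f(s'))|+|V_1^\pi(f(s'))-V_2^\pi(s')|$, bounds the first two terms as you do, and then expands the third (cross-MDP value difference under the shared policy) one Bellman step, inserting $V_1^*$ and $V_2^*$ inside the expectations to produce $d^{\textnormal{1-2}}(f(s'),s')+\gamma\delta+\gamma\max_{s'}|V_2^*(s')-V_2^\pi(s')|$ before rearranging. You instead use the exact performance-difference identity to isolate the one-step Bellman residual $\Delta(s')$ and handle the $\gamma$-recursion exactly, then telescope $\Delta$ through $V_1^*,V_1^\pi,Q_1^\pi,Q_1^*,Q_2^*$. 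The decisive technical step is identical in both arguments: certifying $(V_1^*,V_2^*)$ as a feasible potential pair for the dual LP in (\ref{LP2}) so that the cross-MDP expectation gap is controlled by $W_1(\mathbb{P}_1,\mathbb{P}_2;d^{\textnormal{1-2}})$ and hence by $d^{\textnormal{1-2}}(f(s'),s')$ after adding the reward gap (for the reverse inequality you invoke Theorem~\ref{Theorem:3}, though negating the potentials and reusing Theorem~\ref{Theorem:2} would also do). Your route buys a cleaner contraction step (an identity rather than a chain of inequalities) and a nonnegative residual $\Delta\ge0$ that dispenses with absolute values early; the paper's route stays entirely at the level of value functions and avoids introducing $Q$-functions. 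Both are valid and yield (\ref{Th6eq1}) verbatim.
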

\begin{proof}[Proof Sketch]
The proof of (\ref{Th6eq1}) is similar to the proof in~\citep{phillips2006knowledge} and is conducted by replacing the BSM by GBSM. (See Appendix \ref{Appendix3} for the complete proof.)
\end{proof}
Special cases of Theorem \ref{Theorem:6} yield the following refined bounds:
\begin{corol}[\textbf{Optimal mapping for policy transfer}]
When $f(s')=\arg\min_{s\in\mathcal{S}_1}d^{\textnormal{1-2}}(s,s')$, $\forall s'\in \mathcal{S}_2$, the bound tightens to:
\begin{align}
    \max_{s'\in\mathcal{S}_2}|V_2^*(s')-V_2^{\pi}(s')| \leq \frac2{1-\gamma}\max_{s'\in\mathcal{S}_2}\big\{\min_{s\in\mathcal{S}_1}d^{\textnormal{1-2}}(s,s')\big\}+\frac{1+\gamma}{1-\gamma}\max_{s\in\mathcal{S}_1}|V_1^*(s)-V_1^{\pi}(s)|.
\end{align}
\end{corol}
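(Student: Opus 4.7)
The plan is to obtain this corollary as an immediate specialization of Theorem \ref{Theorem:6}, since the stated inequality is essentially (\ref{Th6eq1}) evaluated at the particular choice of $f$ indicated. First I would observe that the second term $\frac{1+\gamma}{1-\gamma}\max_{s\in\mathcal{S}_1}|V_1^*(s)-V_1^{\pi}(s)|$ on the right-hand side of (\ref{Th6eq1}) does not involve the mapping $f$ at all; it depends only on the source MDP $\mathcal{M}_1$ and the policy $\pi$. Hence $f$ influences the bound exclusively through the first term $\frac{2}{1-\gamma}\max_{s'\in\mathcal{S}_2}d^{\textnormal{1-2}}(f(s'),s')$.

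Next I would argue that the first term is minimized pointwise by the proposed choice. Fix any $s'\in\mathcal{S}_2$. The value $d^{\textnormal{1-2}}(f(s'),s')$ depends on $f$ solely through $f(s')\in\mathcal{S}_1$, so as $f(s')$ ranges over the finite set $\mathcal{S}_1$ the minimum $\min_{s\in\mathcal{S}_1}d^{\textnormal{1-2}}(s,s')$ is attained. Setting $f(s'):=\arg\min_{s\in\mathcal{S}_1}d^{\textnormal{1-2}}(s,s')$ for every $s'\in\mathcal{S}_2$ yields $d^{\textnormal{1-2}}(f(s'),s')=\min_{s\in\mathcal{S}_1}d^{\textnormal{1-2}}(s,s')$ pointwise in $s'$. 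Since the outer $\max_{s'\in\mathcal{S}_2}$ is monotone in its argument, we conclude
\begin{equation*}
\max_{s'\in\mathcal{S}_2}d^{\textnormal{1-2}}(f(s'),s')=\max_{s'\in\mathcal{S}_2}\big\{\min_{s\in\mathcal{S}_1}d^{\textnormal{1-2}}(s,s')\big\}.
\end{equation*}
Substituting this identity back into (\ref{Th6eq1}) gives exactly the refined bound stated in the corollary.

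There is no substantive obstacle: the entire argument is a direct specialization of Theorem \ref{Theorem:6}, relying only on the fact that $f$ enters the bound via a pointwise cost which can be minimized independently at each $s'$. The only minor bookkeeping is to note that finiteness of $\mathcal{S}_1$ guarantees the $\arg\min$ exists (and any tie-breaking rule suffices). It is worth remarking, though not needed for the proof, that the qualifier \emph{optimal} in the corollary refers to optimality with respect to the explicit GBSM-based upper bound of Theorem \ref{Theorem:6}, rather than to the true regret, which may admit mappings with even better empirical performance.
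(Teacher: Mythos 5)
Your proposal is correct and matches the paper's (implicit) argument exactly: the corollary is obtained by plugging the minimizing choice of $f$ into the bound of Theorem \ref{Theorem:6}, noting that $f$ affects only the first term and that the pointwise minimization over the finite set $\mathcal{S}_1$ turns $d^{\textnormal{1-2}}(f(s'),s')$ into $\min_{s\in\mathcal{S}_1}d^{\textnormal{1-2}}(s,s')$. Your closing remark that ``optimal'' refers to optimality of the upper bound rather than of the true regret is a fair and accurate clarification.
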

\begin{corol}[\textbf{Policy transfer with identical state space}]%[Identical state space]
When $\mathcal{M}_1$ and $\mathcal{M}_2$ share the same state space $\mathcal{S}$ and $f(s)=s$, we have
\begin{align}
    \max_{s\in\mathcal{S}}|V_2^*(s)-V_2^{\pi}(s)| &\leq \frac2{(1-\gamma)^2 }\max_{s\in\mathcal{S}}d^{\textnormal{1-2}}_\textnormal{TV}(s,s)+\frac{1+\gamma}{1-\gamma}\max_{s\in\mathcal{S}}|V_1^*(s)-V_1^{\pi}(s)|.
\end{align}
\end{corol}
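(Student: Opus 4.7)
The plan is to derive this corollary as a direct specialization of Theorem~\ref{Theorem:6} combined with the distance bound from Theorem~\ref{Theorem:5}. The hypotheses of the corollary---identical state spaces and the identity mapping $f(s)=s$---are precisely the setting in which both theorems fit together cleanly, so no new ideas beyond those already developed for GBSM are needed.

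First I would instantiate Theorem~\ref{Theorem:6} with $\mathcal{S}_1 = \mathcal{S}_2 = \mathcal{S}$ and $f = \mathrm{id}_{\mathcal{S}}$. Under this choice, the term $\max_{s' \in \mathcal{S}_2} d^{\textnormal{1-2}}(f(s'), s')$ collapses to $\max_{s \in \mathcal{S}} d^{\textnormal{1-2}}(s, s)$. The resulting inequality already has the desired structure---a multiple of a self-distance GBSM quantity plus the $\tfrac{1+\gamma}{1-\gamma}$ source-MDP regret term---but the GBSM quantity is still expressed in raw distance rather than in the TV-based surrogate $d^{\textnormal{1-2}}_\textnormal{TV}(s,s)$ appearing in the claim.

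Second, I would apply Theorem~\ref{Theorem:5}, which gives $\max_s d^{\textnormal{1-2}}(s,s) \leq \tfrac{1}{1-\gamma}\,\max_s d^{\textnormal{1-2}}_\textnormal{TV}(s,s)$ once one identifies the inner $\max_a$ bracket with the shorthand $d^{\textnormal{1-2}}_\textnormal{TV}(s,s)$ introduced immediately after Theorem~\ref{Theorem:5}. Substituting this estimate into the bound obtained in the first step multiplies the coefficient $\tfrac{2}{1-\gamma}$ by a further factor of $\tfrac{1}{1-\gamma}$, producing the $\tfrac{2}{(1-\gamma)^2}$ coefficient demanded by the corollary; the source-MDP regret term is unchanged.

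I anticipate no substantive obstacle, since both inputs are already established in the preceding sections and the derivation is essentially a pair of substitutions. The only minor check is to confirm that the joint maximum $\max_{s,a}$ appearing in Theorem~\ref{Theorem:5} factorises as $\max_s \max_a$ and hence equals $\max_s d^{\textnormal{1-2}}_\textnormal{TV}(s,s)$, which is immediate from the definition of the shorthand. Thus the entire argument reduces to two lines of routine manipulation, and the main work has already been done in proving Theorems~\ref{Theorem:5} and~\ref{Theorem:6}.
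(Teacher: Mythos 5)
Your proposal is correct and matches the paper's own (sketched) argument exactly: specialize Theorem~\ref{Theorem:6} to $f(s)=s$ on a shared state space, then bound $\max_s d^{\textnormal{1-2}}(s,s)$ by $\tfrac{1}{1-\gamma}\max_s d^{\textnormal{1-2}}_\textnormal{TV}(s,s)$ via Theorem~\ref{Theorem:5}. The factorization $\max_{s,a}=\max_s\max_a$ you flag is indeed the only detail to check, and it holds by the definition of $d^{\textnormal{1-2}}_\textnormal{TV}$.
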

\begin{proof}[Proof Sketch]
This corollary utilizes the distance bound on identical state spaces in Theorem \ref{Theorem:5}.
\end{proof}

In contrast to the approach of~\citep{phillips2006knowledge}, which constructs a disjoint union state space for analysis, we provide a similar theoretical bound by directly analyzing the relationship between the source and target MDPs. This method avoids a constant total variation distance, thereby enabling simplifications such as the bound based on $d^{\textnormal{1-2}}_\textnormal{TV}$, as well as the approximation method in the following section. Meanwhile, calculating BSM on the disjoint union of two MDPs renders a significant computational complexity scaling with $|\mathcal{S}_1\!+\!\mathcal{S}_2|^2$. In contrast, our GBSM is directly computing between $\mathcal{M}_1$ and $\mathcal{M}_2$, with an reduced complexity scaling with $|\mathcal{S}_1|\!\cdot\!|\mathcal{S}_2|$.

\subsection{Approximation Methods and Corresponding Error Bounds}
When the state space is extensive and actual transition probabilities are inaccessible, approximation methods are necessary for the efficient computation of state similarities. In the single MDP scenario,~\citet{10.1137/10080484X} proposed a state similarity approximation (SSA) method based on state aggregation and sampling-based estimation. Let $\mathcal{U}\subseteq \mathcal{S}$ be a set of selected representative states, $[\,\cdot\,]:\mathcal{S}\rightarrow\mathcal{U}$ an aggregation mapping, $\tilde{\sigma}=\max_{s\in\mathcal{S}}\{d^{\sim}(s,[s])\}$ the maximum aggregation distance, and $K$ the number of samples used to empirically estimate each transition probability. The SSA error satisfies
\begin{equation}\label{Eq:aggBSM}
    \max_{s,s'}|d^{\sim}(s,s')-d^{\sim}_{\tilde{\sigma},K}([s],[s'])|\leq \frac{2\tilde{\sigma} (2+\gamma)}{1-\gamma} + \frac{2 \gamma}{1-\gamma}\max_{a,s} W_1\Big([\hat{\mathbb{P}}](\cdot|[s],a), [\mathbb{P}](\cdot|[s],a) ;d^{\sim}\Big).
\end{equation}
Here, $d^{\sim}_{\tilde{\sigma},K}$ denotes the BSM on the approximated MDP, $[\mathbb{P}]$ denotes the transition probability between aggregated states, and $[\hat{\mathbb{P}}]$ represents its empirical counterparts estimated from $K$ samples. However, the BSM-based aggregation error bound ${2\tilde{\sigma} (2+\gamma)}/{(1-\gamma)}$ is fairly loose, while the sample complexity for the estimation error is limited to asymptotic expressions. 

Beyond approximating state similarities, it is crucial to quantify the difference between optimal value functions within the original MDPs and their approximated counterparts in aggregated MDPs. Using a BSM-based analysis,~\citet{DBLP:conf/iclr/0001MCGL21} established a value function approximation (VFA) bound on this difference, given by $2\tilde{\sigma}/(1-\gamma)$, but it also suffers from looseness when $\gamma$ becomes large.

%While~\citet{10.5555/3540261.3540625} tightened the aggregation error bound to $2\tilde{\sigma}/(1-\gamma)$, this bound remains loose for large values of $\gamma$ and overlooks the estimation error introduced during approximation.

To address this, we apply the GBSM to directly compute state similarities between the original MDPs and their aggregated/estimated counterparts. Beyond extending the approach in~\citep{10.1137/10080484X} to the multi-MDP setting, our GBSM-based analysis yields significantly tighter approximation bounds for both SSA and VFA, and provides an explicit and closed-form expression for the sample complexity.

% In case the state space is too extensive and the actual transition probabilities are inaccessible for computing similarities, we propose an approximation method for computing similarities based on state aggregation and sampling-based estimation. This method resembles the one introduced for BSM in~\citep{10.1137/10080484X}. However, the analysis based on GBSM yields significantly tighter error bounds for both aggregation and sampling. Moreover, it provides an explicit and closed-form expression for the sample complexity.

\subsubsection{State Aggregation}
Given the previously defined $\mathcal{S}$, $\mathcal{U}$, and $[\,\cdot\,]$, the aggregated state space $[\mathcal{S}]$ is defined such that the reward function and transition probability of each state are replaced by those of its representative state, given by $R(s,a)=R([s],a)$ and $\mathbb{P}(\cdot|s,a)=\mathbb{P}(\cdot|[s],a)$ for all $s\in\mathcal{S}$. The aggregated transition probability is defined as $[\mathbb{P}](s'|s,a)=\sum_{s''\in\mathcal{S},[s'']=s'} \mathbb{P}(s''|s,a)$. Note that $[\mathbb{P}](s'|s,a)=0$ when $s'\notin\mathcal{U}$. With this construction, we define the aggregated MDP for $\mathcal{M}_1$ as $\mathcal{M}_{[1]}= \langle [\mathcal{S}_1], \mathcal{A}, [\mathbb{P}_1], R_1, \gamma\rangle$. First, we obtain the VFA bound directly from GBSM.
\begin{theorem}[\textbf{VFA error bound}]\label{Theorem:vfa}
Given MDP $\mathcal{M}_1$ and its aggregated counterpart $\mathcal{M}_{[1]}$, the VFA bound is given by
\begin{equation} \label{vfa}
\max\nolimits_{s\in\mathcal{S}_1}|V^*_1(s)-V^*_{[1]}(s)|\leq \sigma_1 \leq \tilde{\sigma}_1/(1-\gamma).
\end{equation}
where $\sigma_1 = \max_{s\in\mathcal{S}_1}d^{1\textnormal{-}[1]}(s,s)$ and $\tilde{\sigma}_1 = \max_{s\in\mathcal{S}_1}d^{\sim}(s,[s])$.
% where $1_{[\mathcal{S}]}$ represents an intermediate MDP defined by $\mathcal{M}_{1_{[\mathcal{S}]}}= \langle [\mathcal{S}_1], \mathcal{A}, \mathbb{P}_1, R_1, \gamma\rangle$, and $2_{[\mathcal{S}]}$ represents a similarly defined $\mathcal{M}_{2_{[\mathcal{S}]}}$.
\end{theorem}
\begin{proof}
The first inequality is a direct consequence of Theorem~\ref{Theorem:2}. For the second one,
We construct an intermediate MDP defined by $\mathcal{M}_{1_{[\mathcal{S}]}}= \langle [\mathcal{S}_1], \mathcal{A}, \mathbb{P}_1, R_1, \gamma\rangle$ and prove $d^{1\textnormal{-}1_{[\mathcal{S}]}}(s,s)=d^{1\textnormal{-}[1]}(s,s)$ for all $s\in\mathcal{S}_1$ through induction. For the base case, $d^{1_{[\mathcal{S}]}\textnormal{-}[1]}_1(s,s)=\max_a|R_1([s],a)-R_1([s],a)|=0$. By the induction hypothesis, we assume that $d^{1_{[\mathcal{S}]}\textnormal{-}[1]}_n(s,s)=0$ for any $n$, then
\begin{align}
    d^{1_{[\mathcal{S}]}\textnormal{-}[1]}_{n+1}(s,s)=&\  \max\nolimits_a \big\{ |R_1([s],a)-R_1([s],a)|  + \gamma W_1(\mathbb{P}(\cdot|[s],a),[\mathbb{P}](\cdot|[s],a);d^{1_{[\mathcal{S}]}\textnormal{-}[1]}_n) \big\} \notag\\
    \leq&\  \gamma \max\nolimits_a \sum_{\tilde{s}\in\mathcal{S}_1} \mathbb{P}(\tilde{s}|[s],a) d^{1_{[\mathcal{S}]}\textnormal{-}[1]}_n(\tilde{s},[\tilde{s}]).\notag
\end{align}
The inequality here follows from a transportation plan that moves the mass from each $\tilde{s}$ to its representative state $[\tilde{s}]$. Note that the reward function and transition probability of each state are the same as its representative states in $\mathcal{M}_{1_{[\mathcal{S}]}}$, thus $d^{1_{[\mathcal{S}]}\textnormal{-}[1]}_n(\tilde{s},[\tilde{s}])=d^{1_{[\mathcal{S}]}\textnormal{-}[1]}_n([\tilde{s}],[\tilde{s}])=0$, and thereby we have $d^{1_{[\mathcal{S}]}\textnormal{-}[1]}_{n+1}(s,s)=0$. Now we have established $d^{1_{[\mathcal{S}]}\textnormal{-}[1]}_{n}(s,s)=0$ for all $n\in\mathbb{N}$ and $s\in\mathcal{S}_1$. Taking $n\rightarrow \infty$, we have $d^{1_{[\mathcal{S}]}\textnormal{-}[1]}(s,s)=0,\ \forall s\in\mathcal{S}_1$. Using the inter-MDP triangle inequality in Theorem \ref{Theorem:4}, we derive $d^{1\textnormal{-}1_{[\mathcal{S}]}}(s,s)=d^{1\textnormal{-}[1]}(s,s)$ for all $s\in\mathcal{S}_1$. 

Next, we prove the inequality between $\sigma_1$ and $\tilde{\sigma}_1$. For representative states $s_u\in\mathcal{U}_1\subseteq\mathcal{S}_1$, we have
\begin{align}
    d^{1\textnormal{-}1_{[\mathcal{S}]}}(s_u,s_u) = &\max\nolimits_a \big\{ |R_1(s_u,a)-R_1(s_u,a)|  + \gamma W_1(\mathbb{P}(\cdot|s_u,a),\mathbb{P}(\cdot|s_u,a);d^{1\textnormal{-}1_{[\mathcal{S}]}}) \big\}\notag\\
     = &\gamma\max\nolimits_a \big\{W_1(\mathbb{P}(\cdot|s_u,a),\mathbb{P}(\cdot|s_u,a);d^{1\textnormal{-}1_{[\mathcal{S}]}}) \big\}\notag\\
     {\leq} &\gamma \max\nolimits_a \big\{\sum_{\tilde{s}\in\mathcal{S}} \mathbb{P}(\tilde{s}|s_u,a) d^{1\textnormal{-}1_{[\mathcal{S}]}}(\tilde{s},\tilde{s}) \big\}\notag\\
     \leq &\gamma \max\nolimits_s d^{1\textnormal{-}1_{[\mathcal{S}]}}(s,s) = \gamma \max\nolimits_s d^{1\textnormal{-}1_{[\mathcal{S}]}}(s,[s]).\notag
\end{align}
Here, the first inequality follows from a straightforward transportation plan that keeps all the mass at its position. The last equality is because $s$ and $[s]$ share the same reward function and transition probability in $\mathcal{M}_{1_{[\mathcal{S}]}}$. Then, according to the inter-MDP triangle inequality in Theorem \ref{Theorem:4}, we have 
\begin{align}
    d^{1\textnormal{-}1_{[\mathcal{S}]}}(s,[s]) &\leq d^{1\textnormal{-}1}(s,[s]) + d^{1\textnormal{-}1_{[\mathcal{S}]}}([s],[s])\leq d^{1\textnormal{-}1}(s,[s]) + \gamma \max\nolimits_s d^{1\textnormal{-}1_{[\mathcal{S}]}}(s,[s])\notag
\end{align}
Taking the maximum of both sides, rearranging the inequality, and combining the established $d^{1\textnormal{-}1_{[\mathcal{S}]}}(s,s)=d^{1\textnormal{-}[1]}(s,s)$, we have
\begin{align}
    \sigma_1& =\max\nolimits_{s}d^{1\textnormal{-}[1]}(s,s) =\max\nolimits_{s}d^{1\textnormal{-}1_{[\mathcal{S}]}}(s,s)= \max\nolimits_{s}d^{1\textnormal{-}1_{[\mathcal{S}]}}(s,[s])\notag\\
    &\leq \max\nolimits_{s}d^{1\textnormal{-}1}(s,[s]) /(1\!-\!\gamma) = \max\nolimits_{s}d^{\sim}(s,[s]) /(1-\gamma) = \tilde{\sigma}_1/(1-\gamma),\notag
\end{align}
demonstrating significant tightness compared to the BSM-based bound $2\tilde{\sigma}_1/(1-\gamma)$ in~\cite{DBLP:conf/iclr/0001MCGL21}.
\end{proof}

Then the aggregation error bound for SSA is established as follows.
\begin{theorem}[\textbf{SSA aggregation error bound}]\label{Theorem:7}
Given MDPs $\mathcal{M}_1$, $\mathcal{M}_2$ and their aggregated counterparts $\mathcal{M}_{[1]}$,$\mathcal{M}_{[2]}$, the SSA error bound is given by
\begin{equation} \label{eq17}
\max\nolimits_{s,s'}|d^{\textnormal{1-2}}(s,s')-d^{[1]\textnormal{-}[2]}(s,s')|\leq \sigma_1 + \sigma_2 \leq (\tilde{\sigma}_1+\tilde{\sigma}_2)/(1-\gamma).
\end{equation}
% where $1_{[\mathcal{S}]}$ represents an intermediate MDP defined by $\mathcal{M}_{1_{[\mathcal{S}]}}= \langle [\mathcal{S}_1], \mathcal{A}, \mathbb{P}_1, R_1, \gamma\rangle$, and $2_{[\mathcal{S}]}$ represents a similarly defined $\mathcal{M}_{2_{[\mathcal{S}]}}$.
\end{theorem}
\begin{proof}
This theorem is easily derived by combining Theorem~\ref{Theorem:4} and Theorem~\ref{Theorem:vfa}.
\end{proof}
When the compared MDPs are identical, i.e., $\mathcal{M}_2=\mathcal{M}_1=\langle \mathcal{S}, \mathcal{A}, \mathbb{P}, R, \gamma\rangle$, Theorem \ref{Theorem:7} reduces to the aggregation error bound in the single-MDP scenario as 
\begin{equation}
    \max\nolimits_{s,s'}|d^{\textnormal{1-1}}(s,s')-d^{[1]\textnormal{-}[1]}(s,s')|\leq 2 \sigma_1 \leq 2\tilde{\sigma}_1/(1-\gamma),\notag
\end{equation}
indicating significant tightness of the GBSM-based bound $2 \sigma_1$ compared to the BSM-based one $2\tilde{\sigma}_1 (2+\gamma)/(1-\gamma)$~\cite{10.1137/10080484X}.

\subsubsection{Sampling-based Estimation}
%无法给出准确的sampling error与sample complexity， 该方法可以， 而非计算d1-【1】

To estimate a probability distribution $P$ through statistical sampling, we define the empirical distribution based on $K$ samples as $\hat{P}(x)=\frac{1}{K} \sum_{i=1}^K \delta_{X_i}(x)$, where $\{X_1, X_2, \ldots, X_K\}$, are $K$ independent points sampled from $P$ and $\delta$ denotes the Dirac measure at $X_i$ such that $\delta_{X_i}(x)=1$ if $x = X_i$ and $0$ otherwise. 
Then the empirical MDP for $\mathcal{M}_1$ is constructed by sampling $K$ points for each $\mathbb{P}_1(\cdot|s,a)$, defined by $\mathcal{M}_{\hat{1}}=\langle \mathcal{S}_1, \mathcal{A}, \hat{\mathbb{P}}_1, R_1, \gamma\rangle$. The estimation error bound is derived as follows.

\begin{theorem}[\textbf{SSA estimation error bound}]\label{Theorem:8}
Given MDPs $\mathcal{M}_1$, $\mathcal{M}_2$ and their empirically estimated counterparts $\mathcal{M}_{\hat{1}}$,$\mathcal{M}_{\hat{2}}$, the SSA error bound is given by
\begin{align}\label{the8eq1}
   \max\nolimits_{s,s'}|d^{\textnormal{1-2}}(s,s')-d^{\hat{1}\textnormal{-}\hat{2}}(s,s')|\leq  \max\nolimits_{s}d^{1\textnormal{-}\hat{1}}(s,s)+\max\nolimits_{s'}d^{2\textnormal{-}\hat{2}}(s',s').
    % &\leq \frac{\gamma \bar{R}}{(1-\gamma)^2} \left(\max\nolimits_{s,a}\textnormal{TV}\big(\mathbb{P}_1(\cdot|s,a), \hat{\mathbb{P}}_1(\cdot|s,a) \big) + \max\nolimits_{s,a}\textnormal{TV}\big(\mathbb{P}_2(\cdot|s,a), \hat{\mathbb{P}}_2(\cdot|s,a) \big)\right)
\end{align}
To reach an error less than $\epsilon$ with a probability of $1-\alpha$, the sample complexity is given by
\begin{align}
   K \geq -\ln(\alpha/2) \frac{\gamma^2\bar{R}^2|\mathcal{S}_{\cdot}|^2}{2\epsilon^2(1-\gamma)^4},
\end{align}
for each state-action pair in $\mathcal{M}_1$ (where $|\mathcal{S}_{\cdot}|=|\mathcal{S}_1|$) and $\mathcal{M}_2$ (where $|\mathcal{S}_{\cdot}|=|\mathcal{S}_2|$).
\end{theorem}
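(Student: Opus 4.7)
The plan is to prove (\ref{the8eq1}) by combining the inter-MDP triangle inequality with GBSM symmetry, and to prove the sample complexity by chaining the distance bound on identical state spaces with a Hoeffding concentration argument.

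For the error bound (\ref{the8eq1}), I would apply Theorem \ref{Theorem:4} twice and then invoke Theorem \ref{Theorem:3}. Chaining through $\mathcal{M}_{\hat{1}}$ and $\mathcal{M}_{\hat{2}}$ gives $d^{\textnormal{1-2}}(s,s') \leq d^{1\textnormal{-}\hat{1}}(s,s) + d^{\hat{1}\textnormal{-}\hat{2}}(s,s') + d^{\hat{2}\textnormal{-}2}(s',s')$, and symmetry converts $d^{\hat{2}\textnormal{-}2}(s',s')$ into $d^{2\textnormal{-}\hat{2}}(s',s')$. Repeating the chain with the roles of original and empirical MDPs interchanged produces the opposite inequality, and taking the maximum over $(s,s')$ yields (\ref{the8eq1}).

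For the sample complexity, I would bound the two self-distance terms on the right-hand side of (\ref{the8eq1}) using Theorem \ref{Theorem:5}. Because $\mathcal{M}_i$ and $\mathcal{M}_{\hat{i}}$ share both state space and reward function, the reward difference in Theorem \ref{Theorem:5} vanishes and the bound reduces to $\max_s d^{i\textnormal{-}\hat{i}}(s,s) \leq \frac{\gamma\bar{R}}{(1-\gamma)^2}\max_{s,a}\textnormal{TV}(\mathbb{P}_i(\cdot|s,a), \hat{\mathbb{P}}_i(\cdot|s,a))$ for $i=1,2$. Requiring each of the two terms to be at most $\epsilon/2$ then translates the estimation problem into a concentration requirement $\textnormal{TV}(\mathbb{P}_i, \hat{\mathbb{P}}_i) \leq \epsilon(1-\gamma)^2/(2\gamma\bar{R})$ at each relevant state-action pair.

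The last step is to convert this TV condition into a lower bound on $K$ via Hoeffding's inequality applied to the empirical frequency $\hat{\mathbb{P}}_i(\tilde{s}|s,a) = \tfrac{1}{K}\sum_{j=1}^K \mathbb{1}[X_j = \tilde{s}]$. Using the loose bound $\textnormal{TV}(P,\hat{P}) \leq \tfrac{|\mathcal{S}|}{2}\max_{\tilde{s}}|P(\tilde{s})-\hat{P}(\tilde{s})|$ to convert the TV requirement into an $\ell_\infty$ requirement $\delta = \epsilon(1-\gamma)^2/(\gamma\bar{R}|\mathcal{S}_\cdot|)$, Hoeffding gives $\Pr(|\hat{\mathbb{P}}_i(\tilde{s}|s,a) - \mathbb{P}_i(\tilde{s}|s,a)| \geq \delta) \leq 2e^{-2K\delta^2}$, and setting this at most $\alpha$ recovers the stated $K$. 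The main obstacle is carrying constants consistently through the three reductions (triangle/symmetry, Theorem \ref{Theorem:5}, Hoeffding) and selecting the TV-to-$\ell_\infty$ conversion that produces the closed-form $|\mathcal{S}_\cdot|^2$ scaling without extra logarithmic factors; the earlier two reductions are immediate consequences of properties already established.
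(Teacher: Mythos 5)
Your proposal is correct and follows essentially the same route as the paper: the bound (\ref{the8eq1}) via the inter-MDP triangle inequality (with symmetry), then Theorem \ref{Theorem:5} to reduce each self-distance $d^{i\textnormal{-}\hat{i}}(s,s)$ to a total-variation term, the $\mathrm{TV}$-to-$\ell_\infty$ conversion introducing the $|\mathcal{S}_\cdot|$ factor, and Hoeffding's inequality with $2e^{-2K\delta^2}\leq\alpha$ yielding the stated $K$. The constants work out exactly as in the paper's derivation.
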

\begin{proof}
    Inequality (\ref{the8eq1}) is easily obtained from Theorem \ref{Theorem:4}. In terms of the sample complexity, we derive the following using Theorem \ref{Theorem:5}
    \begin{align}
        &\max\nolimits_{s,s'}|d^{\textnormal{1-2}}(s,s')-d^{\hat{1}\textnormal{-}\hat{2}}(s,s')| \notag \\
        \leq\ &\frac{\gamma \bar{R}}{(1-\gamma)^2} \left(\max\nolimits_{s,a}\textnormal{TV}\big(\mathbb{P}_1(\cdot|s,a), \hat{\mathbb{P}}_1(\cdot|s,a) \big) + \max\nolimits_{s',a}\textnormal{TV}\big(\mathbb{P}_2(\cdot|s',a), \hat{\mathbb{P}}_2(\cdot|s',a) \big)\right)\notag \\
        =\ & \frac{\gamma \bar{R}}{2(1\!-\!\gamma)^2} \left(\! \max_{s,a} \left\{\! \sum_{\tilde{s}\in \mathcal{S}_1} \!\big|\mathbb{P}_1(\tilde{s}|s,a)\!-\! \hat{\mathbb{P}}_1(\tilde{s}|s,a)\big|\! \right\} \!+\!  \max_{s'\!,a} \left\{\! \sum_{\tilde{s}\in \mathcal{S}_2} \!\big|\mathbb{P}_2(\tilde{s}|s'\!,a)\!-\! \hat{\mathbb{P}}_2(\tilde{s}|s'\!,a)\big| \!\right\}\!\right). \notag
    \end{align}
    To ensure the estimation error remains below $\epsilon$, we require $|\mathbb{P}_1(\tilde{s}|s,a)- \hat{\mathbb{P}}_1(\tilde{s}|s,a)|\leq \frac{\epsilon(1-\gamma)^2}{\gamma \bar{R}|\mathcal{S}_1|}$ and $|\mathbb{P}_2(\tilde{s}|s,a)- \hat{\mathbb{P}}_2(\tilde{s}|s,a)|\leq \frac{\epsilon(1-\gamma)^2}{\gamma \bar{R}|\mathcal{S}_2|}$. Next, by applying the Hoeffding's inequality~\citep{hoeffding1994probability} that is defined by $\operatorname{Pr}\{|\hat{P}(s)-P(s)|\geq\epsilon\} \leq 2\rm e^{-2K\epsilon^2}$, we derive the desired sample complexity.
\end{proof}%+sample complexity
When the compared MDPs are identical, the estimation SSA bound in Theorem \ref{Theorem:8} reduces to $2 \max_{s}d^{1\textnormal{-}\hat{1}}(s,s)$ for BSM. We now prove the tightness of this new sampling error bound compared to the existing bound $\frac{2 \gamma}{1\!-\!\gamma}\max_{a,s} W_1( \hat{\mathbb{P}}(\cdot|s,a),\mathbb{P}(\cdot|s,a);d^\sim)$ in~\citep{10.1137/10080484X}. According to the transitive property of inequality on the Wasserstein distance defined in (\ref{preservation}), we have
\begin{align}
d^{1\textnormal{-}\hat{1}}(s,\!s) = &\  \gamma W_1( \mathbb{P}(\cdot|s,a),\hat{\mathbb{P}}(\cdot|s,a) ;d^{1\textnormal{-}\hat{1}}) \notag\\
    \leq & \  \gamma W_1(\mathbb{P}(\cdot|s,a),\hat{\mathbb{P}}(\cdot|s,a), ;d^{1\textnormal{-}1}) +\gamma W_1(\hat{\mathbb{P}}(\cdot|s,a),\hat{\mathbb{P}}(\cdot|s,a);d^{1\textnormal{-}\hat{1}})\notag\\
    \leq & \ \gamma W_1(\hat{\mathbb{P}}(\cdot|s,a), \mathbb{P}(\cdot|s,a) ;d^{1\textnormal{-}1}) + \gamma\max\nolimits_{s}d^{1\textnormal{-}\hat{1}}(s,s). \notag
\end{align}
Taking the maximum of both sides and rearranging the inequality yields 
$$
2 \max\nolimits_{s}d^{1\textnormal{-}\hat{1}}(s,s)\leq 2 \gamma\max\nolimits_{a,s} W_1( \hat{\mathbb{P}}(\cdot|s,a),\mathbb{P}(\cdot|s,a);d^\textnormal{1-1})/({1-\gamma}).
$$
Since $d^\textnormal{1-1}\triangleq d^\sim$, we have now proved the tightness of the new sampling error bound compared to the one derived from BSM in~\cite{10.1137/10080484X}.

Furthermore, in case the approximation combines both state aggregation and sampling-based estimation, where the approximated MDP is defined as $\mathcal{M}_{[\hat{1}]}=\langle [\mathcal{S}_1], \mathcal{A}, [\hat{\mathbb{P}}_1], R_1, \gamma\rangle$, we have
\begin{align}
    \max\nolimits_{s,s'}\big|d^{\textnormal{1-1}}(s,s')\!-\!d^{[\hat{1}]\textnormal{-}[\hat{1}]}(s,s')\big| \!\leq\! \ & 2 \max\nolimits_s d^{1\textnormal{-}[\hat{1}]}(s,s) \!\leq\! 2 \max\nolimits_s d^{\textnormal{1-[1]}}(s,s) + 2 \max\nolimits_s d^{[1]\textnormal{-}[\hat{1}]}(s,s) \notag
\end{align}
via the inter-MDP triangle inequality. It enables a decoupled analysis of error, and thus results in an explicit and closed-formed sample complexity, i.e., $K\geq-\ln(\alpha/2) \frac{\gamma^2\bar{R}^2|\mathcal{U}|^2}{2\epsilon^2(1-\gamma)^4}$ for an error below $\epsilon$ with probability of $1-\alpha$, where $\mathcal{U}$ is the set of representative states.

\section{Extensions to BSM variants}

The proposed GBSM framework is readily extendable to numerous variants of BSM to enhance its applicability, such as lax BSM~\cite{10.5555/2981780.2981986} and on-policy BSM~\cite{Castro_2020}.

Lax GBSM enables the computation of state similarities between MDPs with different action spaces. To relax GBSM to lax GBSM, we first adapt (\ref{eq:defi-GBSM}) to 
\begin{equation}
    \delta (d)((s,a), (s',a'))=|R_1(s,a)-R_2(s',a')| +\gamma W_1(\mathbb{P}_1(\cdot|s,a), \mathbb{P}_2(\cdot|s',a') ;d),
\end{equation}
and define the lax function as $F_\textnormal{lax}(d|s,s')=H(X_s,X'_{s'};\delta (d))$, where $X_s=\{(s,a)|a\in\mathcal{A}_1\}$, $X'_{s'}=\{(s',a')|a'\in\mathcal{A}_2\}$, and $H$ is the Hausdorff metric. Iterating from $d_{\textnormal{lax},0}^{1\textnormal{-}2}(s,s')=0$ and $d_{\textnormal{lax},n+1}^{1\textnormal{-}2}= F_\textnormal{lax}(d_{\textnormal{lax},n}^{1\textnormal{-}2}|s,s')$, $d_{\textnormal{lax},n}^{1\textnormal{-}2}$ converges to a similar fixed point $d_\textnormal{lax}^{1\textnormal{-}2}$ that satisfies
$|V^*_1(s)\!-\!V^*_2(s')|\!\leq \!d_\textnormal{lax}^{1\textnormal{-}2}(s,s')$ in Theorem~\ref{Theorem:2}. Next, the symmetry (Theorem~\ref{Theorem:3}) and triangle inequality (Theorem~\ref{Theorem:4}) can be readily established for $d_\textnormal{lax}^{1\textnormal{-}2}$. For MDPs sharing same $\mathcal{S}$ and $\mathcal{A}$, we have $d_\textnormal{lax}^{1\textnormal{-}2}\!\leq\! d^{1\textnormal{-}2} \!\leq \!d_\text{TV}^{1\textnormal{-}2}/(1-\gamma)$ (Theorem~\ref{Theorem:5}). Since these fundamental metric properties hold, the bounds for state aggregation (Theorem~\ref{Theorem:7}) and estimation (Theorem~\ref{Theorem:8}) also follow directly. For policy transfer, a similar regret bound (replacing $d^{1\textnormal{-}2}$ in Theorem~\ref{Theorem:6} by $d_\textnormal{lax}^{1\textnormal{-}2}$) can be established by defining an additional action mapping $g:\mathcal{A}_1\rightarrow\mathcal{A}_2$ for transfer. Due to the introduction of max-min term via Hausdorff metric, the lax GBSM-based transfer bound requires an assumption on this action mapping, i.e., $g(a) = \arg \min_{a'} \delta ((f(s'),a), (s',a');d^{1\textnormal{-}2}_\textnormal{lax})$ for each $s'$ and $a$. See Appendix~\ref{Appendix5} for the proof.

On-policy GBSM computes state similarities between MDPs under non-optimal policies. To achieve this, we rewrite (\ref{eq:defi-GBSM}) to
\begin{equation}
    d^{1\textnormal{-}2}_\pi(s,s')=|R_1^\pi(s)-R_2^\pi(s')| +\gamma W_1(\mathbb{P}_1^\pi(\cdot|s), \mathbb{P}_2^\pi(\cdot|s') ;d^{1\textnormal{-}2}_\pi),
\end{equation}
where $R_{.}^\pi(s)=\sum_a \pi(a|s)R_{.}(s,a)$ and $\mathbb{P}_.^\pi(\cdot|s)=\sum_a \pi(a|s)\sum_{\tilde{s}}\mathbb{P}_.(\tilde{s}|s,a)$ are averaged reward and transition probabilities for a non-optimal policy $\pi$. Our theoretical properties are also preserved in this setting: the value difference bound in Theorem~\ref{Theorem:2} now applies to the on-policy value function by $|V^\pi_1(s)\!-\!V^\pi_2(s')|\!\leq\! d_\pi^{1\textnormal{-}2}(s,s')$. Then metric properties Theorem~\ref{Theorem:3} and~\ref{Theorem:4} follow directly, and $d^{1\textnormal{-}2}_\pi(s,s')$ is bounded by an on-policy TV-based metric $d^{1\textnormal{-}2}_{\text{TV},\pi}(s,s')=\{|R_1^\pi(s)\!-\!R_2^\pi(s)\big|\!+\!\frac{\gamma \bar{R}}{1-\gamma} \text{TV}(\mathbb{P}_1^\pi(\cdot|s), \mathbb{P}_2^\pi(\cdot|s) )\}$ as the Theorem~\ref{Theorem:5} for on-policy GBSM. As a direct consequence, we have $\max_s |V^\pi_1(s)\!-\!V^\pi_{[1]}(s)|\!\leq\! \max_s d_\pi^{1\textnormal{-}[1]}(s,s)\!\leq\! \max_s \tilde{d}_\pi(s,[s])/(1-\gamma)$, a tighter bound for VFA with non-optimal policy compared with the existing result $2\tilde{d}_\pi(s,[s])/(1-\gamma)$ in~\cite{10.5555/3540261.3540625}. See Appendix~\ref{Appendix6} for the proof.

\section{Numerical Results}
In this section, we empirically validate the theoretical results derived from GBSM. To this end, we construct MDPs with randomly generated reward functions and transition probabilities, along with their aggregated and estimated counterparts. Specifically, we use random Garnet MDPs with $|\mathcal{S}| = 20, |\mathcal{A}| = 5$,  and a 50\% branching factor. In the aggregated MDPs, the reward functions and transition probabilities for half of the states are replaced by those of their representative states, while the estimated MDPs are established by introducing a Gaussian noise with a standard deviation ranging from 0.1 to 0.3 to the transition probabilities.

To demonstrate the application of policy transfer between MDPs, we calculate the bound in Theorem~\ref{Theorem:6} and the existing measure between MDPs in~\cite{10.5555/2936924.2936994}, and calculate the ground-truth regret by computing the precise value functions under a tabular Q-learning setting. Then, we calculate the aggregation and estimation SSA bounds using BSM and GBSM, respectively. The BSM-based SSA bounds are computed via (\ref{Eq:aggBSM}). Since the estimation error bound in (\ref{Eq:aggBSM}) depends on the aggregation process, we decouple the two for clearer analysis. Specifically, the BSM-based aggregation SSA bound is given by $2\tilde{\sigma}_1 (2+\gamma)/(1-\gamma)$, and the estimation SSA bound is $\frac{2 \gamma}{1-\gamma}\max_{a,s} W_1(\hat{\mathbb{P}}(\cdot|s,a), \mathbb{P}(\cdot|s,a) ;d^{\sim})$~\cite{10.1137/10080484X}. The GBSM-based SSA bounds follow from Theorem \ref{Theorem:7} (aggregation) and Theorem \ref{Theorem:8} (estimation). For VFA bounds comparison, we employ the GBSM-based bound in Theorem~\ref{Theorem:vfa} and compare with the BSM-based bound $2\tilde{\sigma}_1/(1-\gamma)$ in~\cite{DBLP:conf/iclr/0001MCGL21}. We also compare them with the ground-truth error values to assess their tightness.

%$2\tilde{\sigma}_1/(1-\gamma)$~\cite{10.5555/3540261.3540625} and 

We conduct 100 independent experiments for each $\gamma\in\{0.1,0.2,\dots,0.9\}$. The x-axis represents the experiment index of 100 independent trials, while the y-axis plots the values of ground-truth error and (G)BSM-based bounds in each trial. Figure~\ref{Fig1} shows that the empirical metric in~\cite{10.5555/2936924.2936994} fails to bound the transfer regret, while our GBSM-based bound is consistently effective. In terms of the SSA and VFA error, as depicted in Figure~\ref{Fig2}, \ref{Fig3}, and \ref{Fig4}, the bounds based on GBSM are significantly tighter than those derived from BSM, which corroborates our theoretical findings and highlights the effectiveness of GBSM in multi-MDP analysis. Complete results are provided in Appendix~\ref{Appendix4}.

\begin{figure}[t]
\centering
\subfloat[Policy transfer]{\includegraphics[width=0.25\textwidth]{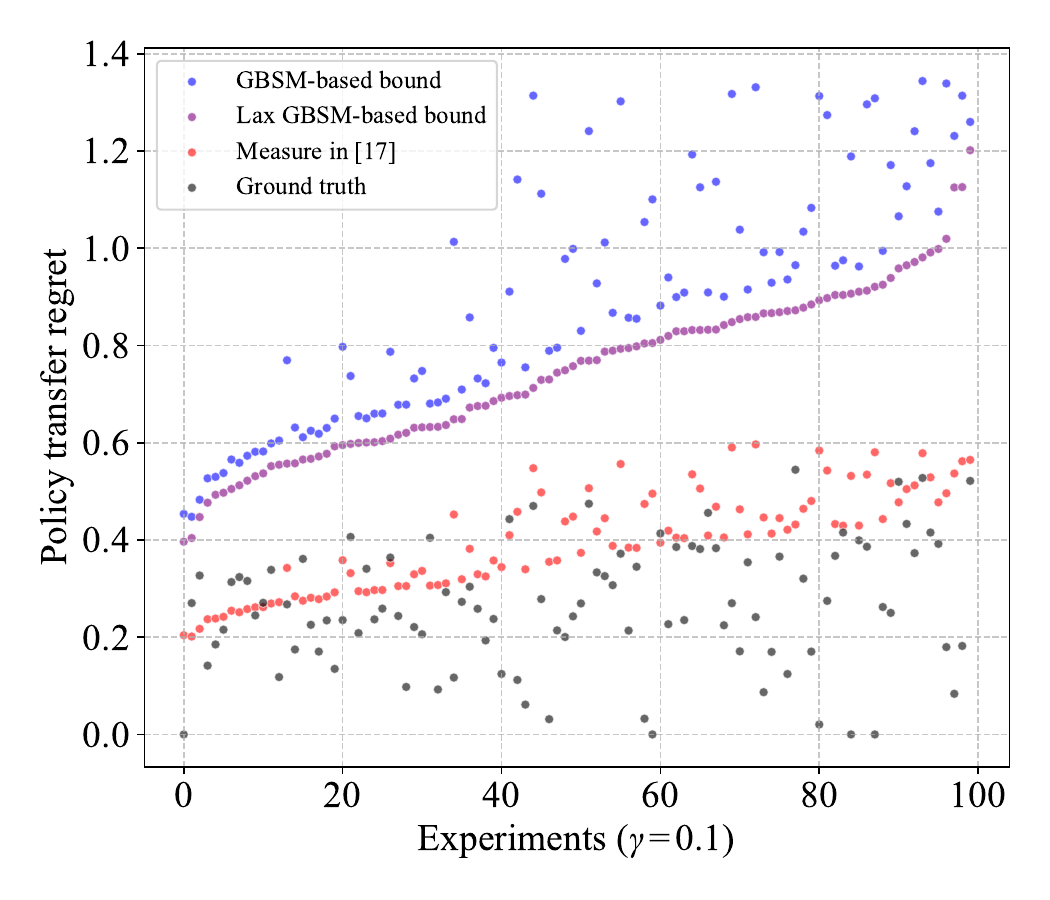}\label{Fig1}}
\subfloat[SSA (aggregation)]{\includegraphics[width=0.25\textwidth]{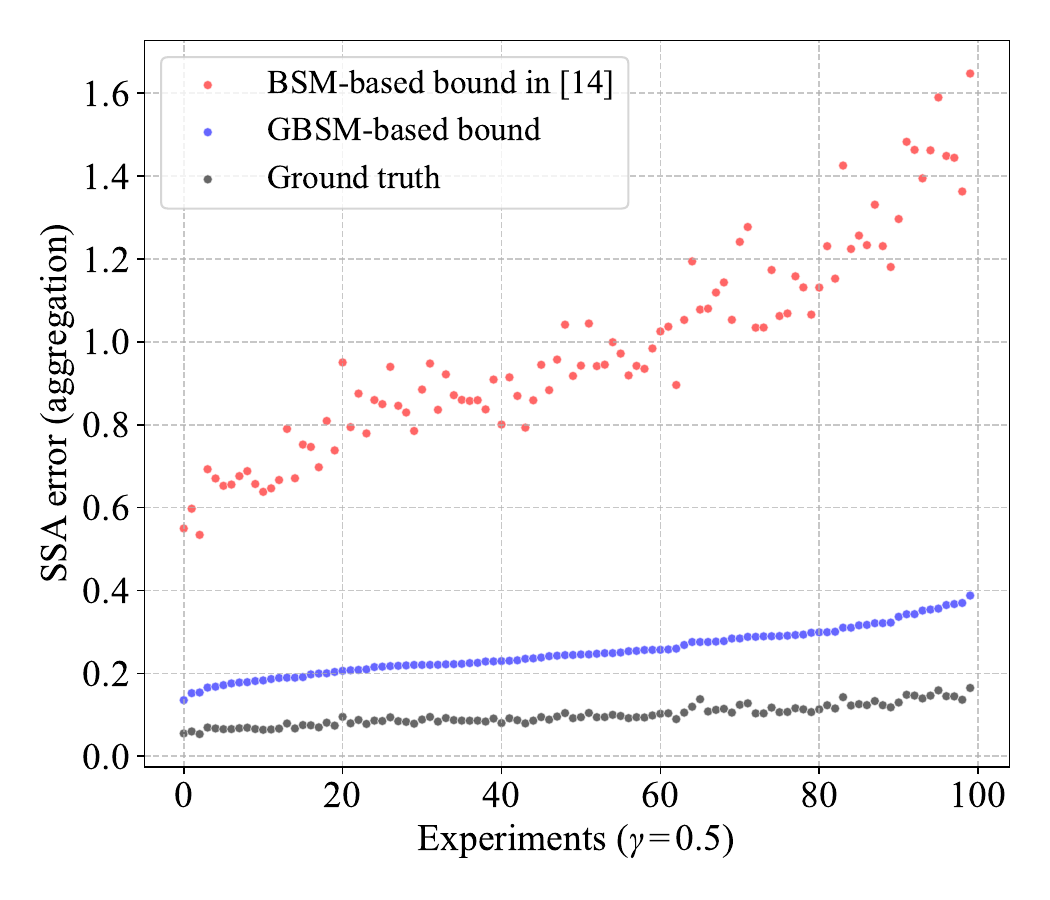}\label{Fig2}}
\subfloat[SSA (estimation)]{\includegraphics[width=0.25\textwidth]{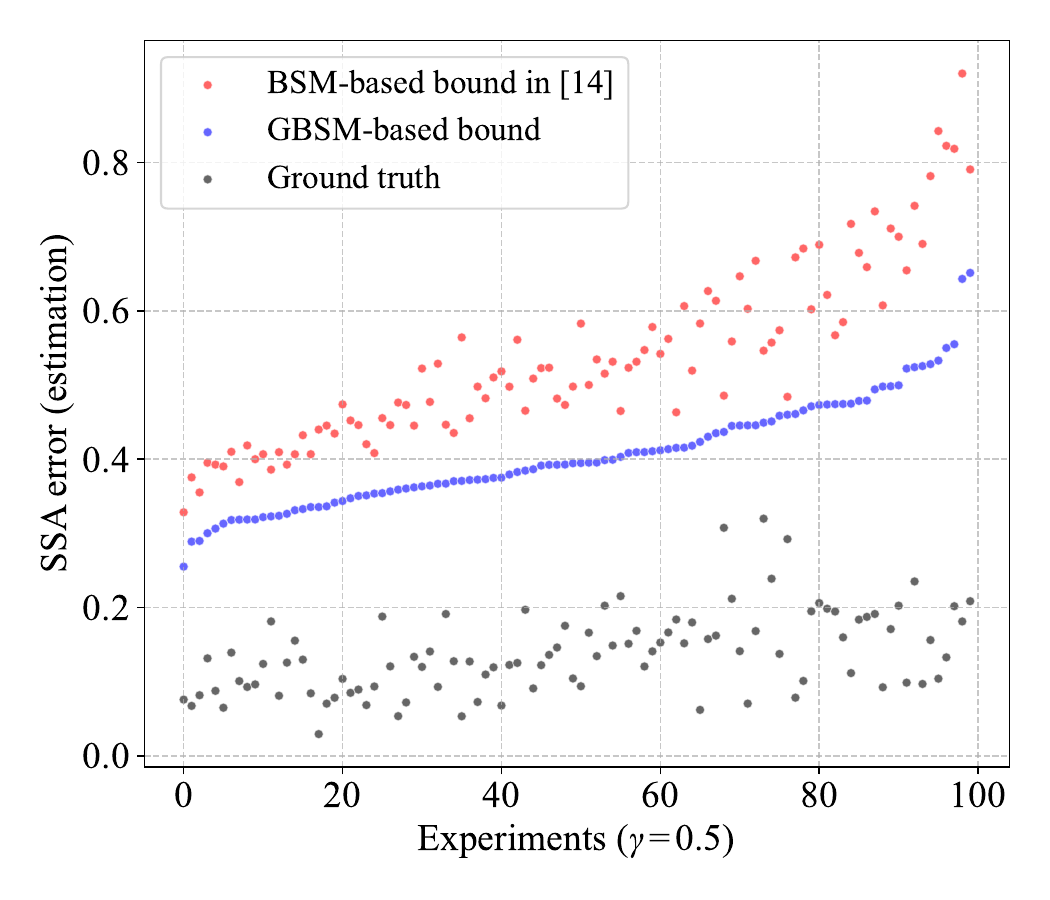}\label{Fig3}}
\subfloat[VFA]{\includegraphics[width=0.25\textwidth]{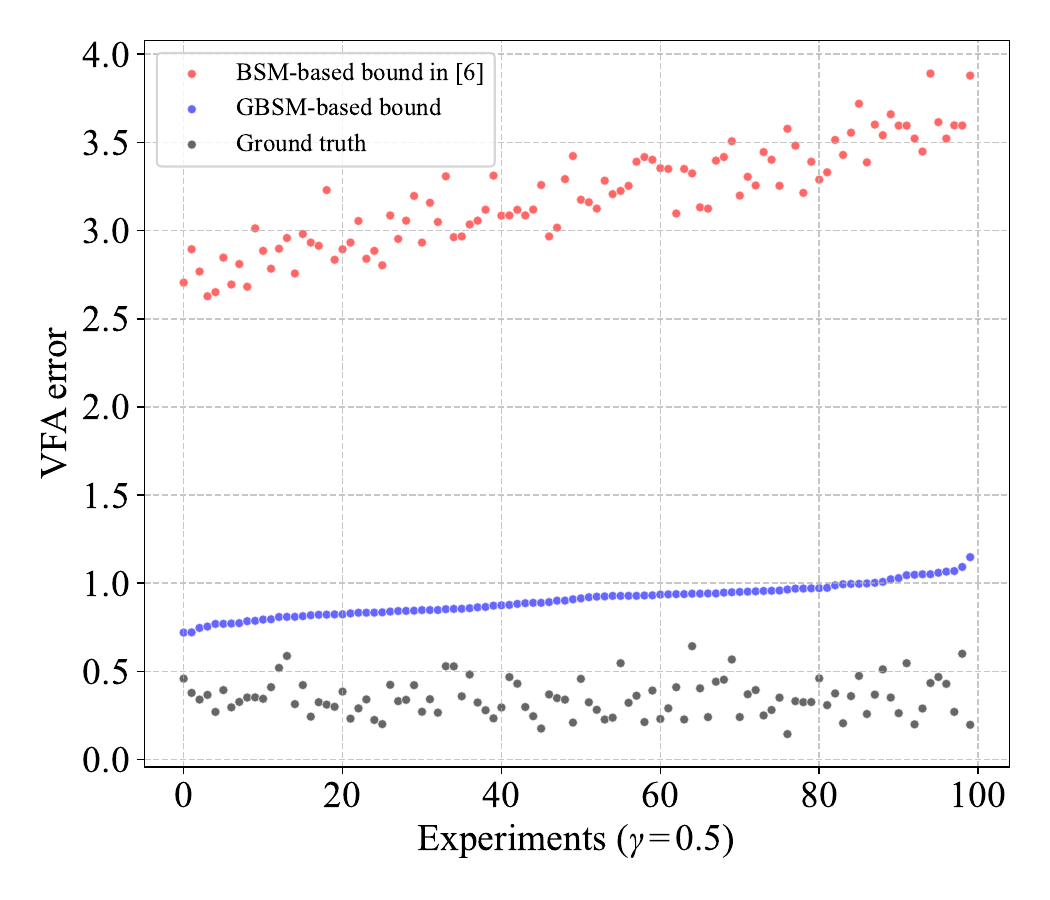}\label{Fig4}}
\caption{Experiments on random Garnet MDPs.}
\end{figure}

\section{Conslusion}
\textbf{Application and limitation} The first application is the sim-to-real policy transfer, where GBSM can be calculated between the simulated MDP and real-world MDP to predict transferred performance and serve as a metric for improving the simulation environment. Meanwhile, the approximation methods could be employed to address the inaccessibility of precise transition probabilities and reward functions in the real world. Another potential application is in multi-task RL, where GBSM can coordinate policy optimization across different MDPs, cluster similar tasks for efficient training, and mitigate gradient interference issues. The limitation mainly lies in the discounted reward formulation in GBSM. In real-world tasks, the goal is typically to maximize the long-term average reward. However, most of the theoretical results in this paper are divided by $1\!-\!\gamma$, and taking $\gamma$ to 1 would yield an infinite result. Investigating metrics tailored for the average-reward MDP is an important and promising direction for future research.

\textbf{Discussion} In this paper, we have formally introduced GBSM and established its fundamental theoretical properties, including GBSM symmetry, inter-MDP triangle inequality, and distance bound on identical state spaces. Leveraging these properties, we provide tighter bounds for policy transfer, state aggregation, and sampling-based estimation of MDPs, compared to the ones derived from BSM. To our knowledge, this is the first rigorous theoretical investigation of GBSM beyond simple definitional adaptation. We believe this work introduces a valuable new tool for multi-MDP analysis. 

\newpage

\bibliographystyle{unsrtnat}

\bibliography{Ref}

\clearpage
\appendix
\section{Proof of Theorem \ref{Theorem:1}}\label{Appendix1}
This section provides a detailed proof of the existence and convergence of GBSM.
\subsection{The Existence of $d^{1\text{-}2}$}
To prove the existence of $d^{1\text{-}2}$, we introduce the Knaster-Tarski fixed-point theorem. Let $(\mathcal{X}, \preceq)$ denote a partial order, which means certain pairs of elements within the set $\mathcal{X}$ are comparable under the homogeneous relation $\preceq$~\citep{tarski1955lattice}. If this partial order has least upper bounds and greatest lower bounds for its arbitrary subsets, it is called a complete lattice. The Knaster-Tarski fixed-point theorem asserts that for a continuous function on a complete lattice, the iterative application of this function to the least element of the lattice converges to a fixed point $\bar{x}$, which satisfies $\bar{x}=f(\bar{x})$. Formally, the theorem is stated as follows.
\begin{lemma}[\textbf{Knaster-Tarski fixed-point theorem}~\citep{tarski1955lattice}]\label{lemma:fixedpoint}  
 If the partial order $(\mathcal{X}, \preceq)$ is a complete lattice and $f\!: \mathcal{X} \rightarrow \mathcal{X}$ is a continuous function. Then, $f$ has a least fixed point, given by 
 \begin{equation}
     \textnormal{fix}(f) = \sqcup_{n \in \mathbb{N}} f^{(n)}(x_0),
 \end{equation}
 where $x_0$ is the least element of $\mathcal{X}$, $\sqcup$ denotes the least upper bound, $f^{(n)}(x_0)=f(f^{(n-1)}(x_0))$, and $f^{(1)}(x_0)=f(x_0)$. Here, the continuity of $f$ is defined such that for any increasing sequence $\{x_n\}$ in $\mathcal{X}$, it satisfies
 \begin{equation}\label{eq:continu}
     f\left(\sqcup_{n \in \mathbb{N}}\left\{x_n\right\}\right)=\sqcup_{n \in \mathbb{N}}\left\{f\left(x_n\right)\right\}.
 \end{equation}
\end{lemma}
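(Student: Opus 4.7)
The plan is to first establish monotonicity of $f$ as a consequence of the stated continuity property, then construct the ascending Kleene chain $x_0 \preceq f(x_0) \preceq f^{(2)}(x_0) \preceq \cdots$, take its least upper bound (which exists because $\mathcal{X}$ is a complete lattice), and verify that this join is the least fixed point. The proof is essentially a direct computation once monotonicity is in hand.

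First I would prove the monotonicity lemma: if $x \preceq y$, apply (\ref{eq:continu}) to the increasing sequence $x \preceq y \preceq y \preceq \cdots$ whose supremum is $y$; continuity then yields $f(y) = \sqcup_n f(x_n) = f(x) \sqcup f(y)$, which forces $f(x) \preceq f(y)$. Second, using $x_0 \preceq f(x_0)$ (since $x_0$ is the least element) and iterating monotonicity, I obtain an ascending chain $x_0 \preceq f(x_0) \preceq f^{(2)}(x_0) \preceq \cdots$, so the join $\bar{x} := \sqcup_{n \in \mathbb{N}} f^{(n)}(x_0)$ exists by completeness of $(\mathcal{X}, \preceq)$. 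Third, I would verify the fixed-point property by applying (\ref{eq:continu}) to this chain:
\begin{equation}
f(\bar{x}) = f\bigl(\sqcup_{n \in \mathbb{N}} f^{(n)}(x_0)\bigr) = \sqcup_{n \in \mathbb{N}} f^{(n+1)}(x_0) = \sqcup_{n \in \mathbb{N}} f^{(n)}(x_0) = \bar{x},
\end{equation}
where the penultimate equality uses that prepending $x_0$ to the chain $\{f^{(n+1)}(x_0)\}_{n \geq 0}$ does not change the supremum. Finally, to show minimality, I would take any other fixed point $y = f(y)$, note $x_0 \preceq y$, and apply monotonicity inductively to conclude $f^{(n)}(x_0) \preceq f^{(n)}(y) = y$ for every $n$, hence $\bar{x} \preceq y$.

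The only subtle step is the derivation of monotonicity from the supremum-preservation form of continuity; everything else is a clean chase through the definitions. I would present the monotonicity argument as an explicit lemma first so that the subsequent induction and the continuity application in the fixed-point verification are unambiguous. No additional hypotheses beyond completeness of the lattice and the stated continuity condition are needed.
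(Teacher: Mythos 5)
Your proof is correct. Note, however, that the paper itself gives no proof of this lemma at all: it is stated as a cited classical result (attributed to Tarski) and used as a black box in Appendix A, so there is no in-paper argument to compare against. What you have written is the standard proof of what is really the Kleene fixed-point theorem --- the version requiring chain-continuity and yielding the $\omega$-iteration formula $\sqcup_n f^{(n)}(x_0)$ --- which is indeed the statement the paper needs (Tarski's theorem proper assumes only monotonicity and does not give the iterative formula). Your derivation of monotonicity from the supremum-preservation form of continuity, via the eventually-constant increasing sequence $x \preceq y \preceq y \preceq \cdots$, is the right way to close the one gap in the hypotheses, and the remaining steps (existence of the join by completeness, the continuity computation $f(\bar{x}) = \sqcup_n f^{(n+1)}(x_0) = \bar{x}$, and minimality by induction against an arbitrary fixed point $y$) are all sound. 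The only cosmetic caveat is that the supremum is taken over a countable chain rather than an arbitrary directed set, so strictly speaking a complete lattice is more than is needed, but that only strengthens the hypotheses and does not affect correctness.
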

Let $\mathcal{D}$ denote the set of all cost functions, which are defined as maps that satisfy $\mathcal{S}_1 \times \mathcal{S}_2 \rightarrow [0,\frac{\bar{R}}{1-\gamma}]$. Equip $\mathcal{D}$ with the usual pointwise ordering: Consider two cost function, say $d$ and $d'\in\mathcal{D}$, denote $d\leq d^{\prime}$ if and only if $d(s,s')\leq d^{\prime}(s,s')$ for any $s\in\mathcal{S}_1$ and $s'\in\mathcal{S}_2$. Then $\mathcal{D}$ forms a complete lattice with the least element $d^{1\text{-}2}_0$, i.e., the constant zero function. Given $s$ and $s'$, we regard the recursive definition in (\ref{eq:dn}) as a function of $d$ and accordingly define $F\!: \mathcal{D} \rightarrow \mathcal{D}$ by
\begin{align}
    F\left(d\,|\,s,s'\right)=\max_{a}& \Big\{\big|R_1(s,a)-R_2(s',a)\big|+\gamma W_1\big(\mathbb{P}_1(\cdot|s,a), \mathbb{P}_2(\cdot|s',a);d \big)\Big\}.\label{defin:F}
\end{align}
Utilizing the Knaster-Tarski fixed-point theorem, the existence of $d^{1\text{-}2}$ is achieved if the continuity of $F$ holds on $\mathcal{D}$. 

We first prove the continuity of the second term in $F$. Define $F_{W_1}\!: \mathcal{D} \rightarrow \mathcal{D}$ by
\begin{equation}\label{defin:Fw}
\begin{aligned}
    F_{W_1}\left(d\,|\,s, s'\right)=&W_1\big(\mathbb{P}_1(\cdot|s,a), \mathbb{P}_2(\cdot|s',a);d \big).
\end{aligned}
\end{equation}
\begin{lemma} \label{lemma:conti:W1}
    $F_{W_1}$ is continuous on $\mathcal{D}$.
\end{lemma}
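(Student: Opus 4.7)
The plan is to verify the continuity condition (\ref{eq:continu}) of Lemma~\ref{lemma:fixedpoint} applied to $F_{W_1}$: for every increasing sequence $\{d_n\}\subset \mathcal{D}$ I must show $F_{W_1}(\sqcup_{n} d_n\,|\,s,s') = \sqcup_{n} F_{W_1}(d_n\,|\,s,s')$ pointwise in $(s,s')$. Let $d^\star := \sqcup_{n\in \mathbb{N}} d_n$; this supremum exists in $\mathcal{D}$ since each $d_n$ is uniformly bounded by $\bar R/(1-\gamma)$. The crucial structural observation, which I would flag first, is that $\mathcal{S}_1\times \mathcal{S}_2$ is finite, so the monotone pointwise convergence $d_n \uparrow d^\star$ is automatically uniform, i.e.\ $\|d^\star - d_n\|_\infty \to 0$.

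Fix $(s,s')$ and abbreviate $P := \mathbb{P}_1(\cdot|s,a)$, $Q := \mathbb{P}_2(\cdot|s',a)$. The easy direction uses monotonicity of $W_1$ in the cost: the feasible polytope of the primal LP (\ref{LP1}) depends only on the marginals $P,Q$ and not on the cost, so $d_n \leq d^\star$ pointwise immediately yields $F_{W_1}(d_n\,|\,s,s') \leq F_{W_1}(d^\star\,|\,s,s')$. Hence $\{F_{W_1}(d_n\,|\,s,s')\}$ is an increasing sequence bounded above by $F_{W_1}(d^\star\,|\,s,s')$, giving $\sqcup_n F_{W_1}(d_n\,|\,s,s') \leq F_{W_1}(d^\star\,|\,s,s')$.

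The main obstacle is the reverse inequality $F_{W_1}(d^\star\,|\,s,s') \leq \sqcup_n F_{W_1}(d_n\,|\,s,s')$. For this I would let $\boldsymbol{\lambda}^n$ be an optimal transport plan for $F_{W_1}(d_n\,|\,s,s')$; because the LP feasible set is determined only by the marginals, $\boldsymbol{\lambda}^n$ remains feasible under cost $d^\star$, so
\begin{align}
F_{W_1}(d^\star\,|\,s,s') &\leq \sum_{i,j} \lambda^n_{i,j}\, d^\star(s_i,s_j) \notag\\
&= \sum_{i,j} \lambda^n_{i,j}\, d_n(s_i,s_j) + \sum_{i,j}\lambda^n_{i,j}\bigl(d^\star(s_i,s_j)-d_n(s_i,s_j)\bigr) \notag\\
&\leq F_{W_1}(d_n\,|\,s,s') + \|d^\star - d_n\|_\infty, \notag
\end{align}
where the last inequality uses $\sum_{i,j}\lambda^n_{i,j} = 1$ since the marginals are probability distributions. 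Combining both directions, $0 \leq F_{W_1}(d^\star\,|\,s,s') - F_{W_1}(d_n\,|\,s,s') \leq \|d^\star - d_n\|_\infty \to 0$, so the increasing sequence $F_{W_1}(d_n\,|\,s,s')$ has $F_{W_1}(d^\star\,|\,s,s')$ as its least upper bound. Since the order on $\mathcal{D}$ is pointwise, this is exactly $F_{W_1}(\sqcup_n d_n) = \sqcup_n F_{W_1}(d_n)$, which is the required continuity. The only subtlety in the whole argument is the cross-LP substitution above, made possible by the cost-independence of the feasible set together with the probabilistic normalization $\sum_{i,j}\lambda^n_{i,j} = 1$.
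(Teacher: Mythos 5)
Your proof is correct and follows essentially the same route as the paper: monotonicity of $W_1$ in the cost for one inequality, and reusing the optimal plan $\boldsymbol{\lambda}^n$ for $d_n$ as a feasible plan under the limit cost, together with $\sum_{i,j}\lambda^n_{i,j}=1$ and uniform convergence on the finite set $\mathcal{S}_1\times\mathcal{S}_2$, for the other. The only cosmetic difference is that the paper establishes monotonicity via the dual LP (\ref{LP2}) while you use the primal; both are valid and your explicit appeal to finiteness for uniform convergence is a point the paper leaves implicit.
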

\begin{proof}
We follow the definition of continuity defined in Lemma~\ref{lemma:fixedpoint}. Let $s_i\in\mathcal{S}_1$ and $s_j\in\mathcal{S}_2$. Regard $F_{W_1}\left(s_i, s_j;d\right)$ as a function of $d$. Without loss of generality, we denote probability distributions $\{\mathbb{P}_1(\cdot|s_i,a), \mathbb{P}_2(\cdot|s_j,a)\}$ as $\{P,Q\}$ for brevity, and let $\rho\leq \rho '$, $\{\rho,\rho '\}\in\mathcal{D} $. Considering the optimal solution $\{\boldsymbol{\mu},\boldsymbol{\nu}\}$ for $W_1(P, Q;\rho)$ in the dual LP in (\ref{LP2}), we have
\begin{equation}
 \mu_i-\nu_j\leq \rho(s_i,s_j)\leq\rho'(s_i,s_j),\ \forall\;i,j,
\end{equation}
 which is derived from the pointwise ordering in $\mathcal{D}$.
Here, for the other $W_1(P, Q;\rho')$, $\{\boldsymbol{\mu},\boldsymbol{\nu}\}$ is a feasible, though not necessarily optimal, solution to the dual LP in (\ref{LP2}). Thus, we have
\begin{align}\label{lemma2ieq1}
\ W_1(P, Q;\rho)&= \ \sum_{i=1}^{|\mathcal{S}_1|} \mu_i P(s_i)- \sum_{j=1}^{|\mathcal{S}_2|}\nu_j Q(s_j) \notag\\
&\leq \ W_1(P, Q;\rho '),\ \forall\rho\leq \rho '.
\end{align}
By such a monotonicity, we have $W_1(P,Q;\rho)\leq W_1(P,Q;\sqcup_{n \in \mathbb{N}} \{\rho_n\}),\ \forall \rho \in \{\rho_n\}$ for any increasing sequence $\{\rho_n\}$ on $\mathcal{D}$. This further implies that $\sqcup_{n \in \mathbb{N}} \{W_1(P,Q;\rho_n)\}\leq W_1(P,Q;\sqcup_{n \in \mathbb{N}} \{\rho_n\})$.

We use the primal LP for the other side. Let $\boldsymbol{\lambda}^n$ denote the optimal solution in (\ref{LP1}) for $W_1(P,Q;\rho_n)$, which also satisfies the conditions for $W_1(P, Q;\sqcup_{n \in \mathbb{N}} \{\rho_n\})$. Define $\epsilon^n_{i,j} = \sqcup_{n \in \mathbb{N}} \{\rho_n\}(s_i,s_j)-\rho_n(s_i,s_j)$, then $\epsilon^n_{i,j} \geq 0 $ and $\lim_{n\rightarrow \infty}\epsilon^n_{i,j} = 0$ due to the monotonicity of the increasing sequence of $\{\rho_n\}$. Then, we have
\begin{align}\label{lemma2ieq2}
 W_1(P,Q;\sqcup_{n \in \mathbb{N}}\{\rho_n\})\overset{(a)}{\leq}&\ \sum_{i=1}^{|\mathcal{S}_1|}\sum_{j=1}^{|\mathcal{S}_2|}\lambda^n_{i,j} \cdot \sqcup_{n \in \mathbb{N}}\{\rho_n\}(s_i,s_j) \notag\\
{=}&\  \sum_{i=1}^{|\mathcal{S}_1|}\sum_{j=1}^{|\mathcal{S}_2|} \lambda^n_{i,j} \rho_n(s_i,s_j)+\sum_{i=1}^{|\mathcal{S}_1|}\sum_{j=1}^{|\mathcal{S}_2|} \lambda^n_{i,j} \epsilon^{n}_{i,j}\notag\\
{=}&\ W_1(P,Q;\rho_n) + \sum_{i,j=1}^{|\mathcal{S}|} \lambda^n_{i,j} \epsilon^{n}_{i,j}\notag\\
{\leq}&\ \sqcup_{n \in \mathbb{N}} \{W_1(P,Q;\rho_n)\} +\max_{i,j} \{{\epsilon}^{n}_{i,j}\}.
\end{align}
Here, step~$(a)$ follows from the fact that $\boldsymbol{\lambda}^n$ is the optimal solution for $W_1(P,Q;\rho_n)$ rather than $W_1(P,Q;\sqcup_{n \in \mathbb{N}}\{\rho_n\})$.
Taking $n\rightarrow \infty$, we have $\sqcup_{n \in \mathbb{N}} \{W_1(P,Q;\rho_n)\}\geq W_1(P,Q;\sqcup_{n \in \mathbb{N}} \{\rho_n\})$. Following from the above two inequalities from both directions, it is readily to get $\sqcup_{n \in \mathbb{N}} \{W_1(P,Q;\rho_n)\}= W_1(P,Q;\sqcup_{n \in \mathbb{N}} \{\rho_n\})$. Thus, for any $i$ and $j$,
\begin{align}
 F_{W_1}\big(\sqcup_{n \in \mathbb{N}}\{\rho_n\}\,|\,s_i, s_j\big) =&\ W_1\big(\mathbb{P}_1(\cdot|s_i,a), \mathbb{P}_2(\cdot|s_j,a);\sqcup_{n \in \mathbb{N}}\{\rho_n\}\big)\notag\\
=&\ \sqcup_{n \in \mathbb{N}} \big\{W_1(\mathbb{P}_1(\cdot|s_i,a), \mathbb{P}_2(\cdot|s_j,a);\rho_n)\big\}\notag\\
=&\ \sqcup_{n \in \mathbb{N}}\big\{ F_{W_1}\left(\rho_n\,|\,s_i, s_j\right)\big\}.
\end{align}
Now that the continuity of $F_{W_1}$ in (\ref{defin:Fw}) on $\mathcal{D}$ is established.
\end{proof}
\noindent Armed with Lemma~\ref{lemma:conti:W1}, we are ready to establish the continuity of $F$ as follows.
\begin{lemma}\label{lemma:F}
    $F$ is continuous on $\mathcal{D}$.
\end{lemma}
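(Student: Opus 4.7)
My plan is to reduce the continuity of $F$ to the already-established continuity of $F_{W_1}$ in Lemma~\ref{lemma:conti:W1} by peeling off the operations in $F$ that do not interact with $d$. Fix $s\in\mathcal{S}_1$, $s'\in\mathcal{S}_2$, and an increasing sequence $\{d_n\}\subseteq\mathcal{D}$. Writing $c_a:=|R_1(s,a)-R_2(s',a)|$, which is constant in $d$, the definition (\ref{defin:F}) becomes $F(d\,|\,s,s')=\max_a\{c_a+\gamma F_{W_1}(d\,|\,s,s')\}$, where the Wasserstein term is taken conditional on the action $a$. For each fixed $a$, the map $d\mapsto c_a+\gamma F_{W_1}(d\,|\,s,s')$ is an order-preserving affine rescaling of $F_{W_1}$ (since $\gamma>0$ and $c_a\geq 0$), and preservation of least upper bounds for monotone sequences under such rescalings is immediate from Lemma~\ref{lemma:conti:W1}.

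After that reduction, the only remaining step is to interchange the finite maximum $\max_{a\in\mathcal{A}}$ with the monotone supremum $\sqcup_n$. I would argue this by the standard two-sided bound on the array $v_a^n:=c_a+\gamma F_{W_1}(d_n\,|\,s,s')$, which is nondecreasing in $n$ for each $a$ by the previous step. The easy direction, $\sqcup_n\max_a v_a^n\leq\max_a\sqcup_n v_a^n$, follows from $\max_a v_a^n\leq\max_a\sqcup_m v_a^m$ at every $n$. For the reverse, for each fixed $a$ and $n$, the inequality $v_a^n\leq\max_{a'}v_{a'}^n\leq\sqcup_m\max_{a'}v_{a'}^m$ gives $\sqcup_n v_a^n\leq\sqcup_n\max_{a'}v_{a'}^n$, and taking the (finite) max over $a$ on the left delivers the missing direction.

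Chaining the scalar/reward step, Lemma~\ref{lemma:conti:W1}, and this max--sup commutation produces $F(\sqcup_n d_n\,|\,s,s')=\sqcup_n F(d_n\,|\,s,s')$ pointwise in $(s,s')$, which is precisely continuity in the Knaster--Tarski sense of (\ref{eq:continu}). I expect the only nontrivial step to be the max--sup interchange: it relies essentially on finiteness of $\mathcal{A}$, so that the outer maximum is attained, together with the componentwise monotonicity of $\{v_a^n\}$ inherited from $\{d_n\}$ through Lemma~\ref{lemma:conti:W1}; it would fail without either ingredient. The remaining pieces are purely algebraic and require no further work.
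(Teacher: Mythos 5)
Your proposal is correct and follows essentially the same route as the paper's proof: apply Lemma~\ref{lemma:conti:W1} to push the supremum inside the Wasserstein term for each action, absorb the reward term and the factor $\gamma$ as an order-preserving affine map, and then commute the finite maximum over $\mathcal{A}$ with the monotone supremum. Your explicit two-sided argument for the max--sup interchange is a step the paper treats as immediate, so if anything your write-up is slightly more careful than the original.
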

\begin{proof}
Considering an arbitrary increasing sequence $\{\rho_n\}$ on $\mathcal{D}$, for any $i$ and $j$, we have
\begin{align}
 &\ F\left(\sqcup_{n\in\mathbb{N}}\{\rho_n\}\,|\,s_i,s_j\right) \notag\\
{=}&\ \max_{a} \left\{\left|R_1(s_i,a)-R_2(s_j,a)\right|+\gamma W_1\left(\mathbb{P}_1(\cdot|s_i,a), \mathbb{P}_2(\cdot|s_j,a) ;\sqcup_{n\in\mathbb{N}}\{\rho_n\}\right)\right\}\notag\\
{=}&\ \max_{a} \left\{\left|R_1(s_i,a)-R_2(s_j,a)\right| + \gamma \sqcup_{n\in\mathbb{N}} \left\{W_1\left(\mathbb{P}_1(\cdot|s_i,a), \mathbb{P}_2(\cdot|s_j,a) ;\rho_n\right)\right\}\right\} \notag\\
{=}&\ \sqcup_{n\in\mathbb{N}}\left\{\max_{a}\left\{\left|R_1(s_i,a)-R_2(s_j,a)\right|+\gamma W_1\left(\mathbb{P}_1(\cdot|s_i,a), \mathbb{P}_2(\cdot|s_j,a) ;\rho_n\right)\right\}\right\} \notag\\
{=}&\ \sqcup_{n\in\mathbb{N}}\{F(\rho_n\,|\,s_i,s_j)\}.%=(\sqcup_{n\in\mathbb{N}}\{F(\rho_n)\})(s_i,s_j).
\end{align}
\end{proof}
Now that the existence of $d^{1\text{-}2}$ is established by using Lemma~\ref{lemma:fixedpoint} and Lemma~\ref{lemma:F}.

\subsection{The Convergence of $d_n^{1\text{-}2}$ to $d^{1\text{-}2}$}
Due to the continuity of $F$ and using the induction starting from $d^{1\text{-}2}_0\leq d^{1\text{-}2}_1$, $\{d^{1\text{-}2}_n\}$ forms an increasing sequence on $\mathcal{D}$. Given that $d^{1\text{-}2}=\sqcup_{n \in \mathbb{N}} F^{(n)}(d^{1\text{-}2}_0)$, we have $d^{1\text{-}2}\geq d^{1\text{-}2}_n$ for any $n$. Also,
\begin{align}
    d^{1\text{-}2}(s_i, s_j)=&\  \max_{a} \Big\{\big| R_1(s_i,a)-R_2(s_j,a) \big| +\gamma W_1\big(\mathbb{P}_1(\cdot|s_i,a), \mathbb{P}_2(\cdot|s_j,a) ;d^{1\text{-}2}\big)\Big\} \notag\\
    \leq &\  \bar{R}+ \gamma\max_{i,j}\{d^{1\text{-}2}(s_i, s_j)\}\notag\\
    \Rightarrow \max_{i,j}\{d^{1\text{-}2}(s_i, s_j)\}\leq &\ \bar{R}/(1-\gamma),\  \forall\ (s_i,s_j)\in \mathcal{S}_1\times\mathcal{S}_2.
\end{align}

We begin with a simple inequality for the Wasserstein distance before proving the convergence of GBSM. Let $\boldsymbol{\lambda}^n$ denote the optimal solution for $W_1(P,Q;d^{1\text{-}2}_n)$, then for any $d^{1\text{-}2}_n$
\begin{align}\label{appenBineq}
         W_1\big(P,Q;d^{1\text{-}2}\big) \leq &\  \sum_{i=1}^{|\mathcal{S}_1|}\sum_{j=1}^{|\mathcal{S}_2|} \lambda_{i,j}^n d^{1\text{-}2}(s_i, s_j)\notag\\
        =&\  \sum_{i=1}^{|\mathcal{S}_1|}\sum_{j=1}^{|\mathcal{S}_2|} \lambda_{i,j}^n \big(d^{1\text{-}2}(s_i, s_j)-d^{1\text{-}2}_n(s_i, s_j)+d^{1\text{-}2}_n(s_i, s_j) \big) \notag\\
        \leq&\  \max_{i,j} \big\{d^{1\text{-}2}(s_i, s_j)-d^{1\text{-}2}_n(s_i, s_j)\big\}+ W_1\big(P,Q;d^{1\text{-}2}_n\big).
\end{align}
The first inequality follows from the fact that $\boldsymbol{\lambda}^n$ is the optimal solution for $W_1(P,Q;d^{1\text{-}2}_n)$ rather than $W_1(P,Q;d^{1\text{-}2})$.

Now we employ the mathematical induction. For the base case, we have
\begin{align}\label{AppenB:eq1}
    \ &d^{1\text{-}2}(s, s')-d^{1\text{-}2}_1(s, s') \notag\\
    =\ & \max_{a} \left\{\left| R_1(s,a)-R_2(s',a) \right| + \gamma W_1\left(\mathbb{P}_1(\cdot|s,a), \mathbb{P}_2(\cdot|s',a) ;d^{1\text{-}2}\right)\right\}\notag\\
    &\qquad-\max_{a} \left\{\left|R_1(s,a)-R_2(s',a)\right|\right\}\notag\\
    \leq\ & \max_{a} \left\{\left|R_1(s,a)-R_2(s',a)\right|\right\} + \gamma \max_{a} \left\{ W_1\left(\mathbb{P}_1(\cdot|s,a), \mathbb{P}_2(\cdot|s',a) ;d^{1\text{-}2}\right)\right\}\notag\\
    &\qquad-\max_{a} \left\{\left|R_1(s,a)-R_2(s',a)\right|\right\}\notag\\
    =\  & \gamma \max_{a} \left\{W_1\left(\mathbb{P}_1(\cdot|s,a), \mathbb{P}_2(\cdot|s',a) ;d^{1\text{-}2}\right)\right\}\notag\\
    \leq\  & \gamma \max_{s,s'} \left\{d^{1\text{-}2}(s,s')\right\} = \gamma \bar{R}/(1-\gamma),\  \forall\ (s,s')\in \mathcal{S}_1\times\mathcal{S}_2.
\end{align}
By the induction hypothesis, we assume that for an arbitrary $n$,
\begin{align}\label{AppenB:eq2}
    &d^{1\text{-}2}(s, s')-d^{1\text{-}2}_n(s, s') \leq \gamma^n \bar{R}/(1-\gamma),\  \forall\ (s,s')\in \mathcal{S}_1\times\mathcal{S}_2.
\end{align}
Then we have
\begin{align}\label{AppenB:eq3}
    &\ d^{1\text{-}2}(s, s')-d^{1\text{-}2}_{n+1}(s, s') \notag\\
    {=}\ & \max_{a} \left\{ \left|R_1(s,a)-R_2(s',a)\right| +\gamma W_1\left(\mathbb{P}_1(\cdot|s,a), \mathbb{P}_2(\cdot|s',a) ;d^{1\text{-}2}\right) \right\} \notag\\
    &\qquad-\max_{a} \left\{ \left|R_1(s,a)-R_2(s',a)\right| +\gamma W_1\left(\mathbb{P}_1(\cdot|s,a), \mathbb{P}_2(\cdot|s',a) ;d^{1\text{-}2}_n\right) \right\}\notag\\
    {\leq}\ & \max_{a} \left\{ \left(\left|R_1(s,a)-R_2(s',a)\right| +\gamma W_1\left(\mathbb{P}_1(\cdot|s,a), \mathbb{P}_2(\cdot|s',a) ;d^{1\text{-}2}\right)\right)\right.\notag\\
    &\qquad\left.- \left(\left|R_1(s,a)-R_2(s',a)\right| +\gamma W_1\left(\mathbb{P}_1(\cdot|s,a), \mathbb{P}_2(\cdot|s',a) ;d^{1\text{-}2}_n\right)\right) \right\}\notag\\
    = &\  \gamma \max_{a} \left\{W_1\left(\mathbb{P}_1(\cdot|s,a), \mathbb{P}_2(\cdot|s',a) ;d^{1\text{-}2}\right)- W_1\left(\mathbb{P}_1(\cdot|s,a), \mathbb{P}_2(\cdot|s',a) ;d^{1\text{-}2}_n\right)\right\}\notag\\
    \overset{(a)}{\leq }&\  \gamma \max_{a} \big\{ \max_{s,s'} \left\{d^{1\text{-}2}(s, s')-d^{1\text{-}2}_n(s, s')\right\} +W_1\left(\mathbb{P}_1(\cdot|s,a), \mathbb{P}_2(\cdot|s',a) ;d^{1\text{-}2}_n\right).\notag\\
    &\ \qquad\quad- W_1\left(\mathbb{P}_1(\cdot|s,a), \mathbb{P}_2(\cdot|s',a) ;d^{1\text{-}2}_n\right)\big\}\notag\\
    = &\  \gamma \max_{s,s'} \left\{d^{1\text{-}2}(s, s')-d^{1\text{-}2}_n(s, s')\right\} \leq \gamma^{n+1} \bar{R}/(1-\gamma),\  \forall\ (s,s')\in \mathcal{S}_1\times\mathcal{S}_2.
\end{align}
Here, step~$(a)$ uses (\ref{appenBineq}). Following from (\ref{AppenB:eq1})-(\ref{AppenB:eq3}), $d^{1\text{-}2}(s, s')-d^{1\text{-}2}_n(s, s')\leq \gamma^n \bar{R}/(1-\gamma)$ holds for all $n\in\mathbb{N}$. 

%%%%%%%%%%%%%%%%%%%%%%%%%%%%%%%%%%%%%%%%%%%%%%%%%%%%%%%%%%%%

\section{Proof of Theorem \ref{Theorem:2}}\label{Appendix2}
This proves the optimal value difference bound between MDPs by induction.
For the base case, we have
        \begin{align}\label{corol1-1}
            |V_1^{(1)} (s_i)-V_2^{(1)} (s_j)| =&\  |\max_{a}R_1(s_i,a) - \max_{a}R_2(s_j,a)| \notag\\
            \leq &\  \max_{a} |R_1(s_i,a) - R_2(s_j,a)|\notag\\
            =&\ d^{1\text{-}2}_1(s_i,s_j),\  \forall\ (s_i,s_j)\in \mathcal{S}_1\times\mathcal{S}_2.
        \end{align}
    By the induction hypothesis, we assume that for an arbitrary $n$,
    \begin{align}\label{vv_ineq}
        V_1^{(n)} (s_i)-V_2^{(n)} (s_j) \leq&\ |V_1^{(n)} (s_i)-V_2^{(n)} (s_j)|\notag\\
        \leq&\ d^{1\text{-}2}_n(s_i,s_j),\ \forall\ (s_i,s_j)\in \mathcal{S}_1\times\mathcal{S}_2.
    \end{align}
    Then the induction follows
\begin{align}\label{corol1-3}
           &\ \left|V_1^{(n+1)} (s_i)-V_2^{(n+1)} (s_j)\right| \notag\\
           {=} &\ \left|\max_{a}\left\{R_1(s_i,a)+\gamma\sum_{k=1}^{|\mathcal{S}_1|}\mathbb{P}_1(s_k|s_i,a)V_1^{(n)}(s_k)\right\} \right.\notag\\
    &\qquad\left.-\max_{a}\left\{R_2(s_j,a)+\gamma\sum_{k=1}^{|\mathcal{S}_2|}\mathbb{P}_2(s_k|s_j,a)V_2^{(n)}(s_k)\right\}\right|\notag\\
           \leq&\ \max_{a} \left\{\left|\left(R_1(s_i,a) +\gamma \sum_{k=1}^{|\mathcal{S}_1|} \mathbb{P} (s_k|s_i,a) V_1^{(n)}(s_k)\right) \right.\right.\notag\\
    &\qquad\left.\left. - \left(R_2 (s_j,a) + \gamma \sum_{k=1}^{|\mathcal{S}_2|}\mathbb{P}_2(s_k|s_j,a) V_2^{(n)}(s_k)\right)\right|\right\}\notag\\
           {\leq} &\ \max_{a} \left\{\bigg|R_1(s_i,a) - R_2 (s_j,a)\bigg|  +\gamma\bigg|\sum\limits_{k=1}^{|\mathcal{S}_1|} \mathbb{P}_1 (s_k|s_i,a) V_1^{(n)}(s_k)- \sum\limits_{k=1}^{|\mathcal{S}_2|} \mathbb{P}_2(s_k|s_j,a) V_2^{(n)}(s_k)\bigg|\right\}\notag\\
           \overset{(a)}{\leq}&\ \max_{a} \left\{\left|R_1(s_i,a)- R_2 (s_j,a)\right| +\gamma W_1\left(\mathbb{P}_1(\cdot|s_i,a),\mathbb{P}_2(\cdot|s_j,a);d^{1\text{-}2}_n\right)\right\}\notag\\
           {=}&\ d^{1\text{-}2}_{n+1}(s_i,s_j),\ \forall\ (s_i,s_j)\in \mathcal{S}_1\times\mathcal{S}_2.
        \end{align}
    Here, steps~$(a)$ follows from the fact that $\big( V_1^{(n)}(s_k) \big)_{k=1}^{|\mathcal{S}_1|}$ and $\big( V_2^{(n)}(s_k) \big)_{k=1}^{|\mathcal{S}_2|}$ form a feasible, but not necessarily the optimal, solution to the dual LP in (\ref{LP2}) for $W_1\big(\mathbb{P}_1(\cdot|s_i,a),\mathbb{P}_2(\cdot|s_j,a);d^{1\text{-}2}_n\big)$.
    
   Now from (\ref{corol1-1})-(\ref{corol1-3}), we have $|V_1^{(n)}(s)-V_2^{(n)}(s')|\leq d^{1\text{-}2}_{n}(s,s'),\ \forall\ (s,s')\in \mathcal{S}_1\times\mathcal{S}_2,\ \forall n\in\mathbb{N}$. Taking $n\rightarrow \infty$ yields the desired result.

\section{Proof of Theorem \ref{Theorem:6}}\label{Appendix3}
This section provides a detailed proof of the regret bound for policy $\pi$ transferred from $\mathcal{M}_1$ to $\mathcal{M}_2$.
By the triangle inequality, for any state $s_j\in\mathcal{S}_2$ and $s_i=f(s_j)\in\mathcal{S}_1$, we have
\begin{align}\label{eq33}
    |V_2^*(s_j)-V_2^{\pi}(s_j)|\leq\  |V_2^*(s_j)-V_1^*(s_i)| +|V_1^*(s_i)-V_1^{\pi}(s_i)|  +|V_1^{\pi}(s_i)-V_2^{\pi}(s_j)|.
\end{align}
Within the right-hand side of this inequality, the first summation term $|V_2^*(s_j)-V_1^*(s_i)|$ is upper bounded by ${d}^{1\text{-}2}(s_i,s_j)$ according to Theorem \ref{Theorem:2}, and $|V_1^*(s_i)-V_1^{\pi}(s_i)|$ is upper bounded by $\max_{s\in\mathcal{S}_1}|V_1^*(s)-V_1^{\pi}(s)|$. For the last term, we have
\begin{align}
    &\ \left|V_1^\pi(s_i)-V_2^{\pi}(s_j)\right|\notag\\
    {=}&\ \left| \sum\limits_{a=1}^{|\mathcal{A}|} \pi(a|s_i)\bigg(R_1(s_i,a)+\gamma\sum\limits_{k=1}^{|\mathcal{S}_1|} \mathbb{P}_1(s_k|s_i,a) V_1^\pi(s_k)\bigg) \right. \notag\\
    &\ \left. -\sum\limits_{a=1}^{|\mathcal{A}|} \pi(a|f(s_j)) \bigg(R_2(s_j,a)+\gamma\sum\limits_{k=1}^{|\mathcal{S}_2|} \mathbb{P}_2(s_k|s_j,a) V_2^\pi(s_k)\bigg) \right|\notag\\
    {\leq} &\ \sum\limits_{a=1}^{|\mathcal{A}|} \pi(a|s_i) \left(\bigg| R_1 (s_i,a)\!-\!R_2(s_j,a)\bigg| \!+\!\gamma\bigg|\sum\limits_{k=1}^{|\mathcal{S}_1|}\mathbb{P}_1(s_k|s_i,a) V_1^\pi(s_k) \!-\! \sum\limits_{k=1}^{|\mathcal{S}_2|} \mathbb{P}_2(s_k|s_j,a) V_2^\pi(s_k) \bigg|\right)\notag\\
    {\leq} &\ \sum\limits_{a=1}^{|\mathcal{A}|} \pi(a|s_i)\left(\bigg| R_1 (s_i,a)\!-\!R_2(s_j,a)\bigg| \!+\!\gamma\bigg|\sum\limits_{k=1}^{|\mathcal{S}_1|}\mathbb{P}_1(s_k|s_i,a) V_1^*(s_k) \!-\! \sum\limits_{k=1}^{|\mathcal{S}_2|} \mathbb{P}_2(s_k|s_j,a) V_2^*(s_k) \bigg|\right) \notag\\
    &\ +\gamma \sum\limits_{a=1}^{|\mathcal{A}|} \pi(a|s_i) \left|\sum\limits_{k=1}^{|\mathcal{S}_1|} \mathbb{P}_1(s_k|s_i,a) (V_1^*(s_k)- V_1^\pi(s_k))\right|\notag\\
    &\  +\gamma \sum\limits_{a=1}^{|\mathcal{A}|} \pi(a|s_i) \left|\sum\limits_{k=1}^{|\mathcal{S}_2|} \mathbb{P}_2(s_k|s_j,a) (V_2^*(s_k)-  V_2^\pi(s_k) )\right| \notag\\
    \leq &\ \max_{a} \left\{\bigg| R_1 (s_i,a) -R_2(s_j,a)\bigg|+\gamma\bigg|\sum\limits_{k=1}^{|\mathcal{S}_1|} \mathbb{P}_1(s_k|s_i,a) V_1^*(s_k)-\sum\limits_{k=1}^{|\mathcal{S}_2|} (\mathbb{P} (s_k|s_j,a) V_2^*(s_k)\bigg|\right\} \notag\\
    &\  + \gamma \max_{s\in\mathcal{S}_1} \left|V_1^*(s) - V_1^\pi(s)  \right|+ \gamma \max_{s\in\mathcal{S}_2} \left|V_2^*(s) - V_2^\pi(s)\right| \notag\\
    \overset{(a)}{\leq} &\max_{a} \left\{ \left| R_1 (s_i,a)- R_2(s_j,a)\right|  +\gamma W_1\left(\mathbb{P}_1(\cdot|s_i,a), \mathbb{P}_2(\cdot|s_j,a);d^{1\text{-}2}\right)\right\} \notag\\
    &\ + \gamma \max_{s\in\mathcal{S}_1} \left|V_1^*(s) - V_1^\pi(s)  \right|+ \gamma \max_{s\in\mathcal{S}_2} \left|V_2^*(s) - V_2^\pi(s)\right|\notag\\
    =&\ d^{1\text{-}2}(s_i,s_j) + \gamma \max_{s\in\mathcal{S}_1} \left|V_1^*(s) - V_1^\pi(s)  \right|+ \gamma \max_{s\in\mathcal{S}_2} \left|V_2^*(s) - V_2^\pi(s)\right|.
\end{align}
Here, step~$(a)$ stems from the fact that, according to Theorem \ref{Theorem:2}, $\big( V_1^*(s_k)\big)_{k=1}^{|\mathcal{S}_1|}$ and $\big( V_2^*(s_k) \big)_{k=1}^{|\mathcal{S}_2|}$ form a feasible, but not necessarily the optimal, solution to the dual LP in (\ref{LP2}) for $W_1\big(\mathbb{P}_1(\cdot|s_i,a), \mathbb{P}_2(\cdot|s_j,a);d^{1\text{-}2}\big)$. Combining the above inequalities on all three summation terms in (\ref{eq33}) and taking the maximum of both sides, we have
\begin{align}
\max_{s'\in\mathcal{S}_2}|V_2^*(s')-&V_2^{\pi}(s')| \leq \ \underbrace{\max_{s'\in\mathcal{S}_2}d^{1\text{-}2}(f(s'),s')}_{\text{1st term}}+ \underbrace{\max_{s\in\mathcal{S}_1}|V_1^*(s)-V_1^{\pi}(s)|}_{\text{2nd term}} \notag\\
   +  &\underbrace{\max_{s'\in\mathcal{S}_2}d^{1\text{-}2}(f(s'),s') + \gamma \max_{s\in\mathcal{S}_1} \Big|V_1^*(s) - V_1^\pi(s)  \Big| + \gamma \max_{s'\in\mathcal{S}_2} \Big|V_2^*(s') - V_2^\pi(s')\Big|}_{\text{3rd term}}\notag\\
    \leq\ 2\max_{s\in\mathcal{S}_2}d^{1\text{-}2}&(f(s),s)+(1+\gamma)\max_{s\in\mathcal{S}_1}|V_1^*(s)-V_1^{\pi}(s)|+\gamma \max_{s'\in\mathcal{S}_2} \Big|V_2^*(s') - V_2^\pi(s')\Big|.
\end{align}
Rearranging the inequality yields the desired result.

\section{Proofs of Lax GBSM Properties}\label{Appendix5}
\begin{defi}[\textbf{Lax generalized bisimulation metric}]
Given two MDPs $\mathcal{M}_1=\langle \mathcal{S}_1, \mathcal{A}_1, \mathbb{P}_1, R_1, \gamma\rangle$ and $\mathcal{M}_2=\langle \mathcal{S}_2, \mathcal{A}_2, \mathbb{P}_2, R_2, \gamma\rangle$, we first define an intermediate metric as
\begin{equation}
    \delta (d)((s,a), (s',a'))=|R_1(s,a)-R_2(s',a')| +\gamma W_1(\mathbb{P}_1(\cdot|s,a), \mathbb{P}_2(\cdot|s',a') ;d),
\end{equation}
and define the lax function as 
\begin{equation}
    F_\textnormal{lax}(d|s,s')=H(X_s,X'_{s'};\delta (d)),
\end{equation}
where $X_s=\{(s,a)|a\in\mathcal{A}_1\}$, $X'_{s'}=\{(s',a')|a'\in\mathcal{A}_2\}$, and $H$ is the Hausdorff metric defined by $H(X,Y;d)=\max\big\{\sup_{x\in X} \inf_{y\in Y}d(x,y),\sup_{y \in Y}\inf_{x\in X}d(x,y)\big\}$. Iterating from $d_{\textnormal{lax},0}^{1\textnormal{-}2}(s,s')=0$ and $d_{\textnormal{lax},n+1}^{1\textnormal{-}2}= F_\textnormal{lax}(d_{\textnormal{lax},n}^{1\textnormal{-}2}|s,s')$, $d_{\textnormal{lax},n}^{1\textnormal{-}2}$ converges to a similar fixed point $d_\textnormal{lax}^{1\textnormal{-}2}$ with $n\rightarrow\infty$.
\end{defi}
The proof of the existence and convergence of Lax GBSM is quite similar to the one for GBSM in Appendix~\ref{Appendix1}. We omit it and mainly prove its core property, i.e., the optimal value difference bound between MDPs, and its tightness compared to GBSM in this section.
\begin{theorem}[\textbf{Lax GBSM optimal value difference bound}]
Let $V_1^*$ and $V_2^*$ denote the optimal value functions in $\mathcal{M}_1$ and $\mathcal{M}_2$, respectively. Then lax GBSM provides an upper bound for the difference between the optimal values for any state pair $(s,s')\in \mathcal{S}_1\times\mathcal{S}_2$:
    \begin{align}
        |V^*_1(s)-V^*_2(s')|\leq d_\textnormal{lax}^{1\textnormal{-}2}(s,s').
    \end{align}
\end{theorem}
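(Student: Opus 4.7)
The plan is to adapt the induction in the proof of Theorem~\ref{Theorem:2} (Appendix~\ref{Appendix2}) to the lax setting, using the two-sided sup--inf structure of the Hausdorff metric to accommodate the differing action spaces. I introduce the value iteration $V_i^{(n+1)}(s)=\max_{a\in\mathcal{A}_i}\{R_i(s,a)+\gamma\mathbb{E}_{\tilde s\sim\mathbb{P}_i(\cdot|s,a)}[V_i^{(n)}(\tilde s)]\}$ with $V_i^{(0)}\equiv 0$, so that $V_i^{(n)}\to V_i^*$, and prove by induction on $n$ that $|V_1^{(n)}(s)-V_2^{(n)}(s')|\leq d_{\textnormal{lax},n}^{1\textnormal{-}2}(s,s')$ for all $(s,s')\in\mathcal{S}_1\times\mathcal{S}_2$. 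Letting $n\to\infty$ and invoking the uniform convergence $d_{\textnormal{lax},n}^{1\textnormal{-}2}\to d_\textnormal{lax}^{1\textnormal{-}2}$ then yields the claim.

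For the base case $n=1$, I fix the maximizer $a^*\in\mathcal{A}_1$ of $V_1^{(1)}(s)=\max_a R_1(s,a)$ and note that for every $a'\in\mathcal{A}_2$ one has $V_1^{(1)}(s)-V_2^{(1)}(s')\leq R_1(s,a^*)-R_2(s',a')\leq|R_1(s,a^*)-R_2(s',a')|=\delta(0)((s,a^*),(s',a'))$. Minimizing over $a'$ and then maximizing over $a^*$ recovers the first sup--inf branch of the Hausdorff metric; the symmetric choice for $V_2^{(1)}(s')-V_1^{(1)}(s)$ recovers the other branch. Hence $|V_1^{(1)}(s)-V_2^{(1)}(s')|\leq H(X_s,X'_{s'};\delta(0))=d_{\textnormal{lax},1}^{1\textnormal{-}2}(s,s')$.

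For the inductive step, assume the bound holds at level $n$, and without loss of generality suppose $V_1^{(n+1)}(s)\geq V_2^{(n+1)}(s')$. Let $a^*\in\mathcal{A}_1$ be the maximizer in $V_1^{(n+1)}(s)$ and pick $a'^*\in\mathcal{A}_2$ attaining $\inf_{a'}\delta(d_{\textnormal{lax},n}^{1\textnormal{-}2})((s,a^*),(s',a'))$ (the infimum is attained because $\mathcal{A}_2$ is finite). By the suboptimality of $a'^*$ in $V_2^{(n+1)}(s')$, the difference $V_1^{(n+1)}(s)-V_2^{(n+1)}(s')$ is bounded by the Q-value gap $(R_1(s,a^*)-R_2(s',a'^*))+\gamma\bigl(\sum_k\mathbb{P}_1(s_k|s,a^*)V_1^{(n)}(s_k)-\sum_k\mathbb{P}_2(s_k|s',a'^*)V_2^{(n)}(s_k)\bigr)$. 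By the induction hypothesis, $\bigl((V_1^{(n)}(s_k))_k,(V_2^{(n)}(s_k))_k\bigr)$ is feasible for the dual LP~(\ref{LP2}) with cost function $d_{\textnormal{lax},n}^{1\textnormal{-}2}$, so exactly the LP-duality step from Theorem~\ref{Theorem:2} bounds this gap by $\delta(d_{\textnormal{lax},n}^{1\textnormal{-}2})((s,a^*),(s',a'^*))$. By the choices of $a'^*$ and $a^*$, this is in turn dominated by $\sup_a\inf_{a'}\delta(d_{\textnormal{lax},n}^{1\textnormal{-}2})((s,a),(s',a'))\leq H(X_s,X'_{s'};\delta(d_{\textnormal{lax},n}^{1\textnormal{-}2}))=d_{\textnormal{lax},n+1}^{1\textnormal{-}2}(s,s')$. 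The case $V_2^{(n+1)}(s')>V_1^{(n+1)}(s)$ is handled by the symmetric branch of the Hausdorff metric.

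The main obstacle is coupling the sign of $V_1^{(n+1)}(s)-V_2^{(n+1)}(s')$ to the correct half of the Hausdorff metric: the side with the larger value must supply the outer max action, while the opposite side supplies the inner min action. Once this pairing is correctly made and the finiteness of $\mathcal{A}_1,\mathcal{A}_2$ is used to attain the relevant infima, the remainder of the estimate reduces to the Wasserstein LP-duality argument already developed for Theorem~\ref{Theorem:2}, so no new analytic machinery is required.
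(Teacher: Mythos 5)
Your proof is correct and follows essentially the same route as the paper's: induction on the value-iteration sequence, a without-loss-of-generality reduction to the branch of the Hausdorff metric on the side of the larger value, and the dual-LP feasibility of $\bigl(V_1^{(n)},V_2^{(n)}\bigr)$ to control the transition term. The only cosmetic difference is that you select the inner action as the minimizer of $\delta(d_{\textnormal{lax},n}^{1\textnormal{-}2})$ and invoke the suboptimality of that action in $V_2^{(n+1)}$, whereas the paper fixes the maximizer of $V_2^{(n+1)}$ and observes that the resulting absolute difference equals the minimum over $a'$; the two are interchangeable.
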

\begin{proof}
For the base case, we have
        \begin{align}
            |V_1^{(0)} (s_i)-V_2^{(0)} (s_j)| = d_\textnormal{lax,0}^{1\textnormal{-}2}(s_i,s_j)=0,\ \forall\ (s_i,s_j)\in \mathcal{S}_1\times\mathcal{S}_2.
        \end{align}
    By the induction hypothesis, we assume that for an arbitrary $n$,
    \begin{align}
        |V_1^{(n)} (s_i)-V_2^{(n)} (s_j)| \leq&\ d^{1\text{-}2}_\textnormal{lax,n}(s_i,s_j),\ \forall\ (s_i,s_j)\in \mathcal{S}_1\times\mathcal{S}_2.
    \end{align}
    Without loss of generality, assume that $V_1^{(n+1)} (s_i)\geq V_2^{(n+1)} (s_j)$, and the induction follows
\begin{align}
           &\ \left|V_1^{(n+1)} (s_i)-V_2^{(n+1)} (s_j)\right| \notag\\
           {=} &\ \left|\max_{a}\left\{R_1(s_i,a)+\gamma\sum_{k=1}^{|\mathcal{S}_1|}\mathbb{P}_1(s_k|s_i,a)V_1^{(n)}(s_k)\right\} \right.\notag\\
    &\qquad\left.-\max_{a'}\left\{R_2(s_j,a')+\gamma\sum_{k=1}^{|\mathcal{S}_2|}\mathbb{P}_2(s_k|s_j,a')V_2^{(n)}(s_k)\right\}\right|\notag\\
     {=} &\ \left|\left\{R_1(s_i,a_{\max})+\gamma\sum_{k=1}^{|\mathcal{S}_1|}\mathbb{P}_1(s_k|s_i,a_{\max})V_1^{(n)}(s_k)\right\} \right.\notag\\
    &\qquad\left.-\left\{R_2(s_j,a'_{\max})+\gamma\sum_{k=1}^{|\mathcal{S}_2|}\mathbb{P}_2(s_k|s_j,a'_{\max})V_2^{(n)}(s_k)\right\}\right|\notag\\
           =&\ \min_{a'} \left|\left\{R_1(s_i,a_{\max})+\gamma\sum_{k=1}^{|\mathcal{S}_1|}\mathbb{P}_1(s_k|s_i,a_{\max})V_1^{(n)}(s_k)\right\} \right.\notag\\
    &\qquad\left.-\left\{R_2(s_j,a')+\gamma\sum_{k=1}^{|\mathcal{S}_2|}\mathbb{P}_2(s_k|s_j,a')V_2^{(n)}(s_k)\right\}\right|\notag\\
               \leq&\ \max_a \min_{a'} \left|\left\{R_1(s_i,a)+\gamma\sum_{k=1}^{|\mathcal{S}_1|}\mathbb{P}_1(s_k|s_i,a) V_1^{(n)}(s_k)\right\} \right.\notag\\
    &\qquad\qquad\left.-\left\{R_2(s_j,a')+\gamma\sum_{k=1}^{|\mathcal{S}_2|}\mathbb{P}_2(s_k|s_j,a')V_2^{(n)}(s_k)\right\}\right|\notag\\
           {\leq} &\ \max_{a} \min_{a'} \left\{\bigg|R_1(s_i,a) - R_2 (s_j,a')\bigg|  +\gamma\bigg|\sum\limits_{k=1}^{|\mathcal{S}_1|} \mathbb{P}_1 (s_k|s_i,a) V_1^{(n)}(s_k)- \sum\limits_{k=1}^{|\mathcal{S}_2|} \mathbb{P}_2(s_k|s_j,a') V_2^{(n)}(s_k)\bigg|\right\}\notag\\
           {\leq}&\ \max_{a} \min_{a'} \left\{\left|R_1(s_i,a)- R_2 (s_j,a')\right| +\gamma W_1\left(\mathbb{P}_1(\cdot|s_i,a),\mathbb{P}_2(\cdot|s_j,a');d^{1\text{-}2}_n\right)\right\}\notag\\
           \leq &\ d^{1\text{-}2}_\textnormal{lax,n+1}(s_i,s_j),\ \forall\ (s_i,s_j)\in \mathcal{S}_1\times\mathcal{S}_2.
        \end{align}
  
   Taking $n\rightarrow \infty$ yields the desired result.
\end{proof}

\begin{theorem}[\textbf{Inequality between GBSM and lax GBSM}]
When $\mathcal{M}_1$ and $\mathcal{M}_2$ share the same $\mathcal{A}$,
    \begin{align}
        d_\textnormal{lax}^{1\textnormal{-}2}(s,s') \leq {d}^{1\text{-}2}(s,s').
    \end{align}
\end{theorem}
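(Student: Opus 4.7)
The plan is to prove $d^{1\text{-}2}_{\text{lax},n}(s,s') \leq d^{1\text{-}2}_n(s,s')$ pointwise for every $n \in \mathbb{N}$ by induction, and then take $n \to \infty$ using the uniform convergence of both sequences to their respective fixed points. The base case is immediate because both $d^{1\text{-}2}_{\text{lax},0}$ and $d^{1\text{-}2}_0$ are the constant zero function.

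For the inductive step, two ingredients are needed. The first is monotonicity of the Wasserstein distance in its cost function: if $\rho \leq \rho'$ pointwise, then $W_1(P,Q;\rho) \leq W_1(P,Q;\rho')$. This was already established as inequality (\ref{lemma2ieq1}) in the proof of Lemma~\ref{lemma:conti:W1}, so under the induction hypothesis $d^{1\text{-}2}_{\text{lax},n} \leq d^{1\text{-}2}_n$ we immediately obtain $\delta(d^{1\text{-}2}_{\text{lax},n})((s,a),(s',a')) \leq \delta(d^{1\text{-}2}_n)((s,a),(s',a'))$ for every action pair. The second ingredient exploits that $\mathcal{A}_1 = \mathcal{A}_2 = \mathcal{A}$: for each fixed $a$, picking $a' = a$ as a candidate in the inner infimum of the Hausdorff metric gives
\begin{equation*}
\inf_{a' \in \mathcal{A}} \delta(d)((s,a),(s',a')) \leq \delta(d)((s,a),(s',a)),
\end{equation*}
and an analogous bound holds for the other sup-inf branch. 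Taking the supremum over $a$ (and $a'$) then shows $H(X_s, X'_{s'}; \delta(d)) \leq \max_a \delta(d)((s,a),(s',a))$.

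Combining the two ingredients,
\begin{align*}
d^{1\text{-}2}_{\text{lax},n+1}(s,s') &\leq \max_a \delta(d^{1\text{-}2}_{\text{lax},n})((s,a),(s',a)) \\
&= \max_a \bigl\{|R_1(s,a)-R_2(s',a)| + \gamma W_1(\mathbb{P}_1(\cdot|s,a), \mathbb{P}_2(\cdot|s',a); d^{1\text{-}2}_{\text{lax},n})\bigr\} \\
&\leq \max_a \bigl\{|R_1(s,a)-R_2(s',a)| + \gamma W_1(\mathbb{P}_1(\cdot|s,a), \mathbb{P}_2(\cdot|s',a); d^{1\text{-}2}_n)\bigr\} \\
&= d^{1\text{-}2}_{n+1}(s,s').
\end{align*}
Passing to the limit $n \to \infty$ closes the argument.

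The only subtle point is the Hausdorff-metric reduction: one must carefully check that both the $\sup_a \inf_{a'}$ and the $\sup_{a'} \inf_{a}$ branches of $H$ are dominated by the diagonal choice $a' = a$. Neither branch causes an issue because $\mathcal{A}_1 = \mathcal{A}_2$ makes the diagonal a legitimate candidate; if the action spaces were distinct, this step would fail and the inequality need not hold, which is consistent with the hypothesis of the theorem. Everything else reduces to monotonicity of $W_1$ already used elsewhere in the paper.
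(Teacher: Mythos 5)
Your proposal is correct and follows essentially the same route as the paper's proof: induction on $n$ with the zero base case, monotonicity of $W_1$ in its cost function, and bounding both branches of the Hausdorff metric by the diagonal choice $a'=a$. The only (immaterial) difference is that you apply the Hausdorff-to-diagonal reduction before the monotonicity step rather than after, and you are somewhat more explicit than the paper about checking both sup-inf branches and about the fact that the relevant property is monotonicity of $F_{\textnormal{lax}}$ rather than its continuity.
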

\begin{proof}
For the base case, we have
        \begin{align}
            d_\textnormal{lax,0}^{1\textnormal{-}2}(s,s') = d_0^{1\textnormal{-}2}(s,s') = 0,\  \forall\ (s,s')\in \mathcal{S}_1\times\mathcal{S}_2.
        \end{align}
    By the induction hypothesis, we assume that for an arbitrary $n$,
    \begin{align}
        d^{1\text{-}2}_\textnormal{lax,n}(s,s') \leq d^{1\text{-}2}_n(s,s'),\ \forall\ (s,s')\in \mathcal{S}_1\times\mathcal{S}_2.
    \end{align}
    By the continuity of $F_\textnormal{lax}$, we have that $F_\textnormal{lax}(d_\textnormal{lax,n}^{1\text{-}2}|s,s') \leq F_\textnormal{lax}(d_n^{1\text{-}2}|s,s')$, which implies
    \begin{align}
        d_\textnormal{lax,n+1}^{1\text{-}2}(s,s')&\leq \max \big\{ \max_a \min_{a'} \delta(d_n^{1\text{-}2})((s,a), (s',a')) , \max_{a'} \min_a \delta(d_n^{1\text{-}2})((s,a), (s',a')) \big\}\notag\\
        &\leq \max_a \delta(d_n^{1\text{-}2})((s,a), (s',a)) = d_{n+1}^{1\text{-}2}(s,s'),\ \forall\ (s,s')\in \mathcal{S}_1\times\mathcal{S}_2.
    \end{align}
    Taking $n\rightarrow \infty$ yields the desired result.
\end{proof}
Using the above two theorems and the definition of Wasserstein distance, we derive similar metric properties as 
\begin{align}
    & {d}_\textnormal{lax}^{1\text{-}2}(s,s')={d}_\textnormal{lax}^{2\text{-}1}(s',s),\ \forall\ (s,s')\in \mathcal{S}_1\times\mathcal{S}_2\times\mathcal{S}_3,\ \text{(Theorem~\ref{Theorem:3})},\\
    & {d}_\textnormal{lax}^{1\text{-}2}(s,s')\leq{d}_\textnormal{lax}^{1\text{-}3}(s,s'')+{d}_\textnormal{lax}^{3\text{-}2}(s'',s'),\  \forall\ (s,s',s'')\in \mathcal{S}_1\times\mathcal{S}_2\times\mathcal{S}_3,\ \text{(Theorem~\ref{Theorem:4})},\\
    & \text{and } d_\textnormal{lax}^{1\textnormal{-}2}\leq d^{1\textnormal{-}2} \leq d_\text{TV}^{1\textnormal{-}2}/(1-\gamma) \text{ when $\mathcal{S}_1$=$\mathcal{S}_2$ and $\mathcal{A}_1$=$\mathcal{A}_2$.}\ \text{(Theorem~\ref{Theorem:5})}    
\end{align}
Since these fundamental metric properties hold, the bounds for state aggregation (Theorem~\ref{Theorem:7}) and estimation (Theorem~\ref{Theorem:8}) also follow directly. In terms of policy transfer (Theorem~\ref{Theorem:6}), besides the state mapping $f:\mathcal{S}_2\rightarrow\mathcal{S}_1$, we need to define an additional action mapping $g:\mathcal{A}_1\rightarrow\mathcal{A}_2$ for policy transfer between different action spaces. Then, we have
\begin{align}
    &\ |V_1^\pi(s_i)-V_2^{\pi}(s_j)|\notag\\
     {=}&\ \Big| \sum_{a=1}^{|\mathcal{A}_1|} \pi(a|s_i)(R_1(s_i,a)+\gamma\sum_{k=1}^{|\mathcal{S}_1|} \mathbb{P}_1(s_k|s_i,a) V_1^\pi(s_k)) \notag\\
    &\qquad-\sum_{a=1}^{|\mathcal{A}_1|} \pi(a|f(s_j)) (R_2(s_j,g(a))+\gamma\sum_{k=1}^{|\mathcal{S}_2|} \mathbb{P}_2(s_k|s_j,g(a)) V_2^\pi(s_k)) \Big|\notag\\
    {\leq} &\ \sum_{a=1}^{|\mathcal{A}_1|} \Big(\pi(a|s_i) (| R_1 (s_i,a)-R_2(s_j,g(a))| \notag\\
    &\qquad\qquad+\gamma|\sum_{k=1}^{|\mathcal{S}_1|}\mathbb{P}_1(s_k|s_i,a) V_1^\pi(s_k) - \sum_{k=1}^{|\mathcal{S}_2|} \mathbb{P}_2(s_k|s_j,g(a)) V_2^\pi(s_k) |)\Big)\notag\\
    {\leq} &\ \sum_{a=1}^{|\mathcal{A}_1|} \Big(\pi(a|s_i)(| R_1 (s_i,a)-R_2(s_j,g(a))| \notag\\
    &\qquad\qquad+\gamma|\sum_{k=1}^{|\mathcal{S}_1|}\mathbb{P}_1(s_k|s_i,a) V_1^*(s_k) - \sum_{k=1}^{|\mathcal{S}_2|} \mathbb{P}_2(s_k|s_j,g(a)) V_2^*(s_k) |)\Big) \notag\\
    &\ +\gamma \sum_{a=1}^{|\mathcal{A}_1|} \pi(a|s_i) |\sum_{k=1}^{|\mathcal{S}_1|} \mathbb{P}_1(s_k|s_i,a) (V_1^*(s_k)- V_1^\pi(s_k))|\notag\\
    &\ +\gamma \sum_{a=1}^{|\mathcal{A}_1|} \pi(a|s_i) |\sum_{k=1}^{|\mathcal{S}_2|} \mathbb{P}_2(s_k|s_j,g(a)) (V_2^*(s_k)-  V_2^\pi(s_k) )| \notag\\
    \leq &\ \max_{a\in\mathcal{A}_1} \Big\{| R_1 (s_i,a) -R_2(s_j,g(a))|\notag\\
    &\qquad\quad+\gamma|\sum_{k=1}^{|\mathcal{S}_1|} \mathbb{P}_1(s_k|s_i,a) V_1^*(s_k))-\sum_{k=1}^{|\mathcal{S}_2|} (\mathbb{P} (s_k|s_j,g(a)) V_2^*(s_k)|\Big\} \notag\\
    &\  + \gamma \max_{s\in\mathcal{S}_1} |V_1^*(s) - V_1^\pi(s)  |+ \gamma \max_{s\in\mathcal{S}_2} |V_2^*(s) - V_2^\pi(s)| \notag\\
    \overset{(a)}{\leq} &\max_{a\in\mathcal{A}_1} \min_{a'\in\mathcal{A}_2} \{  | R_1 (s_i,a)- R_2(s_j,a')| +\gamma W_1(\mathbb{P}_1(\cdot|s_i,a), \mathbb{P}_2(\cdot|s_j,a');d^{1\text{-}2}_\textnormal{lax})\}  \notag\\
    &\qquad + \gamma \max_{s\in\mathcal{S}_1} |V_1^*(s) - V_1^\pi(s)  |+ \gamma \max_{s\in\mathcal{S}_2} |V_2^*(s) - V_2^\pi(s)|\notag\\
    \leq&\ H(\delta (d^{1\text{-}2}_\textnormal{lax}))(X_{s_i},X'_{s_j}) + \gamma \max_{s\in\mathcal{S}_1} |V_1^*(s) - V_1^\pi(s)  |+ \gamma \max_{s\in\mathcal{S}_2} |V_2^*(s) - V_2^\pi(s)|\notag\\
    =&\ d^{1\text{-}2}_\text{lax}(s_i,s_j) + \gamma \max_{s\in\mathcal{S}_1} |V_1^*(s) - V_1^\pi(s)  |+ \gamma \max_{s\in\mathcal{S}_2} |V_2^*(s) - V_2^\pi(s)|\notag
\end{align}
Due to the introduction of max-min term via Hausdorff metric, step (a) requires that action mapping satisfies $g(a) = \arg \min_{a'} \delta ((f(s'),a), (s',a');d^{1\textnormal{-}2}_\textnormal{lax})$ for each $s'$ and $a$.

\section{Proofs of on-policy GBSM Properties}\label{Appendix6}
\begin{theorem}[\textbf{On-policy GBSM optimal value difference bound}]\label{the:on-policyVB}
Let $V_1^\pi$ and $V_2^\pi$ denote the value functions with policy $\pi$ in $\mathcal{M}_1$ and $\mathcal{M}_2$, respectively. Then lax GBSM provides an upper bound for the difference between the value functions for any state pair $(s,s')\in \mathcal{S}_1\times\mathcal{S}_2$:
    \begin{align}
        |V^\pi_1(s)-V^\pi_2(s')|\leq d_\pi^{1\textnormal{-}2}(s,s').
    \end{align}
\end{theorem}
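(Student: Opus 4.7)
The plan is to mirror the proof of Theorem~\ref{Theorem:2}, replacing the Bellman optimality operator with the on-policy Bellman operator so that the outer $\max_a$ disappears. Concretely, I would define the iterates $V^{\pi,(n)}(s) = R^\pi(s) + \gamma \sum_{\tilde{s}} \mathbb{P}^\pi(\tilde{s}|s) V^{\pi,(n-1)}(\tilde{s})$ with $V^{\pi,(0)}\equiv 0$, which converge uniformly to $V^\pi$ by the standard contraction argument for the on-policy Bellman operator. In parallel, I would use the recursive definition $d_{\pi,n+1}^{1\text{-}2}(s,s') = |R_1^\pi(s) - R_2^\pi(s')| + \gamma W_1\big(\mathbb{P}_1^\pi(\cdot|s),\mathbb{P}_2^\pi(\cdot|s');d_{\pi,n}^{1\text{-}2}\big)$ starting from $d_{\pi,0}^{1\text{-}2}\equiv 0$, whose convergence to $d_\pi^{1\text{-}2}$ is analogous to Theorem~\ref{Theorem:1} (and can be quoted rather than re-derived). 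The target then becomes the pointwise inequality $|V_1^{\pi,(n)}(s)-V_2^{\pi,(n)}(s')|\leq d_{\pi,n}^{1\text{-}2}(s,s')$ for every $n$, after which $n\to\infty$ yields the statement.

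The induction is routine. The base case holds since both sides are zero. For the inductive step, I would expand both on-policy value iterates, apply the triangle inequality to separate the reward contribution from the transition contribution, and then bound the transition part as
\begin{align}
    \Big|\sum_k \mathbb{P}_1^\pi(s_k|s)V_1^{\pi,(n)}(s_k) - \sum_k \mathbb{P}_2^\pi(s_k|s')V_2^{\pi,(n)}(s_k)\Big| \leq W_1\big(\mathbb{P}_1^\pi(\cdot|s),\mathbb{P}_2^\pi(\cdot|s');d_{\pi,n}^{1\text{-}2}\big),\notag
\end{align}
by invoking Kantorovich duality: the inductive hypothesis $V_1^{\pi,(n)}(s_k)-V_2^{\pi,(n)}(s_\ell)\leq d_{\pi,n}^{1\text{-}2}(s_k,s_\ell)$ shows that $\big(V_1^{\pi,(n)}\big)_k$ and $\big(V_2^{\pi,(n)}\big)_\ell$ form a feasible (not necessarily optimal) pair for the dual LP in (\ref{LP2}). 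This is the same trick used in the proof of Theorem~\ref{Theorem:2} in Appendix~\ref{Appendix2}; the on-policy version is actually cleaner because there is no $\max_a$ to push inside the absolute value.

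The only point that genuinely needs care, and which I expect to be the main obstacle, is the handling of the absolute value without the $\max_a$ cushion. Specifically, the inductive hypothesis controls $V_1^{\pi,(n)}(s_k)-V_2^{\pi,(n)}(s_\ell)$ in one direction, but to bound the absolute difference through the Wasserstein distance one also needs the reverse direction, i.e.\ a feasible pair producing $V_2^{\pi,(n)}(s_\ell)-V_1^{\pi,(n)}(s_k)\leq d_{\pi,n}^{2\text{-}1}(s_\ell,s_k)$. I would resolve this by appealing to the symmetry $d_\pi^{1\text{-}2}(s,s')=d_\pi^{2\text{-}1}(s',s)$ (the on-policy analogue of Theorem~\ref{Theorem:3}, which the paper already asserts carries over), thereby obtaining a two-sided bound. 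Once that symmetric feasibility is in hand, combining the reward-difference term and the Wasserstein term recovers exactly $d_{\pi,n+1}^{1\text{-}2}(s,s')$, closing the induction. Passing to the limit $n\to\infty$ using uniform convergence of both $V^{\pi,(n)}$ and $d_{\pi,n}^{1\text{-}2}$ then delivers the claimed bound.
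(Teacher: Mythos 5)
Your proposal is correct and follows essentially the same route as the paper's proof in Appendix~\ref{Appendix6}: the same on-policy value iteration, the same induction, and the same dual-LP feasibility argument bounding the transition term by $W_1\big(\mathbb{P}_1^\pi(\cdot|s),\mathbb{P}_2^\pi(\cdot|s');d_{\pi,n}^{1\textnormal{-}2}\big)$. The one point where you deviate—invoking symmetry of $d_\pi^{1\textnormal{-}2}$ to get the reverse direction of the absolute value—is unnecessary but harmless: the absolute-value form of the induction hypothesis already makes $\big(-V_1^{\pi,(n)},-V_2^{\pi,(n)}\big)$ a feasible pair for the same dual LP, which yields the other direction directly, as in the paper's treatment of Theorem~\ref{Theorem:2}.
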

\begin{proof}
For the base case, we have
        \begin{align}\label{D1}
            |V_1^{\pi,(0)} (s_i)-V_2^{\pi,(0)} (s_j)| = d_{\pi,0}^{1\textnormal{-}2}(s_i,s_j)=0,\ \forall\ (s_i,s_j)\in \mathcal{S}_1\times\mathcal{S}_2.
        \end{align}
    By the induction hypothesis, we assume that for an arbitrary $n$,
    \begin{align}\label{D2}
        |V_1^{\pi,(n)} (s_i)-V_2^{\pi,(n)} (s_j)| \leq&\ d^{1\text{-}2}_{\pi,n}(s_i,s_j),\ \forall\ (s_i,s_j)\in \mathcal{S}_1\times\mathcal{S}_2.
    \end{align}
    The induction follows
\begin{align}
           &\ \big|V_1^{\pi,(n+1)} (s_i)-V_2^{\pi,(n+1)} (s_j)\big| \notag\\
           {=} &\ \big|\big\{R_1^\pi(s_i)+\gamma\sum_{k=1}^{|\mathcal{S}_1|}\mathbb{P}_1^\pi(s_k|s_i)V_1^{\pi,(n)}(s_k)\big\} -\big\{R^\pi_2(s_j)+\gamma\sum_{k=1}^{|\mathcal{S}_2|}\mathbb{P}^\pi_2(s_k|s_j)V_2^{\pi,(n)}(s_k)\big\}\big|\notag\\
           {\leq} &\  \big\{\big|R_1^\pi(s_i) - R_2^\pi (s_j)\big|  +\gamma\big|\sum\limits_{k=1}^{|\mathcal{S}_1|} \mathbb{P}^\pi_1 (s_k|s_i) V_2^{\pi,(n)}(s_k)- \sum\limits_{k=1}^{|\mathcal{S}_2|} \mathbb{P}^\pi_2(s_k|s_j) V_2^{\pi,(n)}(s_k)\big|\big\}\notag\\
           {\leq}&\  \big\{\big|R_1^\pi(s_i)- R_2^\pi (s_j)\big| +\gamma W_1\big(\mathbb{P}^\pi_1(\cdot|s_i),\mathbb{P}^\pi_2(\cdot|s_j);d^{1\text{-}2}_n\big)\big\}\notag\\
           {=}&\ d^{1\text{-}2}_{n+1}(s_i,s_j),\ \forall\ (s_i,s_j)\in \mathcal{S}_1\times\mathcal{S}_2.
        \end{align}

   Taking $n\rightarrow \infty$ yields the desired result.
\end{proof}

\begin{theorem}[\textbf{On-policy GBSM distance bound on identical state spaces}]
When $\mathcal{M}_1$ and $\mathcal{M}_2$ share the same $\mathcal{S}$,
\begin{align}
   \max_{s}d^{\textnormal{1-2}}_\pi(s,s) \leq \frac{1}{1-\gamma}\max_{s,a} \Big\{\big|R_1^\pi(s)-R_2^\pi(s)\big|+\frac{\gamma \bar{R}}{1-\gamma} \textnormal{TV}\left(\mathbb{P}^\pi_1(\cdot|s), \mathbb{P}^\pi_2(\cdot|s) \right)\Big\},
\end{align}
where TV represents the total variation distance defined by $\textnormal{TV}(P,Q)= \frac{1}{2} \sum_{s\in\mathcal{S}} \big|P(s)-Q(s)\big|$.
\end{theorem}
\begin{proof}
Using inequality (\ref{tvandW}) in Theorem~\ref{Theorem:5}, we have
\begin{align}
    \max\nolimits_{s}d_\pi^{\textnormal{1-2}}(s,s){\leq}&  |R_1^\pi(s) - R_2^\pi(s)|   +  \gamma\textnormal{TV}\big(\mathbb{P}^\pi_1(\cdot|s), \mathbb{P}^\pi_2(\cdot|s)\big)\max\nolimits_{s,s'}d_\pi^{\textnormal{1-2}}(s,s') \notag\\
&+\gamma \big(1-\textnormal{TV}\big(\mathbb{P}^\pi_1(\cdot|s), \mathbb{P}^\pi_2(\cdot|s) \big)\big)\max\nolimits_{s}d_\pi^{\textnormal{1-2}}(s,s)\notag\\
{\leq}&  |R_1^\pi(s) \!-\! R_2^\pi(s)|  \! +\!  \frac{\gamma \bar{R}}{1\!-\!\gamma}\textnormal{TV}\big(\mathbb{P}^\pi_1(\cdot|s), \mathbb{P}^\pi_2(\cdot|s )\big) \!+  \!\gamma\max_{s}d_\pi^{\textnormal{1-2}}(s,s).\notag
\end{align}
Rearranging the inequality yields the desired result.
\end{proof}

\begin{theorem}[\textbf{VFA error bound with non-optimal policy}]
    \begin{equation}
        \max_s |V^\pi_1(s)-V^\pi_{[1]}(s)|\leq \max_s d_\pi^{1\textnormal{-}[1]}(s,s)\leq \max_s \tilde{d}_\pi(s,[s])/(1-\gamma)
    \end{equation}
\end{theorem}
\begin{proof}
The first inequality follows directly from Theorem~\ref{the:on-policyVB}, while the second is established using a derivation analogous to the proof of Theorem~\ref{Theorem:vfa}.
\end{proof}

\section{Additional Numerical Results}\label{Appendix4}
\begin{figure}[h]
\centering
\subfloat[$\gamma=0.1$]{\includegraphics[width=0.33\textwidth]{Fig/Transfer_Plot_1.pdf}}
\subfloat[$\gamma=0.2$]{\includegraphics[width=0.33\textwidth]{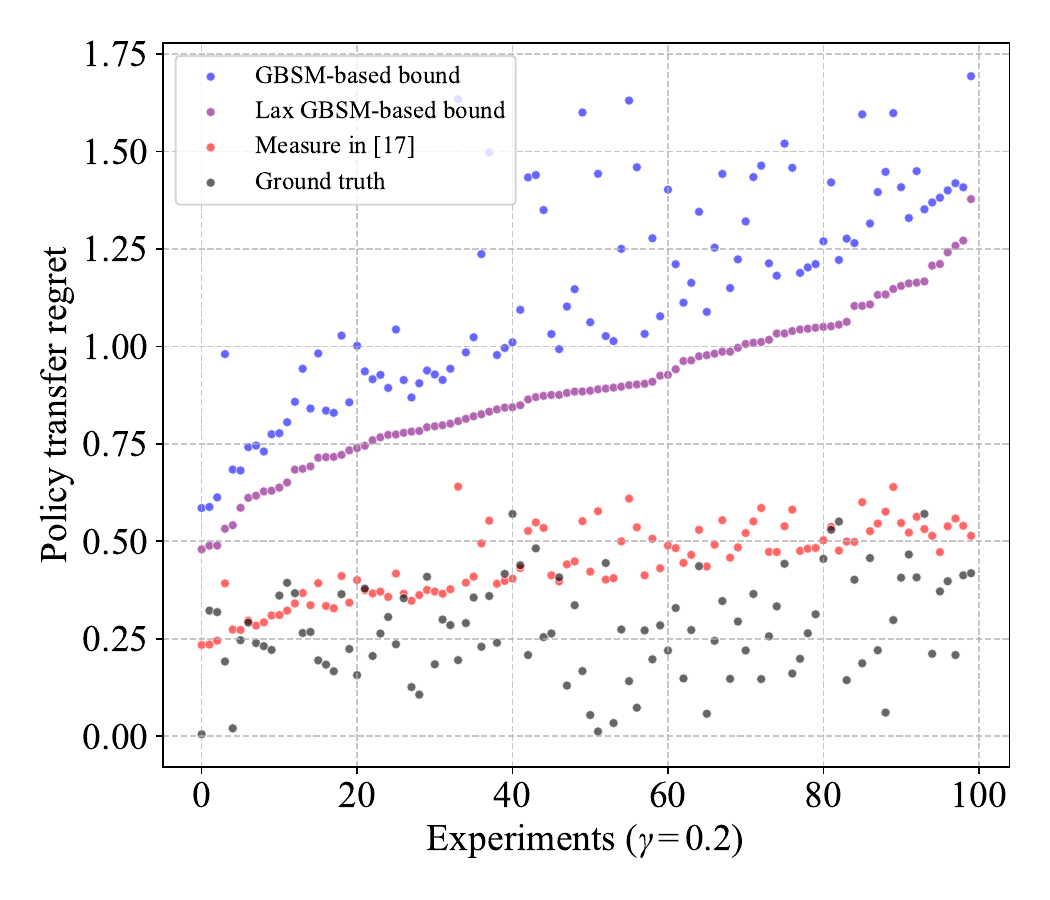}}
\subfloat[$\gamma=0.3$]{\includegraphics[width=0.33\textwidth]{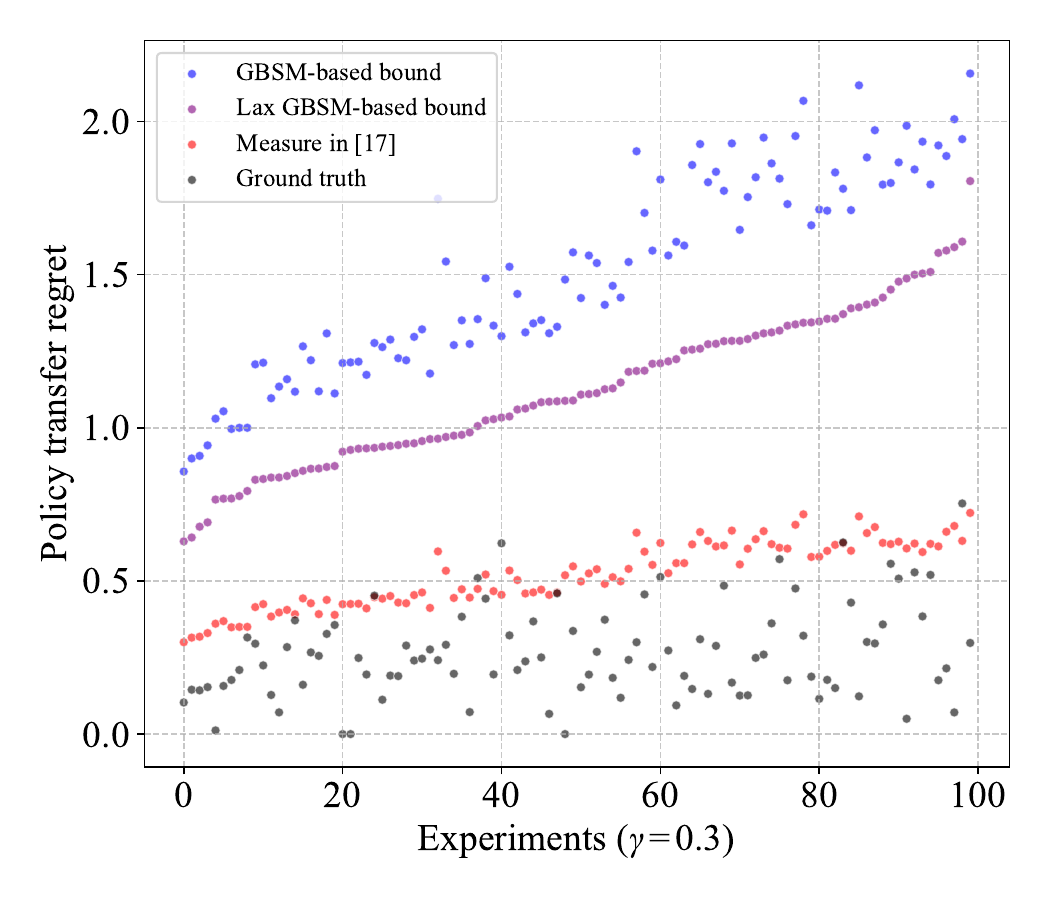}}
\caption{Experiments on random Garnet MDPs (policy transfer, $\gamma=0.1$ to $0.3$).}
\end{figure}

\begin{figure}[h]
\centering
\subfloat[$\gamma=0.4$]{\includegraphics[width=0.33\textwidth]{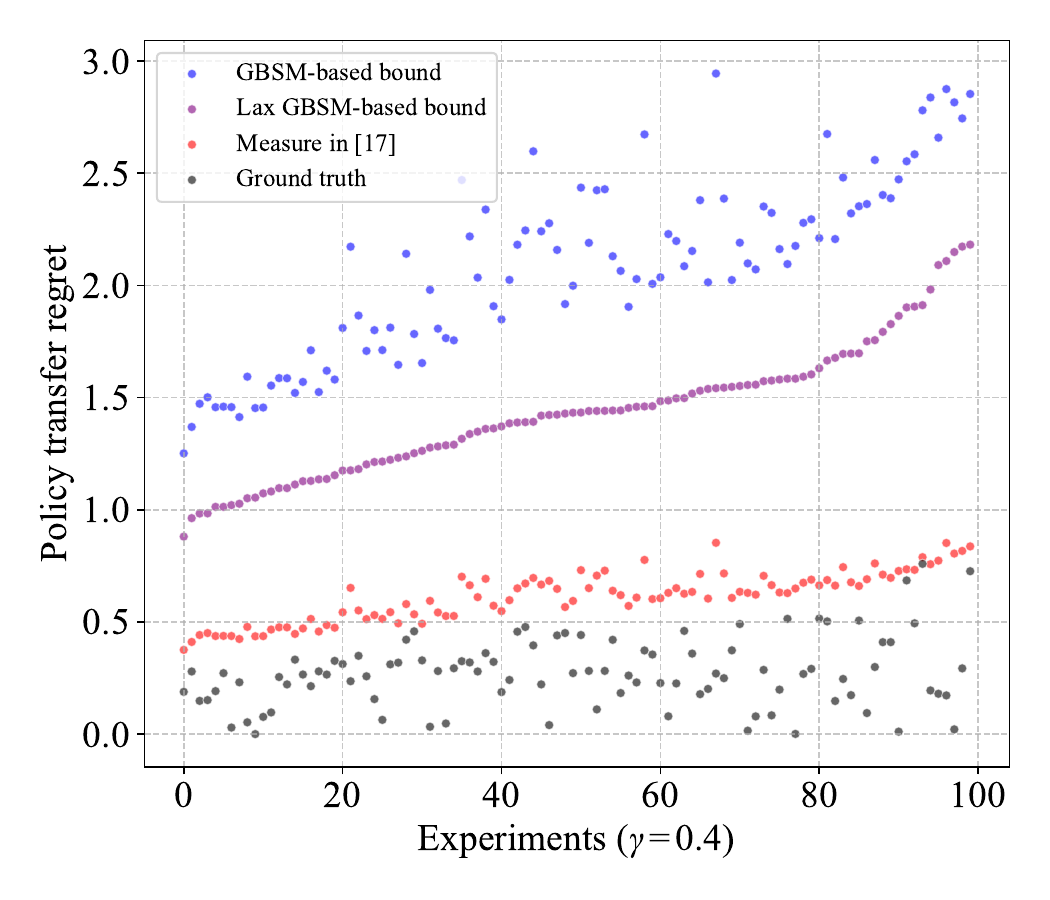}}
\subfloat[$\gamma=0.5$]{\includegraphics[width=0.33\textwidth]{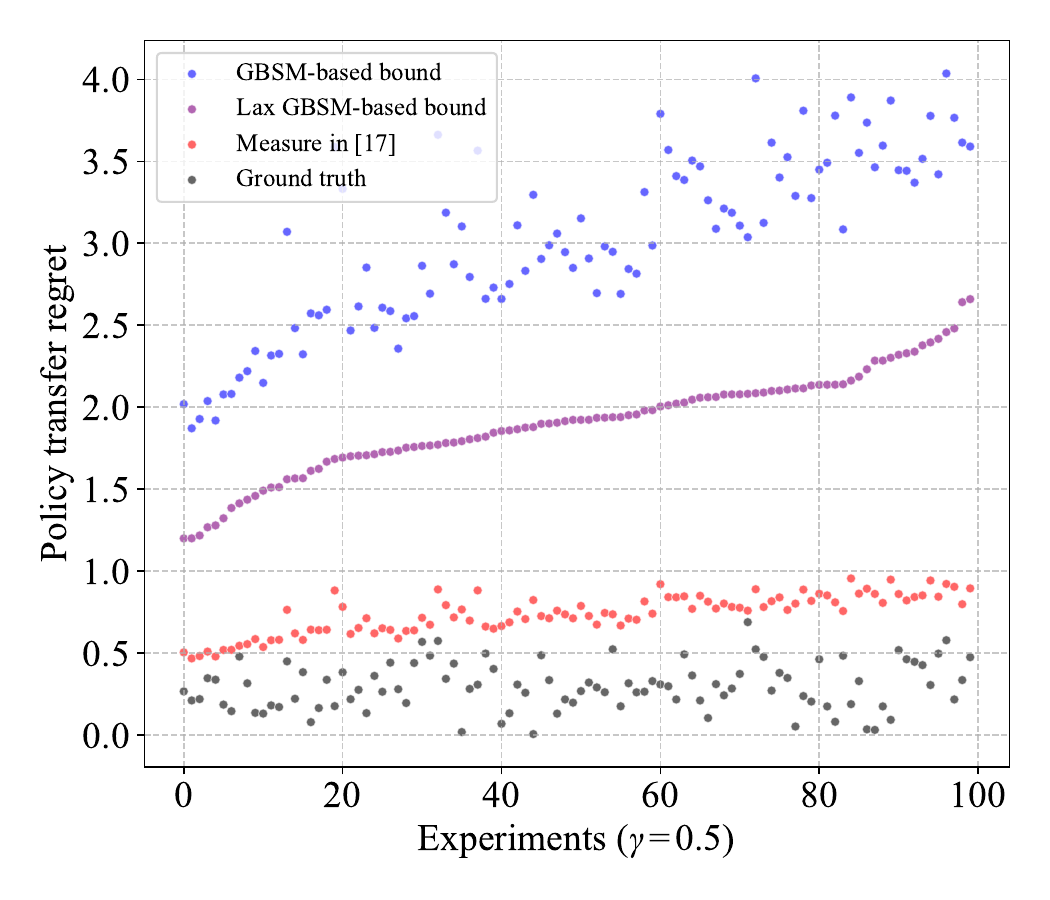}}
\subfloat[$\gamma=0.6$]{\includegraphics[width=0.33\textwidth]{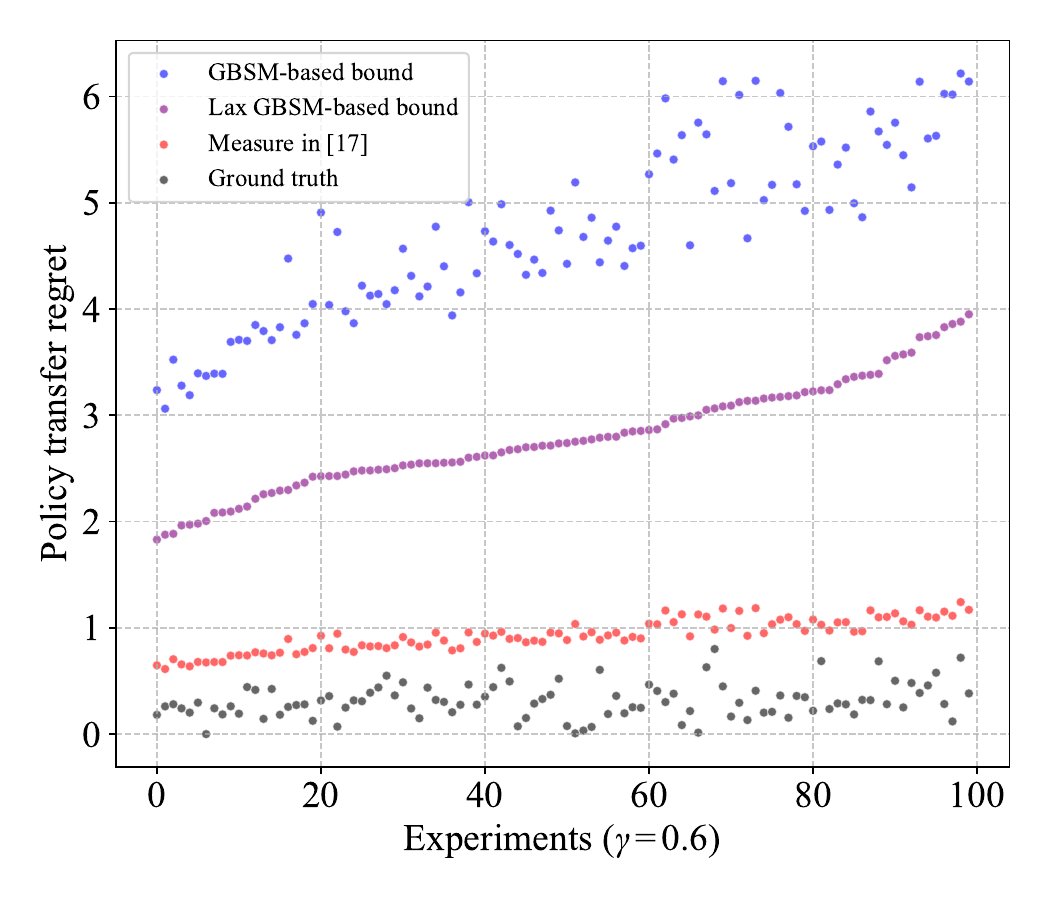}}
\caption{Experiments on random Garnet MDPs (policy transfer, $\gamma=0.4$ to $0.6$).}
\end{figure}

\begin{figure}[h]
\centering
\subfloat[$\gamma=0.7$]{\includegraphics[width=0.33\textwidth]{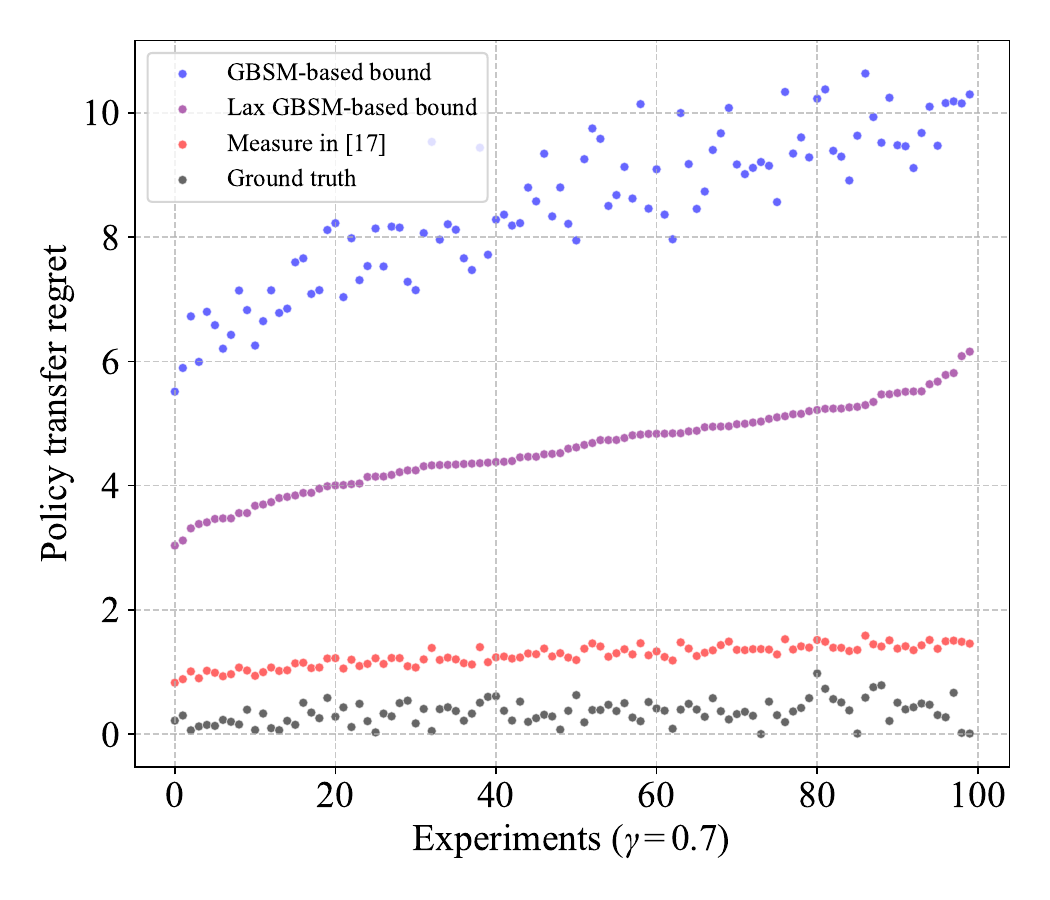}}
\subfloat[$\gamma=0.8$]{\includegraphics[width=0.33\textwidth]{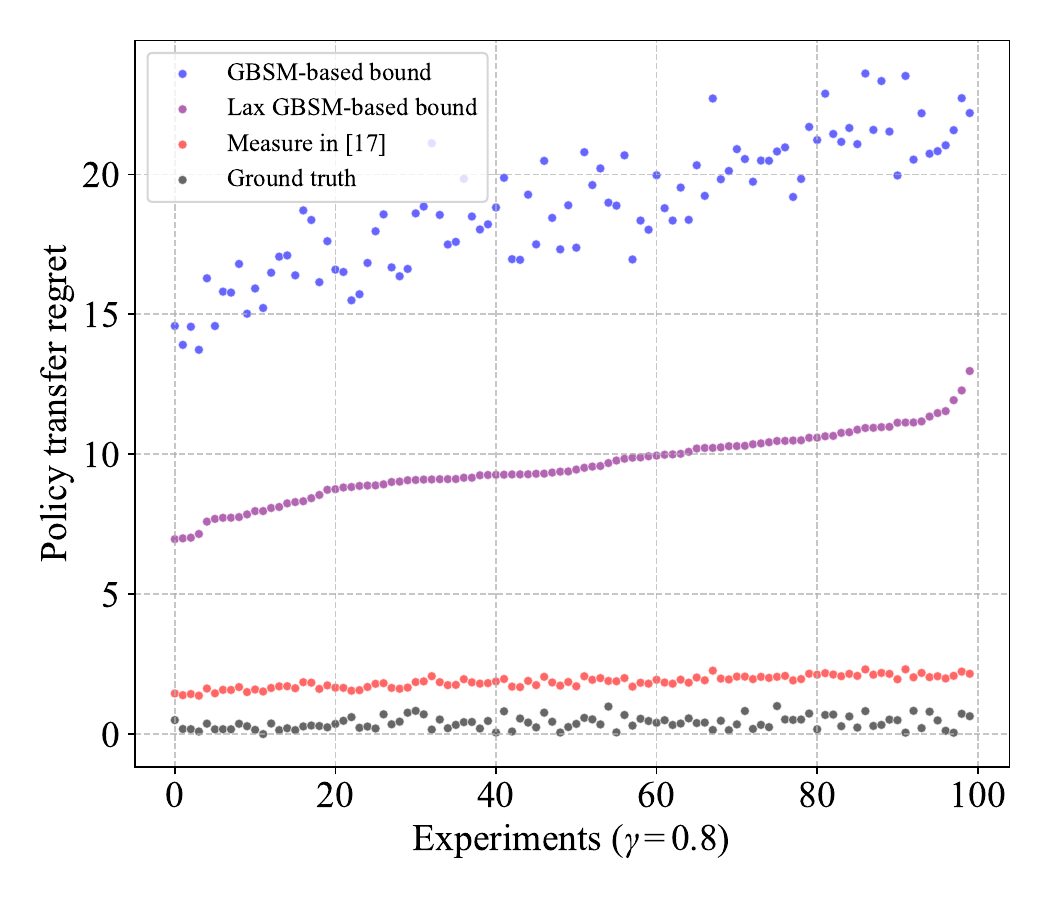}}
\subfloat[$\gamma=0.9$]{\includegraphics[width=0.33\textwidth]{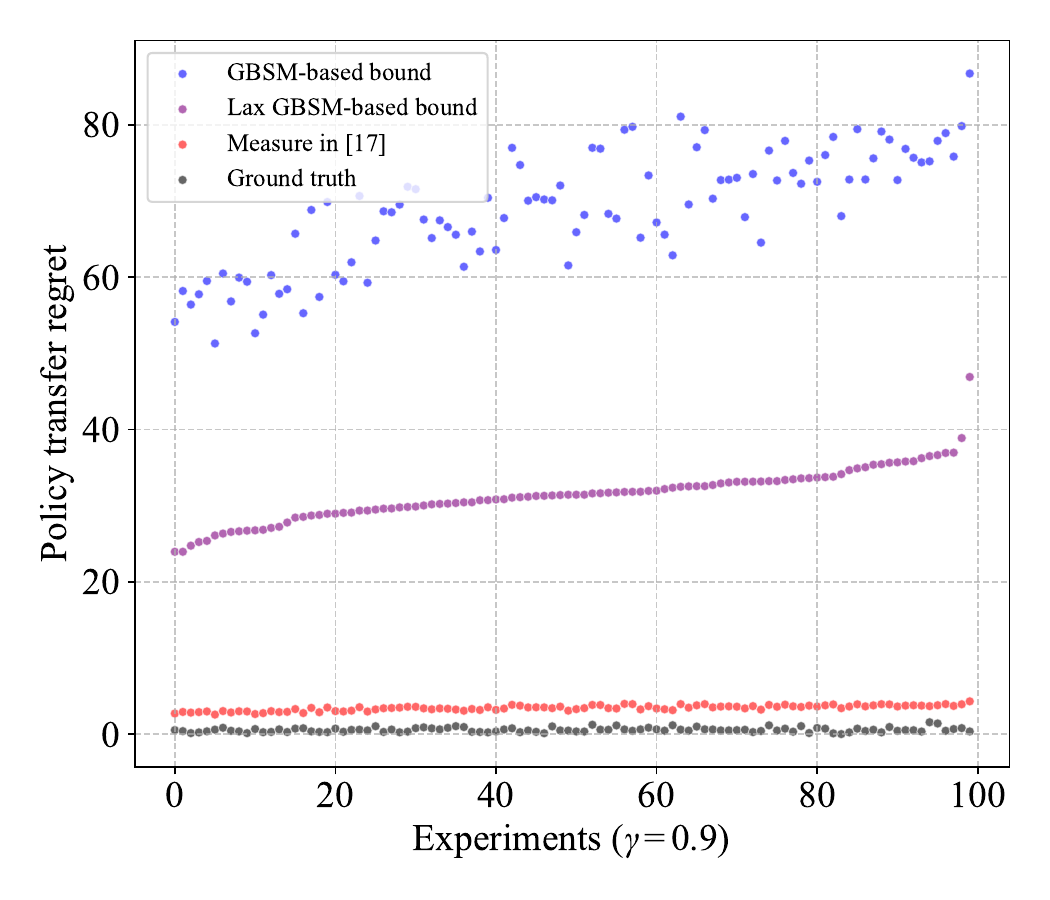}}
\caption{Experiments on random Garnet MDPs (policy transfer, $\gamma=0.7$ to $0.9$).}
\end{figure}

\begin{figure}[h]
\centering
\subfloat[$\gamma=0.1$]{\includegraphics[width=0.33\textwidth]{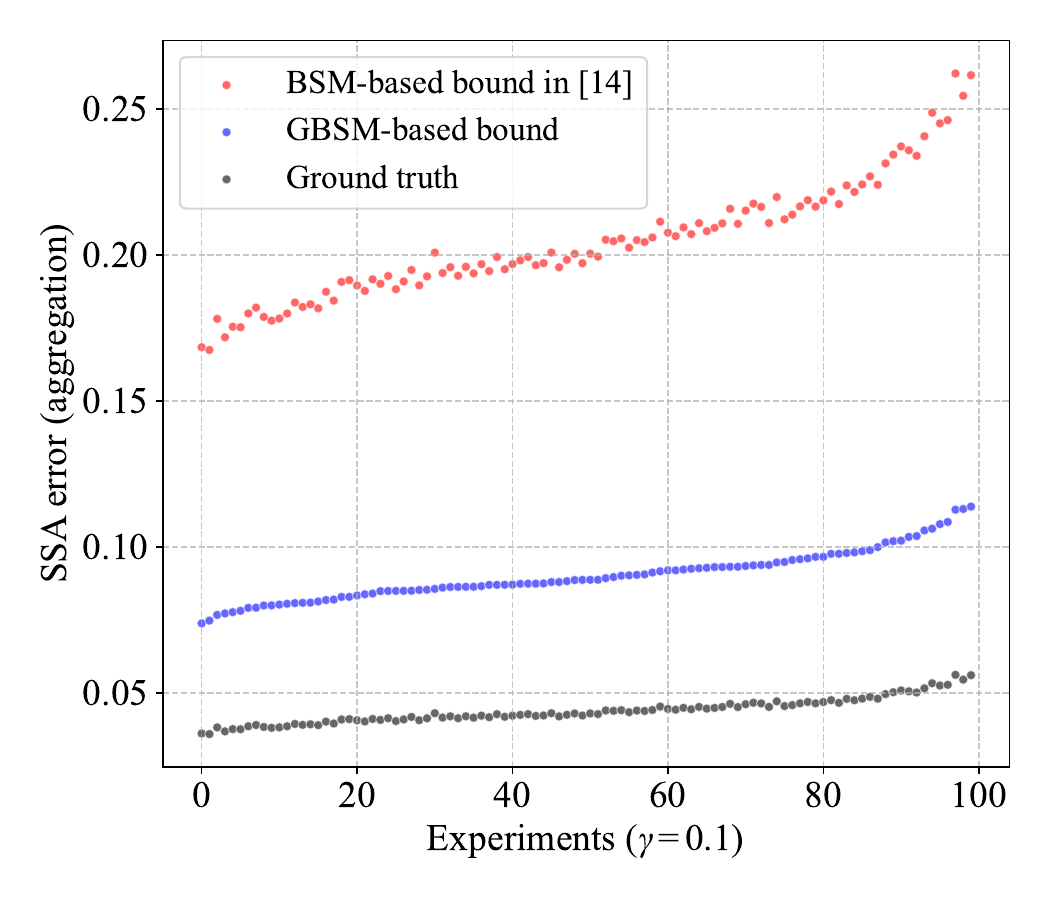}}
\subfloat[$\gamma=0.2$]{\includegraphics[width=0.33\textwidth]{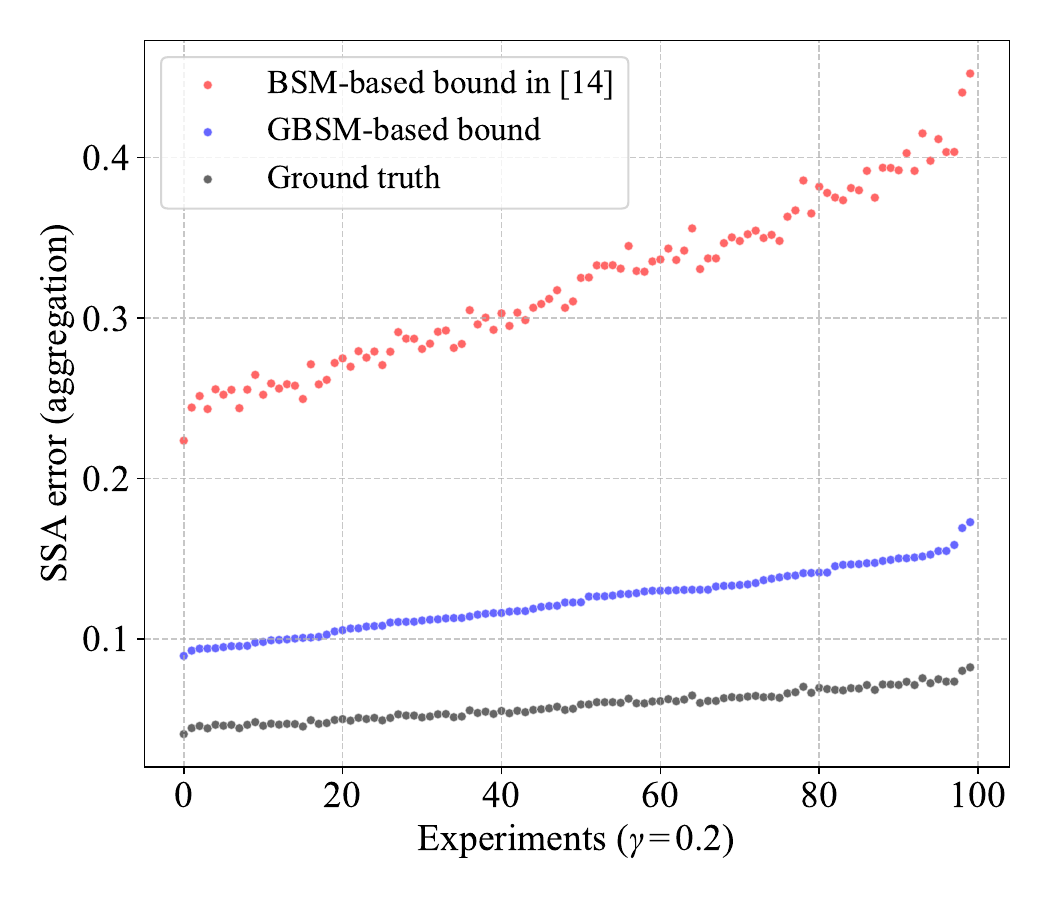}}
\subfloat[$\gamma=0.3$]{\includegraphics[width=0.33\textwidth]{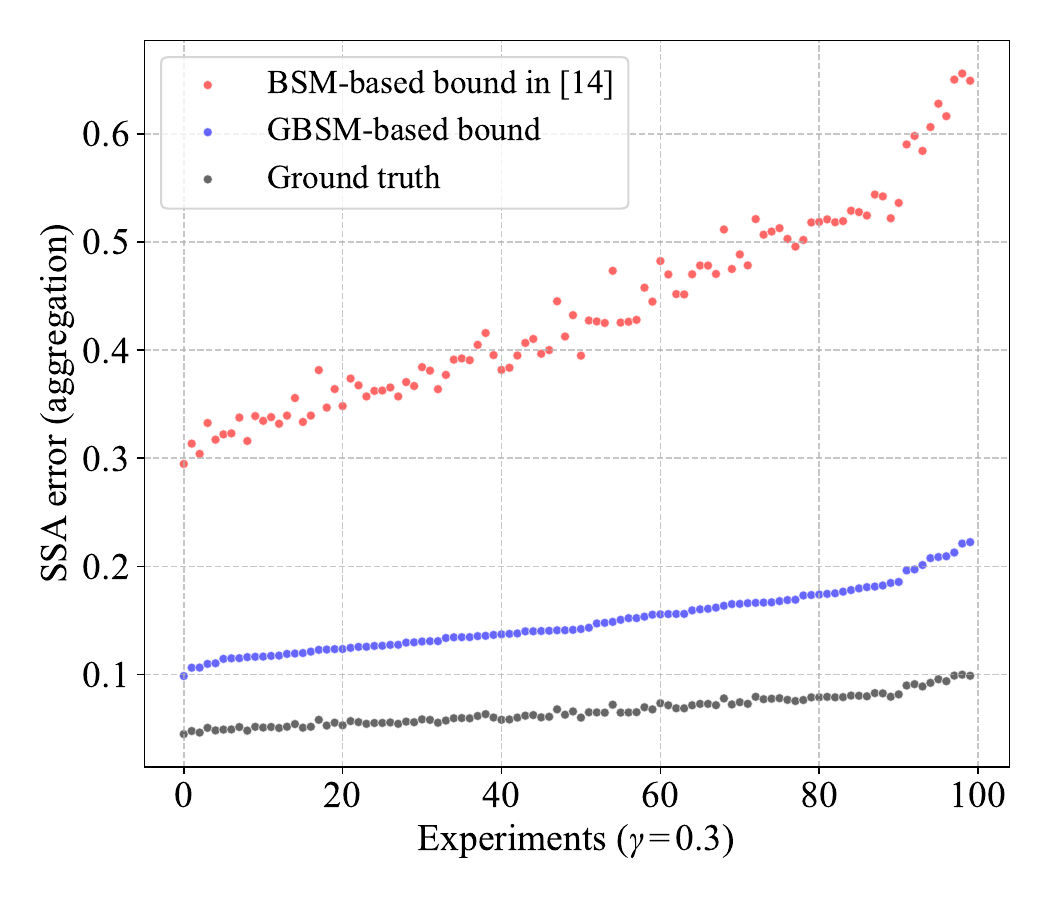}}
\caption{Experiments on random Garnet MDPs (SSA with aggregation, $\gamma=0.1$ to $0.3$).}
\end{figure}

\begin{figure}[h]
\centering
\subfloat[$\gamma=0.4$]{\includegraphics[width=0.33\textwidth]{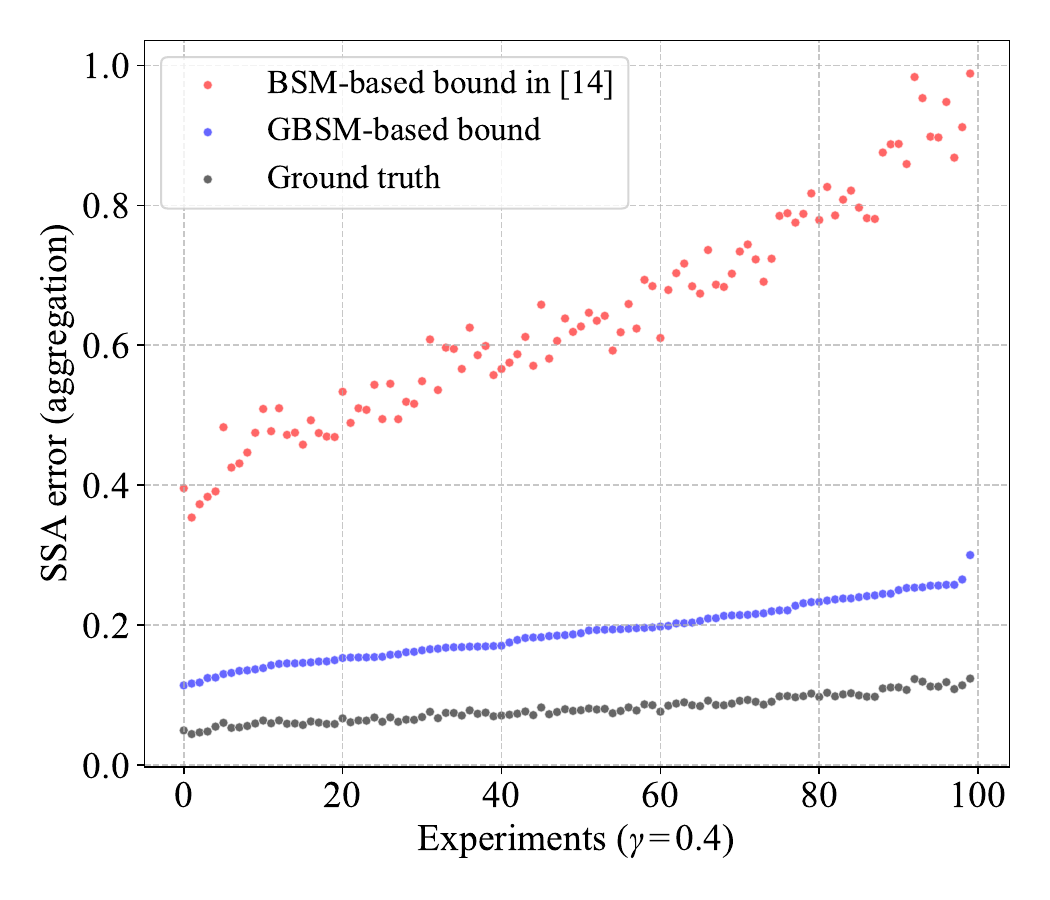}}
\subfloat[$\gamma=0.5$]{\includegraphics[width=0.33\textwidth]{Fig/Aggregation_Plot_5.pdf}}
\subfloat[$\gamma=0.6$]{\includegraphics[width=0.33\textwidth]{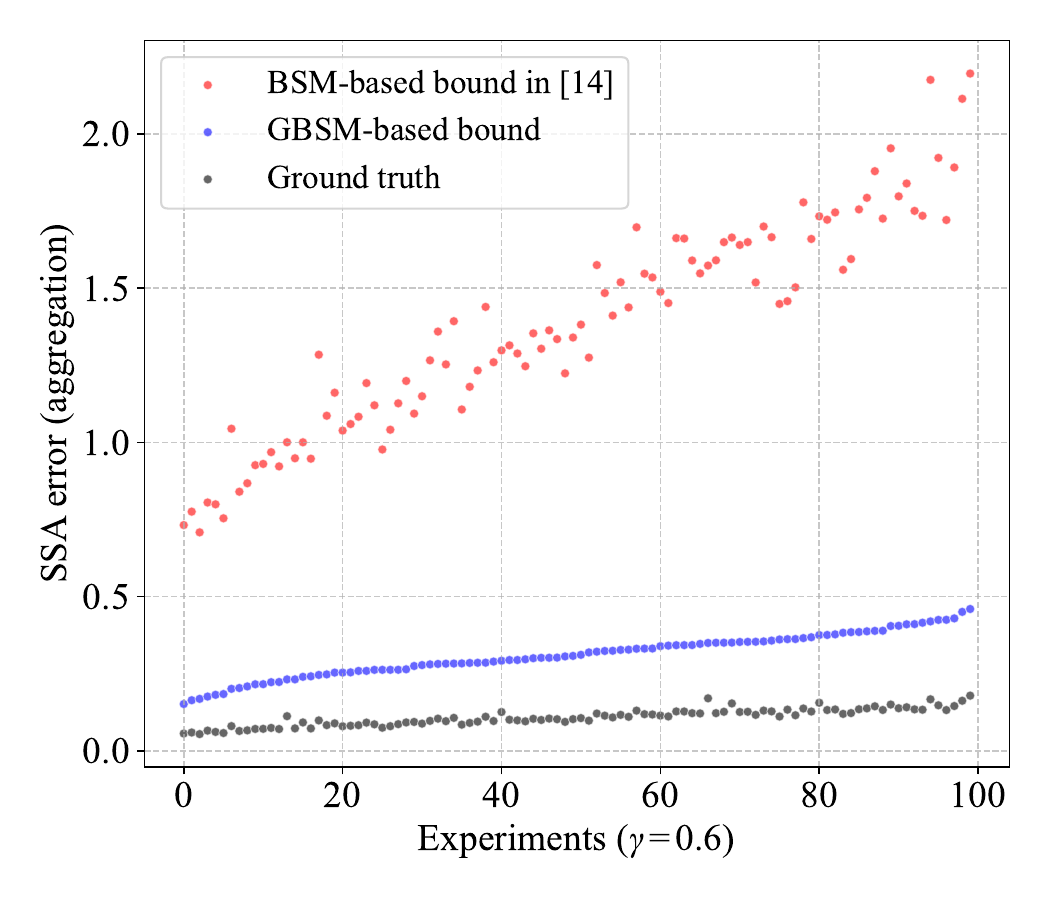}}
\caption{Experiments on random Garnet MDPs (SSA with aggregation, $\gamma=0.4$ to $0.6$).}
\end{figure}

\begin{figure}[h]
\centering
\subfloat[$\gamma=0.7$]{\includegraphics[width=0.33\textwidth]{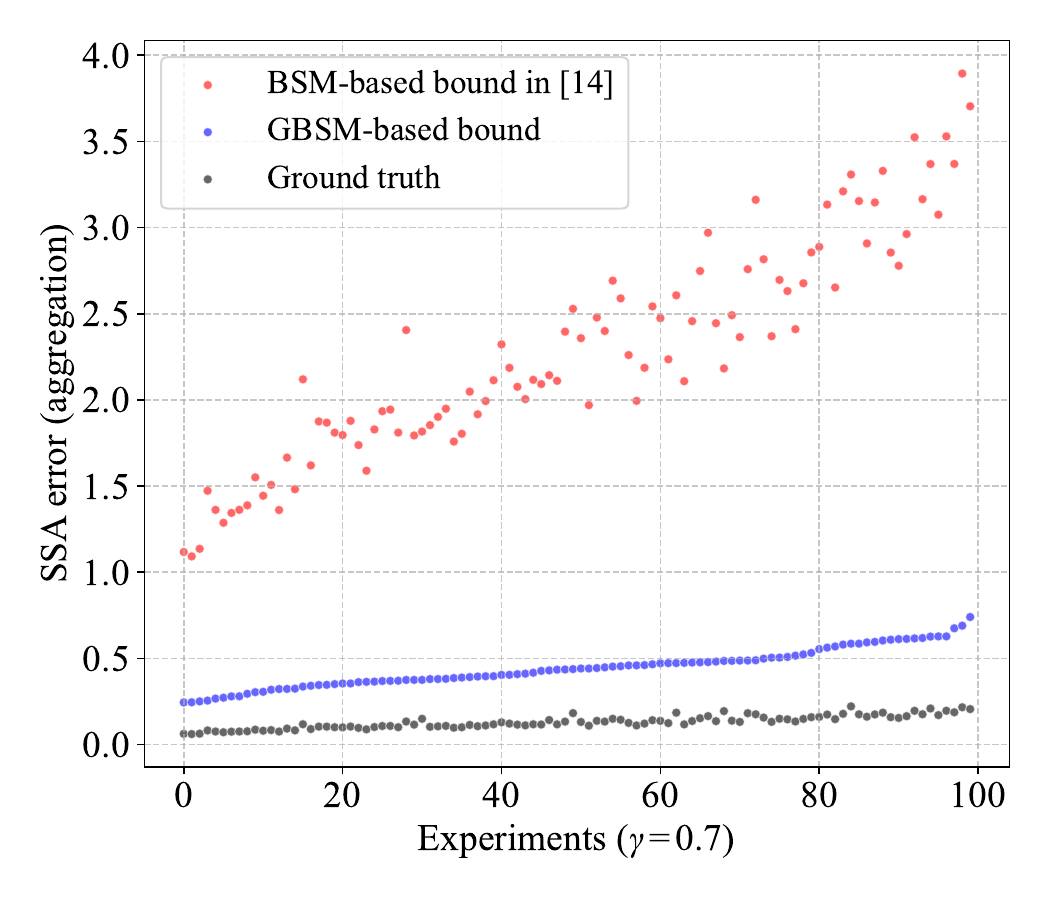}}
\subfloat[$\gamma=0.8$]{\includegraphics[width=0.33\textwidth]{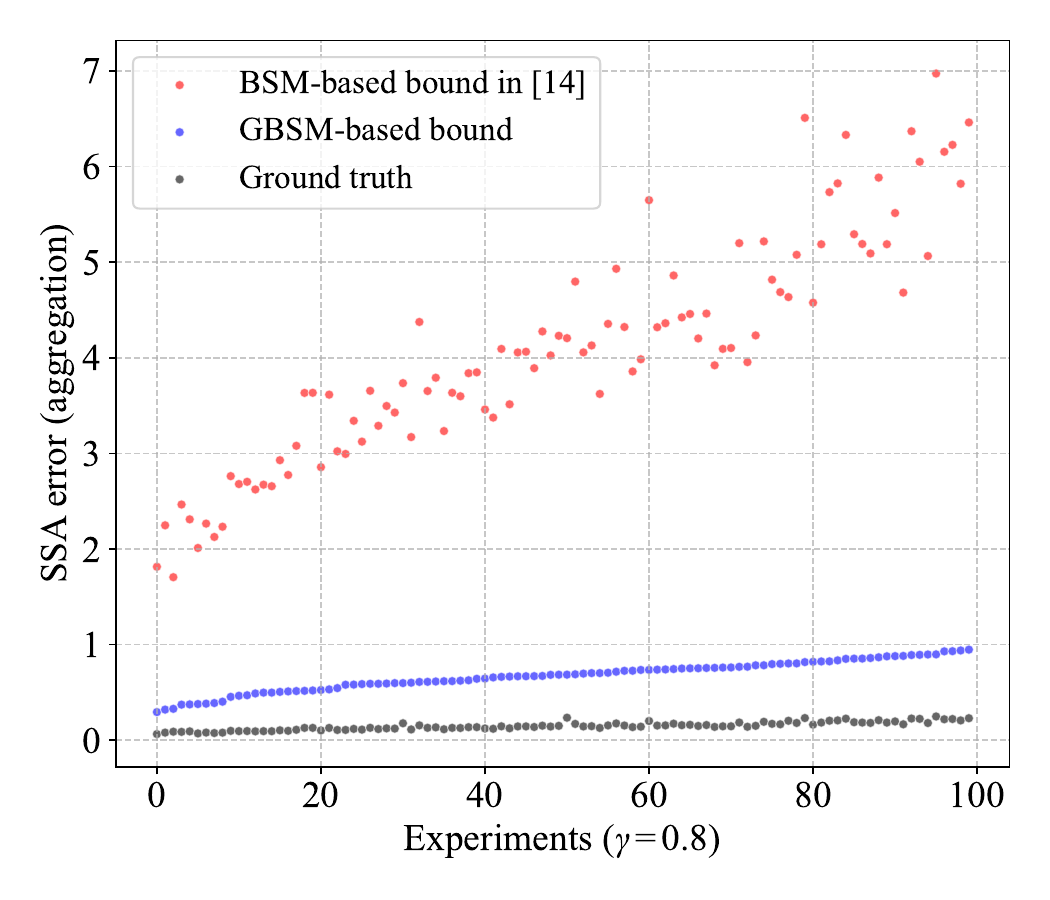}}
\subfloat[$\gamma=0.9$]{\includegraphics[width=0.33\textwidth]{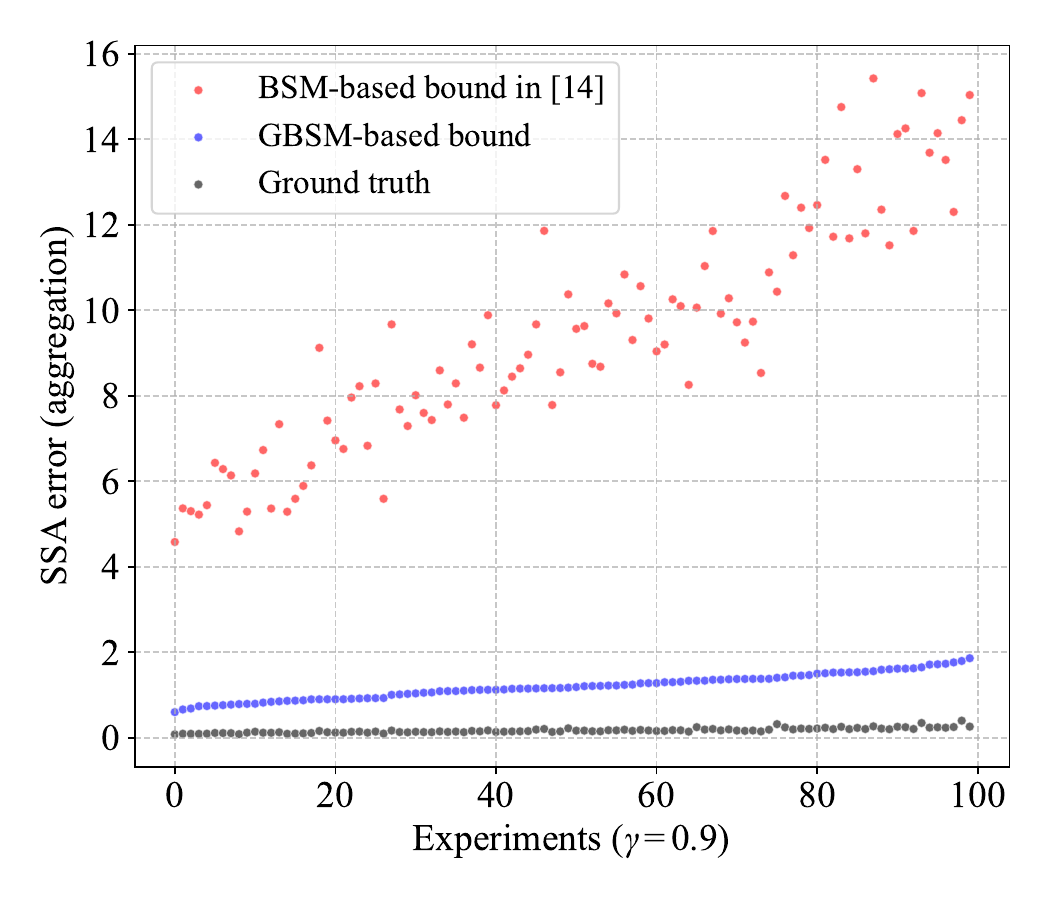}}
\caption{Experiments on random Garnet MDPs (SSA aggregation, $\gamma=0.7$ to $0.9$).}
\end{figure}

\begin{figure}[h]
\centering
\subfloat[$\gamma=0.1$]{\includegraphics[width=0.33\textwidth]{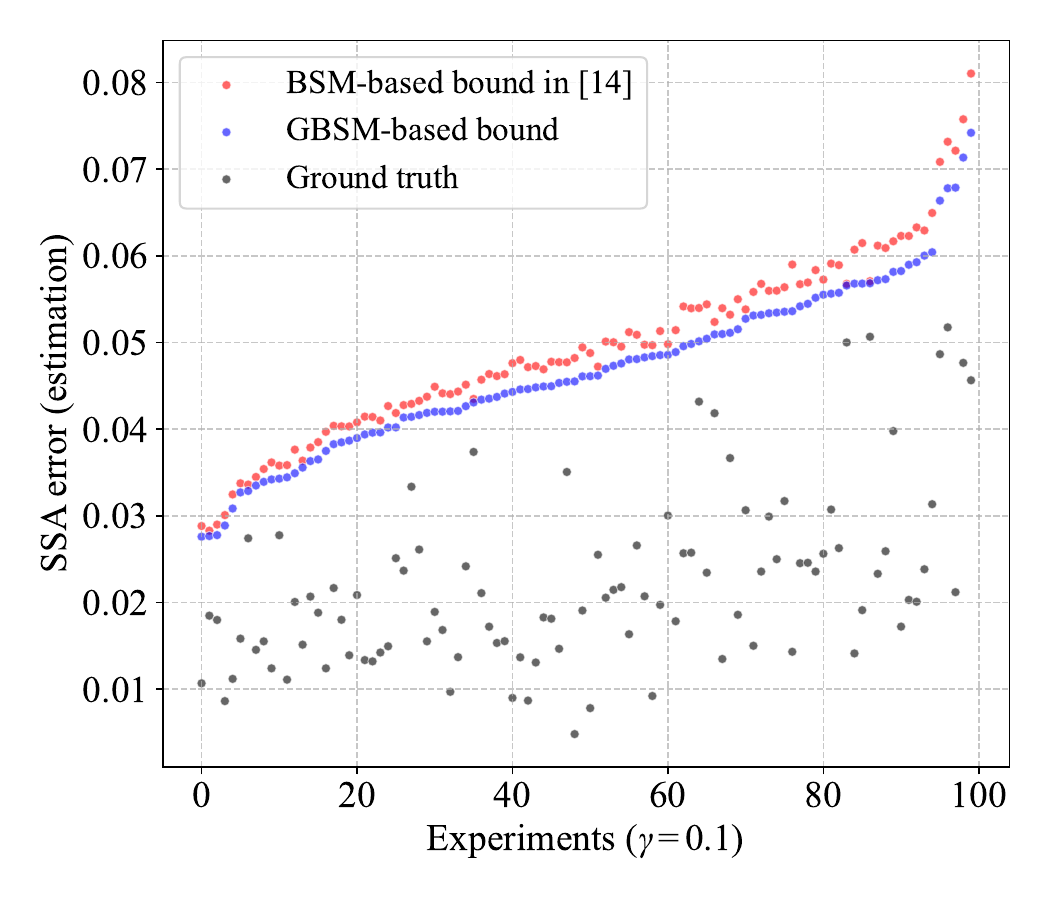}}
\subfloat[$\gamma=0.2$]{\includegraphics[width=0.33\textwidth]{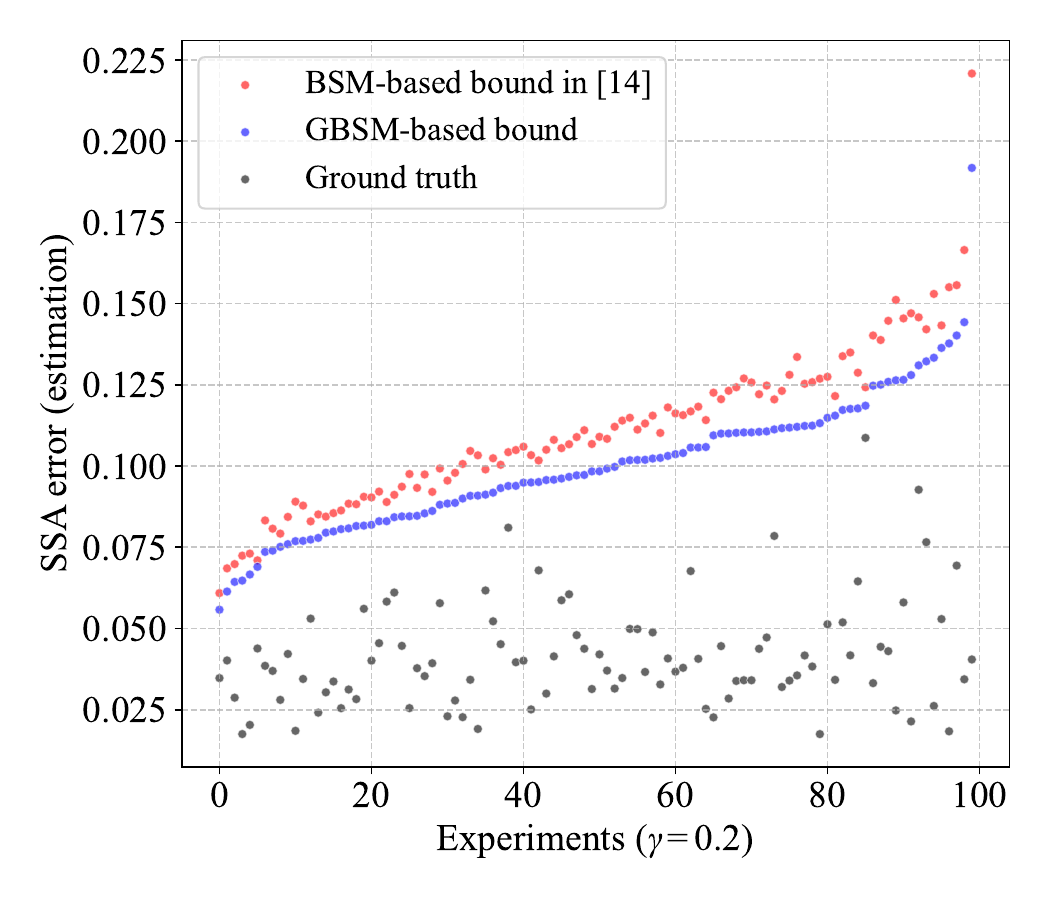}}
\subfloat[$\gamma=0.3$]{\includegraphics[width=0.33\textwidth]{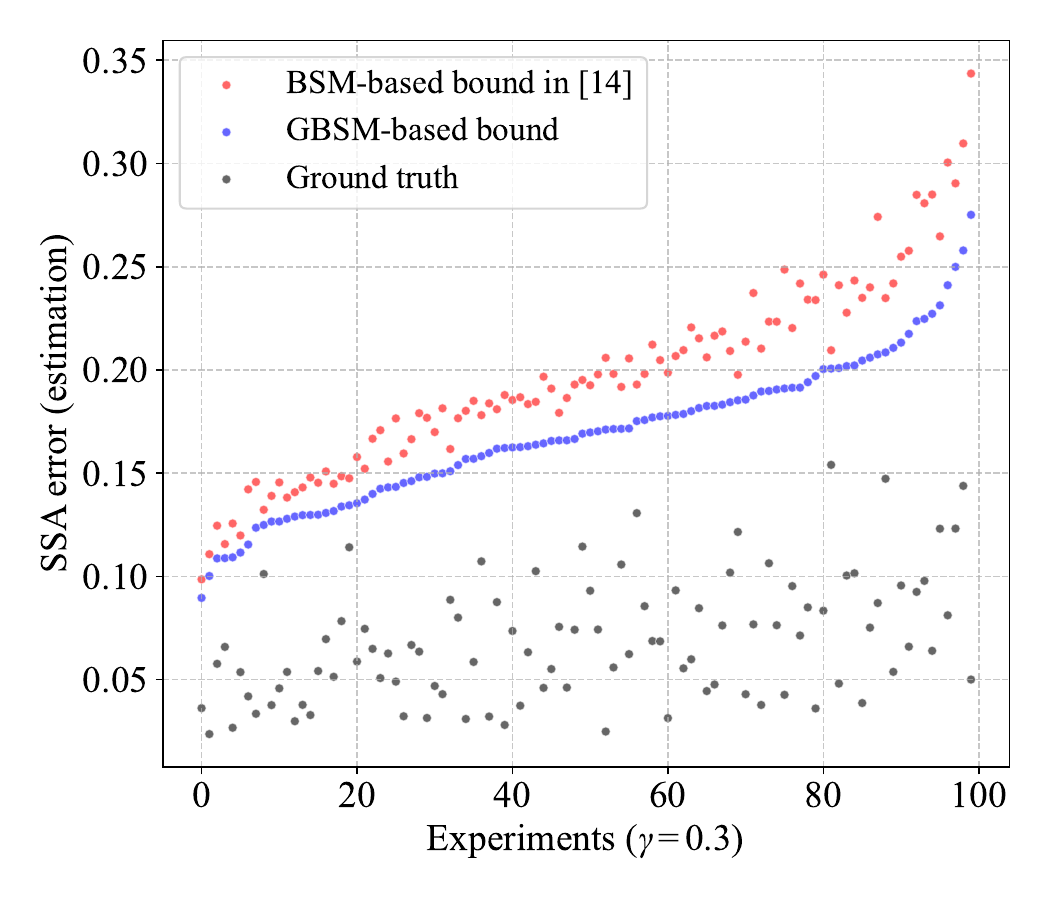}}
\caption{Experiments on random Garnet MDPs (SSA with estimation, $\gamma=0.1$ to $0.3$).}
\end{figure}

\begin{figure}[h]
\centering
\subfloat[$\gamma=0.4$]{\includegraphics[width=0.33\textwidth]{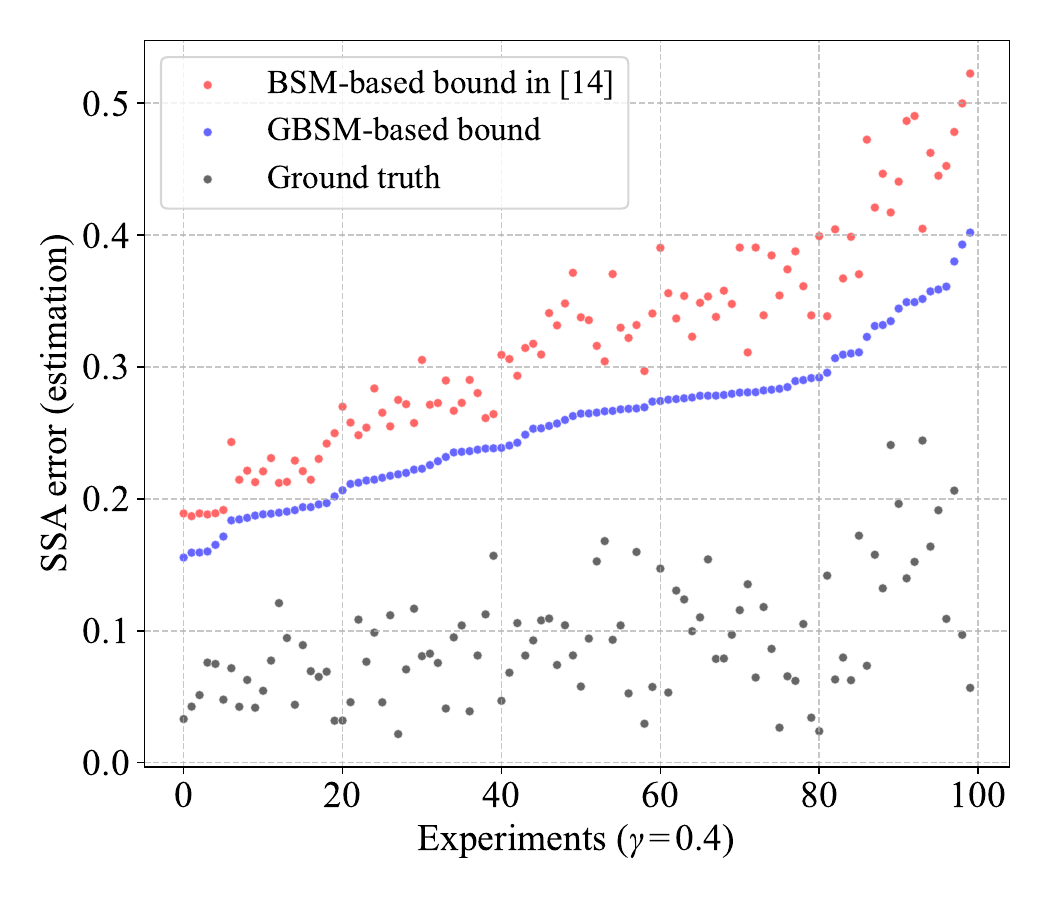}}
\subfloat[$\gamma=0.5$]{\includegraphics[width=0.33\textwidth]{Fig/estimation_Plot_5.pdf}}
\subfloat[$\gamma=0.6$]{\includegraphics[width=0.33\textwidth]{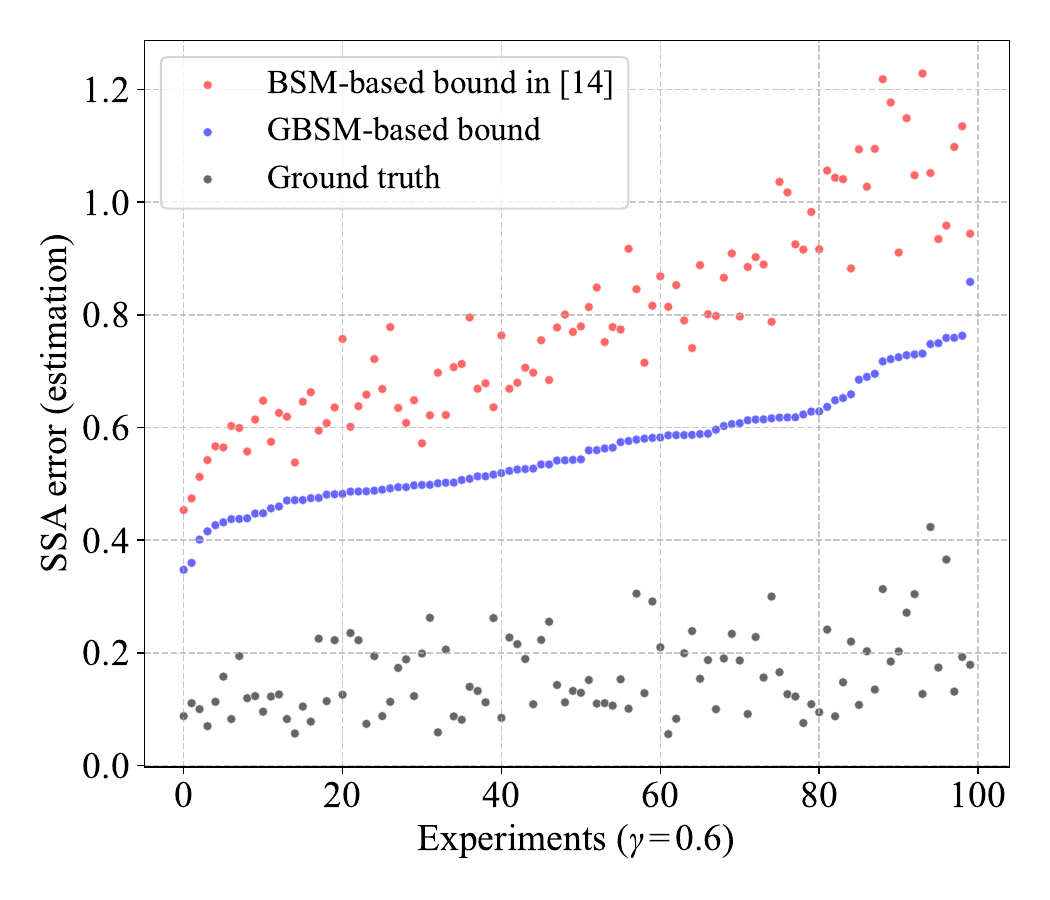}}
\caption{Experiments on random Garnet MDPs (SSA with aggregation, $\gamma=0.4$ to $0.6$).}
\end{figure}

\begin{figure}[h]
\centering
\subfloat[$\gamma=0.7$]{\includegraphics[width=0.33\textwidth]{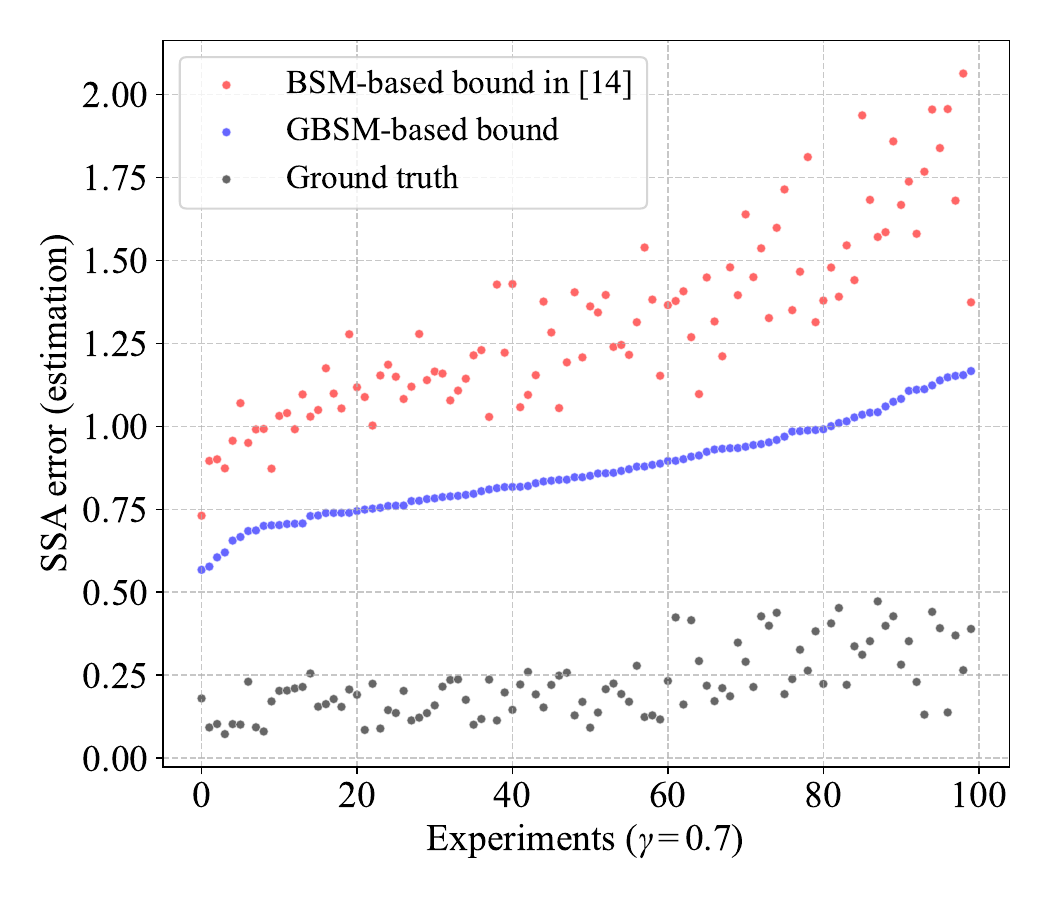}}
\subfloat[$\gamma=0.8$]{\includegraphics[width=0.33\textwidth]{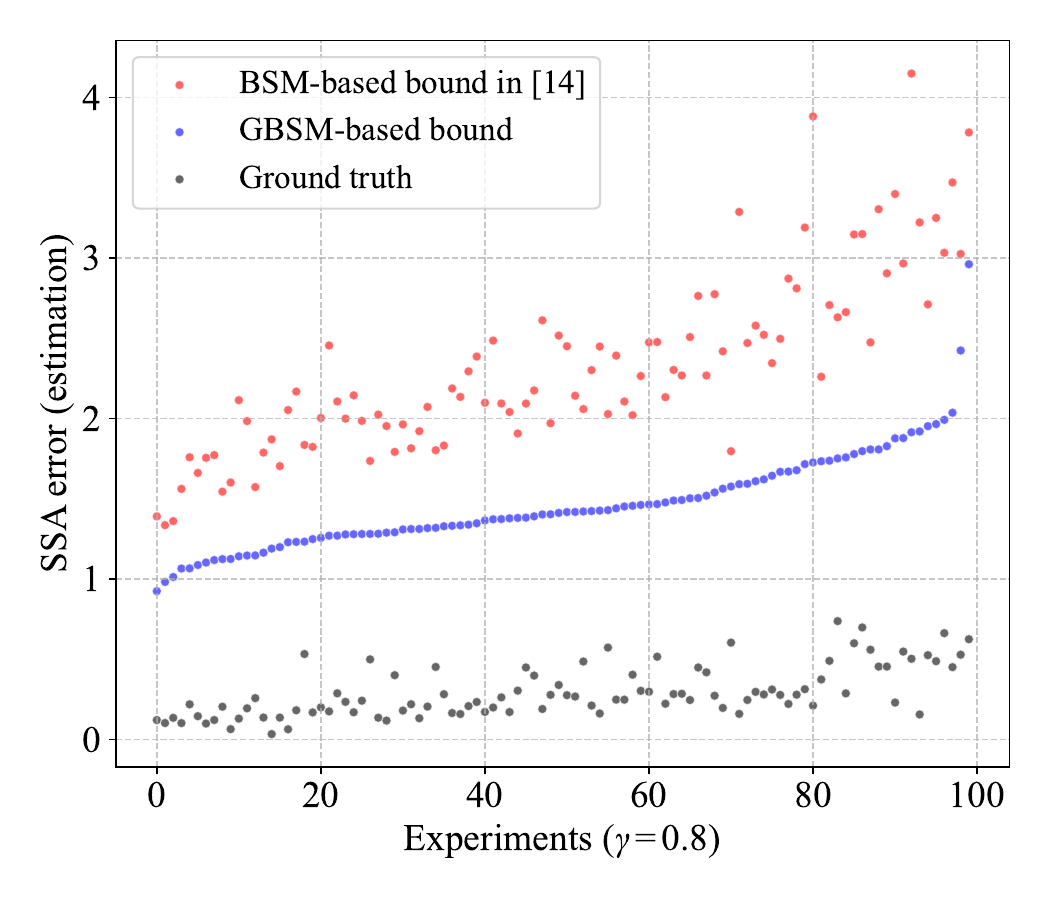}}
\subfloat[$\gamma=0.9$]{\includegraphics[width=0.33\textwidth]{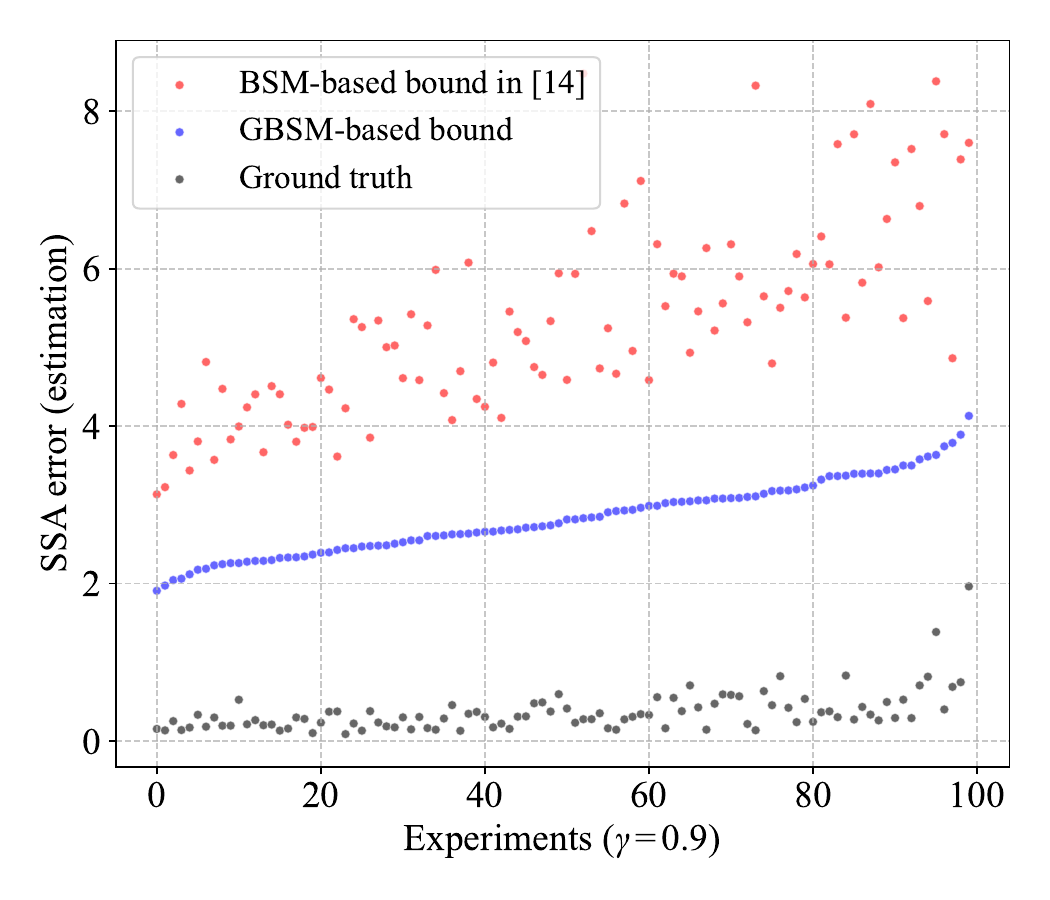}}
\caption{Experiments on random Garnet MDPs (SSA with aggregation, $\gamma=0.7$ to $0.9$).}

\end{figure}

\begin{figure}[h]
\centering
\subfloat[$\gamma=0.1$]{\includegraphics[width=0.33\textwidth]{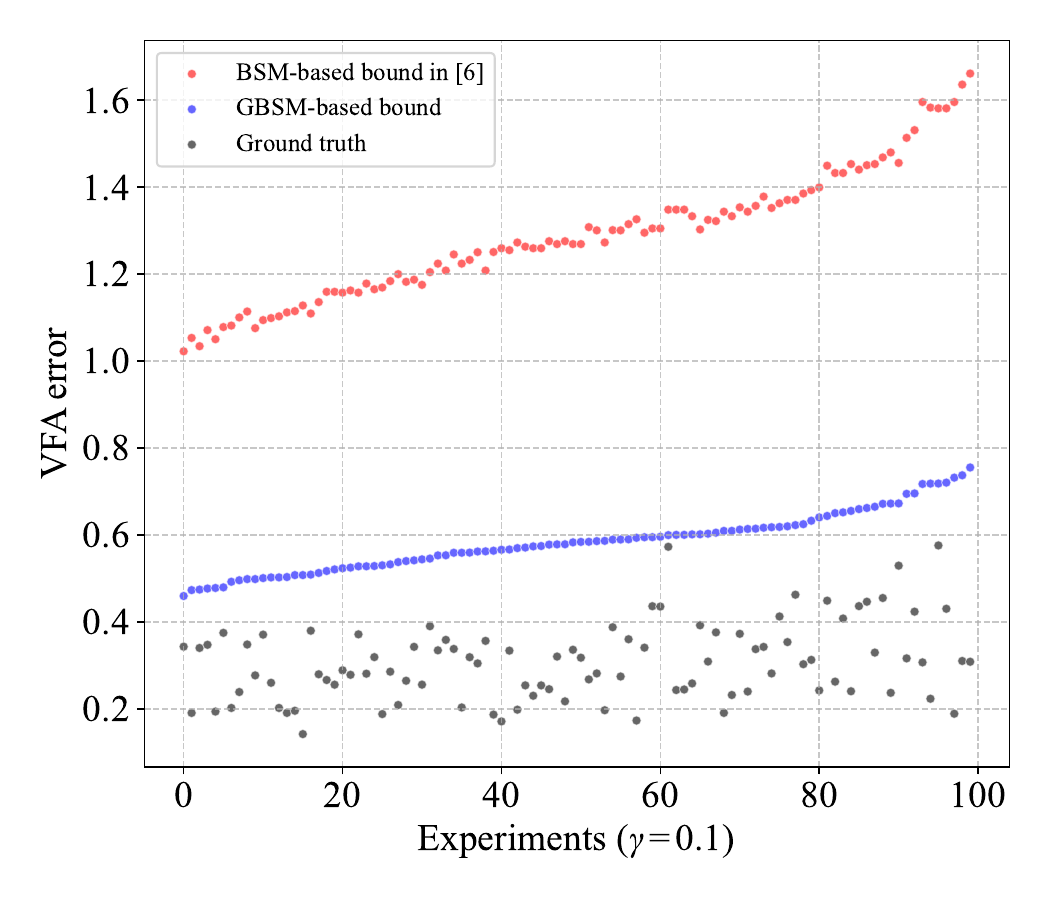}}
\subfloat[$\gamma=0.2$]{\includegraphics[width=0.33\textwidth]{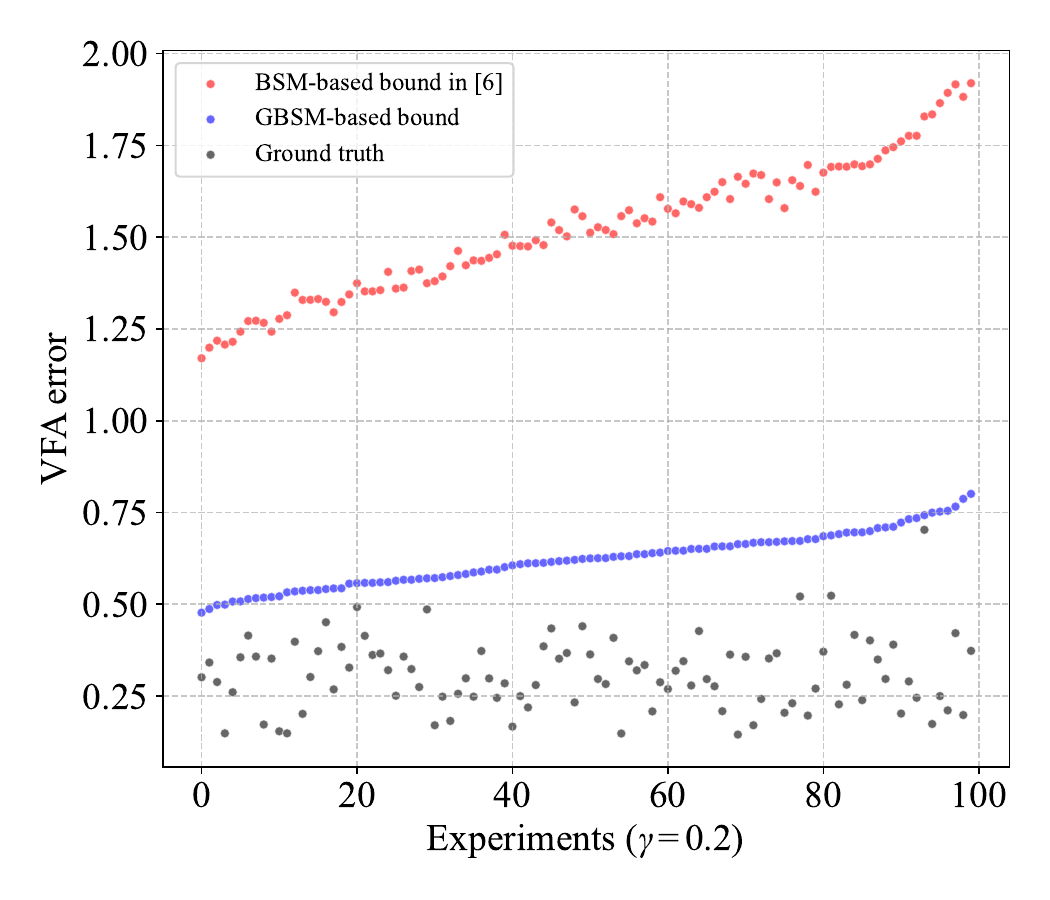}}
\subfloat[$\gamma=0.3$]{\includegraphics[width=0.33\textwidth]{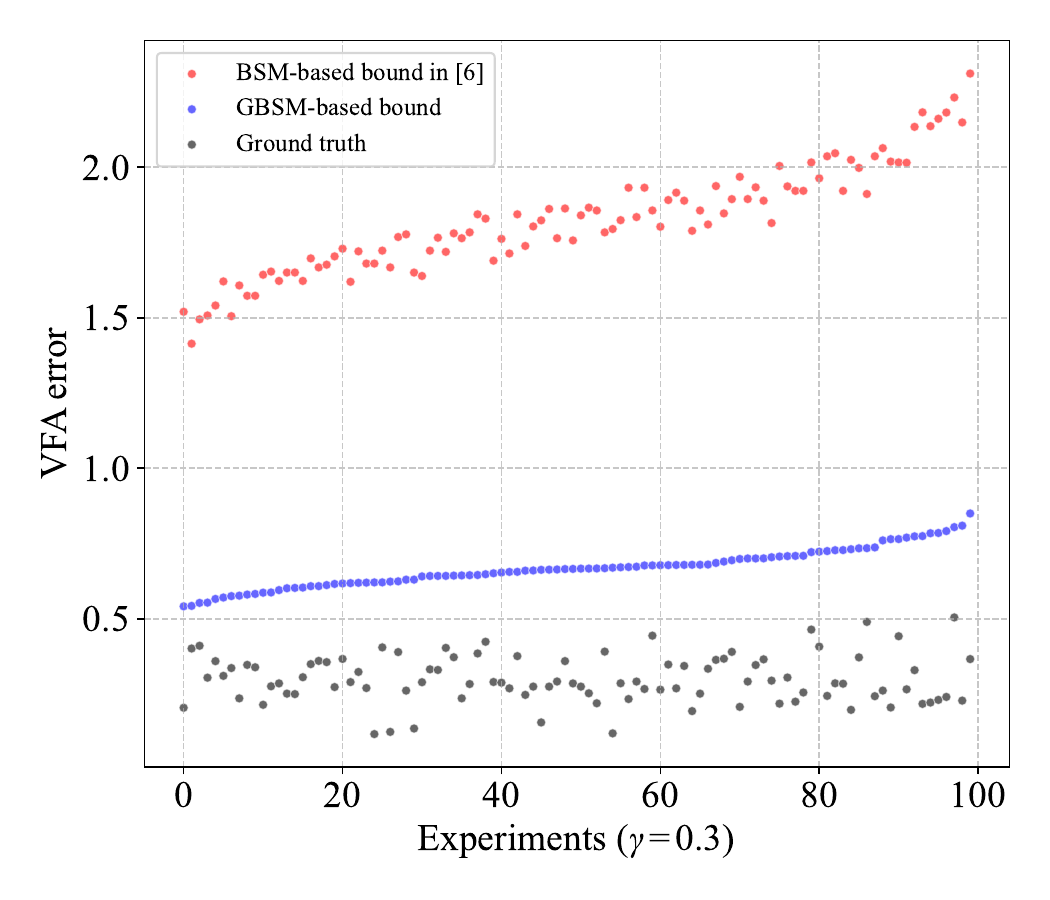}}
\caption{Experiments on random Garnet MDPs (VFA, $\gamma=0.1$ to $0.3$).}
\end{figure}

\begin{figure}[h]
\centering
\subfloat[$\gamma=0.4$]{\includegraphics[width=0.33\textwidth]{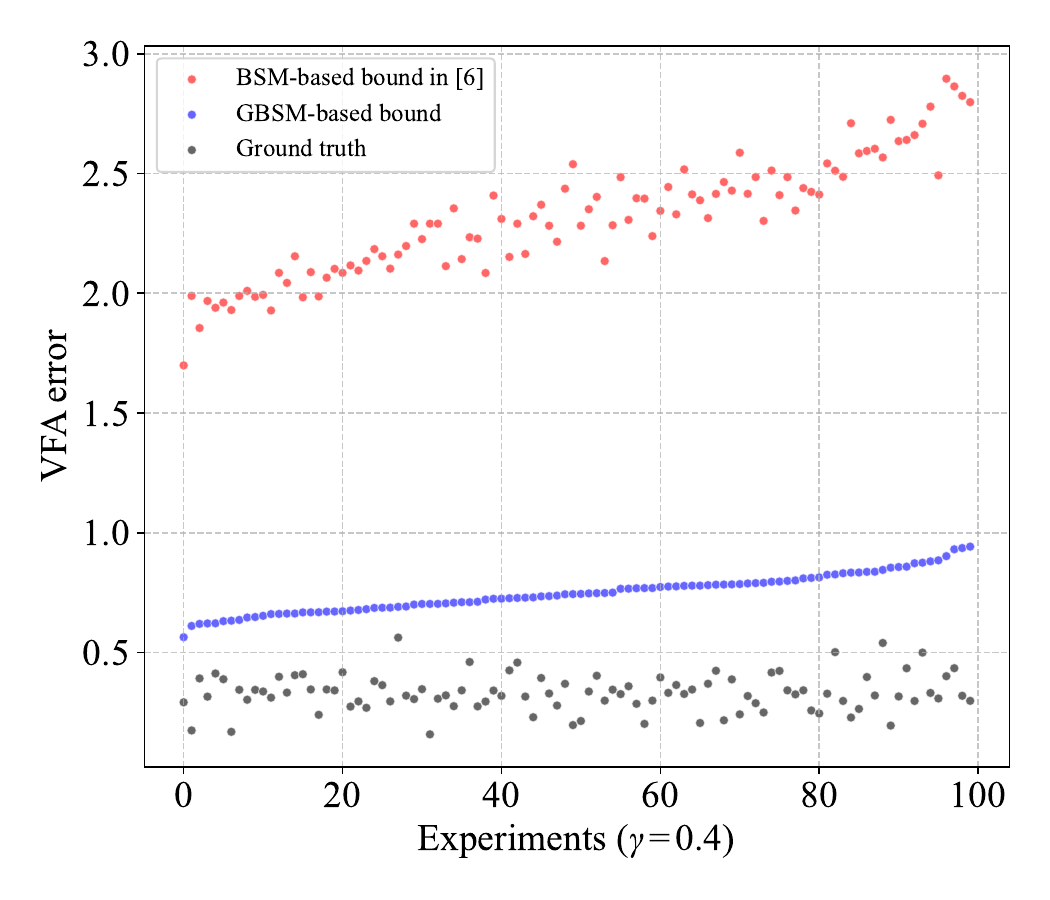}}
\subfloat[$\gamma=0.5$]{\includegraphics[width=0.33\textwidth]{Fig/VFA_5.pdf}}
\subfloat[$\gamma=0.6$]{\includegraphics[width=0.33\textwidth]{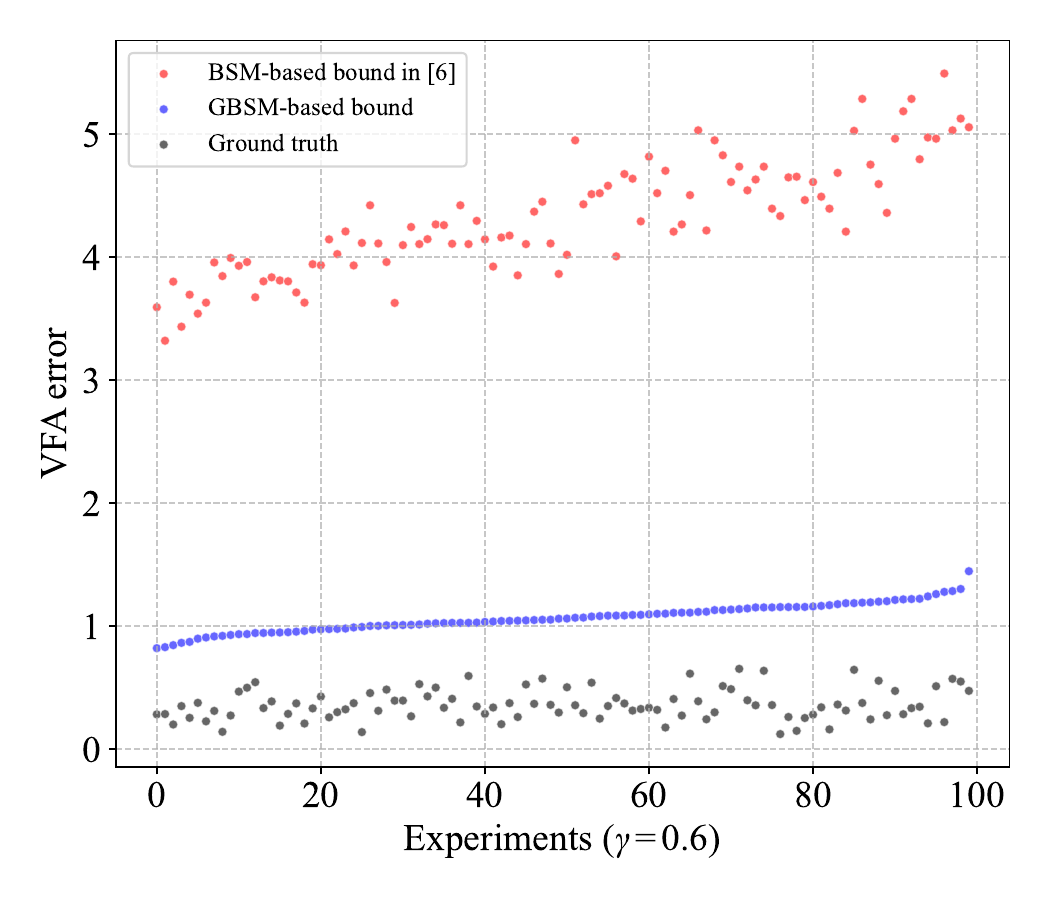}}
\caption{Experiments on random Garnet MDPs (VFA, $\gamma=0.4$ to $0.6$).}
\end{figure}

\begin{figure}[h]
\centering
\subfloat[$\gamma=0.7$]{\includegraphics[width=0.33\textwidth]{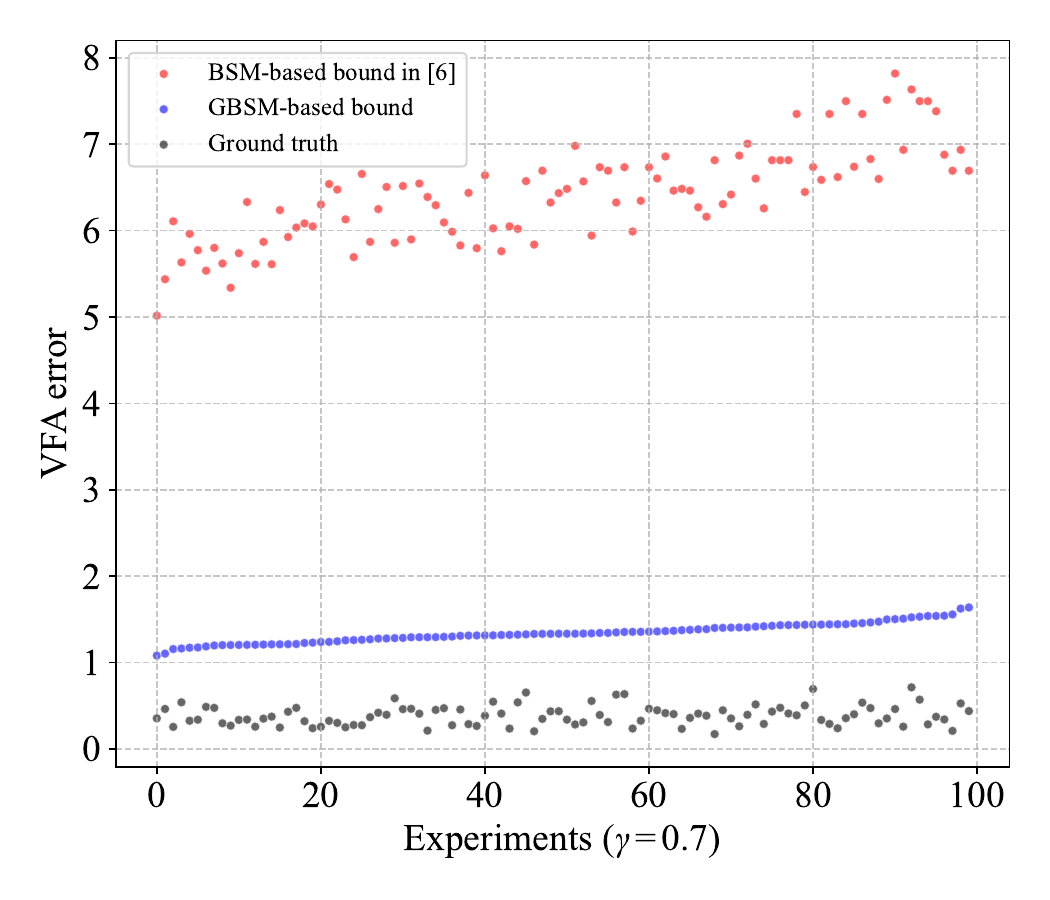}}
\subfloat[$\gamma=0.8$]{\includegraphics[width=0.33\textwidth]{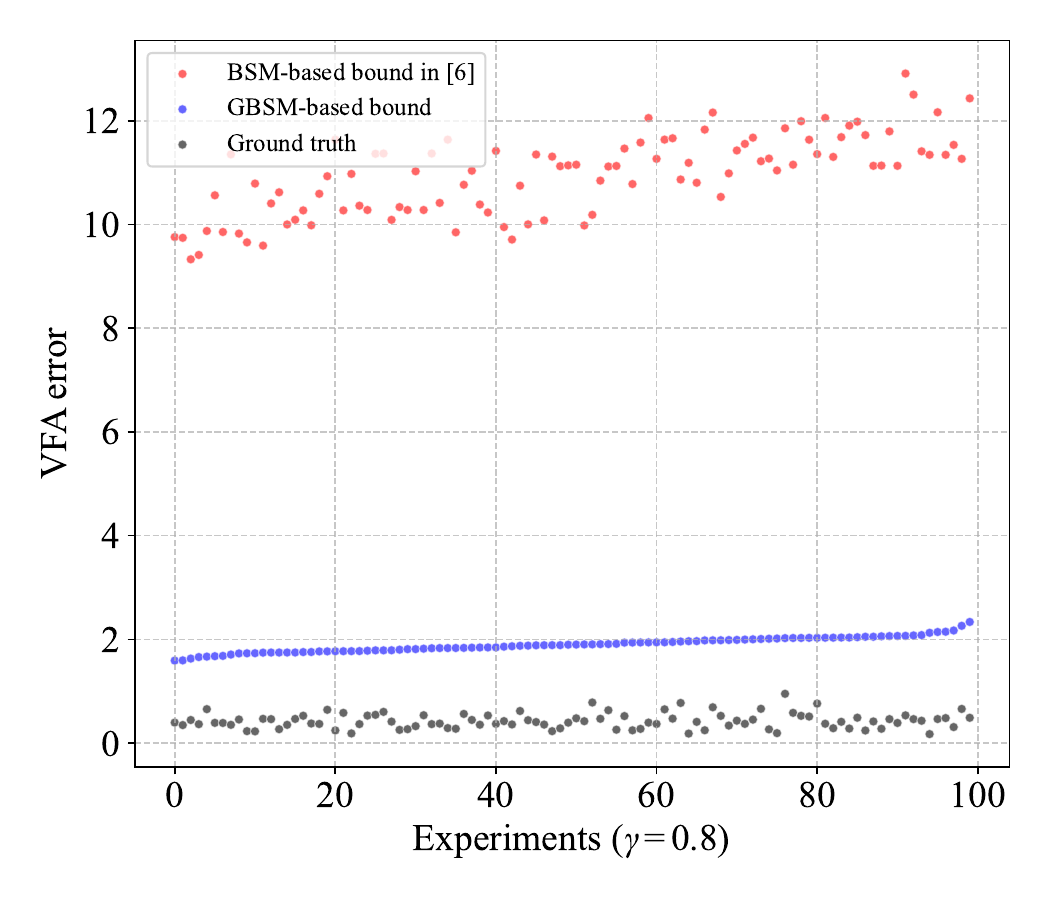}}
\subfloat[$\gamma=0.9$]{\includegraphics[width=0.33\textwidth]{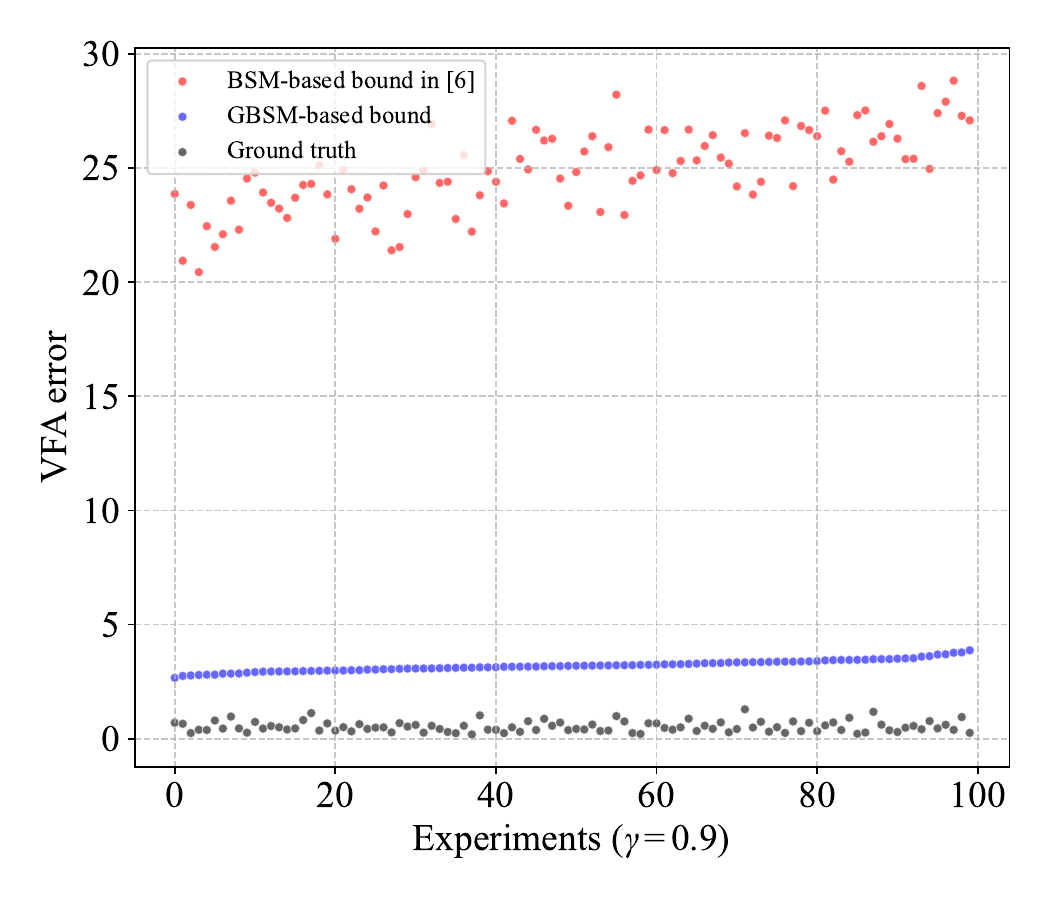}}
\caption{Experiments on random Garnet MDPs (VFA, $\gamma=0.7$ to $0.9$).}

\end{figure}

% \begin{figure}[!h]

% \centering
% \includegraphics[width=1\textwidth]{fig/Aggregation_Plot_1.pdf}
% \caption{Comparison on state aggregation error bounds ($\gamma=0.1,0.2,0.3$).}
% \end{figure}

% \begin{figure}[!h]
% \centering
% \includegraphics[width=1\textwidth]{fig/Aggregation_Plot_2.pdf}
% \caption{Comparison on state aggregation error bounds ($\gamma=0.4,0.5,0.6$).}
% \end{figure}

% \begin{figure}[!h]
% \centering
% \includegraphics[width=1\textwidth]{fig/Aggregation_Plot_3.pdf}
% \caption{Comparison on state aggregation error bounds ($\gamma=0.7,0.8,0.9$).}
% \end{figure}

% \begin{figure}[!h]
% \centering
% \includegraphics[width=1\textwidth]{fig/estimation_Plot_1.pdf}
% \caption{Comparison on sampling-based estimation error bounds ($\gamma=0.1,0.2,0.3$).}
% \end{figure}

% \begin{figure}[!h]
% \centering
% \includegraphics[width=1\textwidth]{fig/estimation_Plot_2.pdf}
% \caption{Comparison on sampling-based estimation error bounds ($\gamma=0.4,0.5,0.6$).}
% \end{figure}

% \begin{figure}[!h]
% \centering
% \includegraphics[width=1\textwidth]{fig/estimation_Plot_3.pdf}
% \caption{Comparison on sampling-based estimation error bounds ($\gamma=0.7,0.8,0.9$).}
% \end{figure}

\end{document}